\documentclass{article}

\usepackage[preprint]{neurips_2026}

\usepackage[utf8]{inputenc} 
\usepackage[T1]{fontenc}    
\usepackage{hyperref}       
\usepackage{url}            
\usepackage{booktabs}       
\usepackage{amsfonts}       
\usepackage{nicefrac}       
\usepackage{microtype}      
\usepackage{xcolor}         
\usepackage{graphicx}  
\usepackage{float}
\usepackage{placeins}
\usepackage{booktabs}
\usepackage{array}
\usepackage{tabularx}
\usepackage{colortbl}
\usepackage{xcolor}
\usepackage{tcolorbox}   
\usepackage{xcolor}    
\usepackage{listings}   
\usepackage{enumitem}
\usepackage{tcolorbox}
\usepackage{pifont}  
\tcbuselibrary{breakable}
\usepackage{listings}
\usepackage{subcaption}
\usepackage{makecell}
\usepackage{multirow}
\usepackage[normalem]{ulem}
 \usepackage{rotating}
\usepackage{bm}

\usepackage{xcolor}
\definecolor{niceRed}{RGB}{190,38,38}
\definecolor{blueGrotto}{HTML}{059DC0}
\definecolor{royalBlue}{HTML}{057DCD}
\definecolor{navyBlueP}{HTML}{0B579C}
\definecolor{limeGreen}{HTML}{81B622}
\definecolor{oliveGreen}{HTML}{026e00}

\definecolor{niceYellow}{HTML}{f5b400}
\definecolor{nicePurple}{HTML}{9c27b0}
\definecolor{lightRoyalBlue}{HTML}{def2ff}  
\definecolor{pastelGreen}{HTML}{4CB944}
\definecolor{gold}{HTML}{ffa300}
\definecolor{lightRoyalBlue}{HTML}{def2ff} 
\usepackage{color-edits} 
\addauthor[Panos]{pp}{gold}
\addauthor[Katerina]{km}{teal}
\addauthor[Kostas]{kt}{royalBlue}
\addauthor[Nikos]{nn}{niceRed}
\addauthor[Efty]{em}{nicePurple}

\usepackage{amsmath,amssymb,amsthm,mathtools}

\newcommand{\R}{\mathbb{R}}
\newcommand{\tr}{\operatorname{tr}}

\newtheorem{theorem}{Theorem}

\newtheorem{lemma}[theorem]{Lemma}

\theoremstyle{definition}
\newtheorem{definition}[theorem]{Definition}

\theoremstyle{remark}

\theoremstyle{assumption}
\newtheorem{assumption}[theorem]{Assumption}

\title{Rank Is Not Capacity: Spectral Occupancy for Latent Graph Models}

\author{%
  \textbf{Nikolaos Nakis}$^{1,}\thanks{Shared first authorship}$ \quad
  \textbf{Panagiotis Promponas}$^{2,*}$ \quad
  \textbf{Konstantinos Tsirkas}$^{3}$ \quad
  \textbf{Katerina Mamali}$^{4}$ \\
  \textbf{Eftychia Makri}$^{2}$ \quad
  \textbf{Leandros Tassiulas}$^{2}$ \quad
  \textbf{Nicholas A. Christakis}$^{1}$ \\[0.5em]
  $^{1}$Human Nature Lab, Yale University \\
  $^{2}$Department of Electrical and Computer Engineering, Yale University \\
  $^{3}$Department of Statistics and Data Science, Yale University \\
  $^{4}$Department of Computer Science, Yale University \\
  New Haven, CT 06511 \\
  \texttt{nicolaos.nakis@gmail.com, panagiotis.promponas@yale.edu} \\
}

\begin{document}

\maketitle

\begin{abstract}

Graph representation learning has become a standard approach for analyzing
networked data, with latent embeddings widely used for link prediction,
community detection, and related tasks. Yet a basic design choice, the latent
dimension, is still treated as a brittle hyperparameter, fixed before training
and tuned by held-out performance. Learned factors are also identifiable only up
to rotation and rescaling, so the nominal rank rarely coincides with the
quantity that governs model behavior. We propose Spectral Prefix Extraction and
Capacity-Targeted Representation Analysis (\textsc{Spectra}), which replaces
rank as the unit of analysis with the spectrum of a learned positive
semidefinite kernel, trace-normalized so that spectra are comparable across
fits. The normalized eigenvalues form a distribution on the simplex, and their
Shannon effective rank acts both as a summary of learned capacity and as a
controllable training-time coordinate: a single scalar shapes this realized
dimension during training, and bisection targets any desired value within the
rank cap. 
To theoretically support that, we show local regularity and monotonicity of the realized-dimension profile. 
Across collaboration,
social, biological, and infrastructure networks, \textsc{Spectra} traces
performance--capacity frontiers that make the trade-off between predictive
accuracy and realized dimension visible. It performs competitively with strong
link-prediction baselines, yields aligned lower-capacity views of the same
fitted model through spectral prefixes, and provides a principled handle on
capacity in the overparameterized regime. Capacity thus becomes a property of
the fitted model rather than a hyperparameter of the training.
\end{abstract}
\section{Introduction}
A central design choice in graph representation learning is the
dimension of the latent space. Despite its influence on model behavior,
this quantity is usually treated as a fixed hyperparameter, selected
before training and tuned by held-out performance. Modern graph
representation methods, including random-walk embeddings
\citep{grover2016node2vec,role2vec}, matrix-factorization approaches
\citep{netmf,grarep}, mixed-membership models
\citep{airoldi2007mixedmembershipstochasticblockmodels,MNMF}, latent
space and inner-product models
\citep{hoff2002latent,hoff2007modeling,athreya2018statistical,
rubin2022statistical,sussman2013consistent}, and simplex-volume
extensions such as \textsc{HM-LDM}
\citep{nakis2022hmldmhybridmembershiplatentdistance,pmlr-v206-nakis23a},
all rely, explicitly or implicitly, on such a choice. Across these
families, a basic question persists: how many latent dimensions do
the data actually require?

Existing selection procedures, including profile-likelihood elbows
\citep{zhu2006automatic}, Bayesian latent-dimension selection
\citep{passino2020bayesian}, network cross-validation
\citep{chen2018network,li2020network}, structural criteria
\citep{gu2021principled}, input-graph entropy methods
\citep{luo2021graph}, and spectral parallel analysis such as
\textsc{NetFlipPA} \citep{hong2025network}, largely select a single
dimension before or after fitting. They do not expose
\emph{representational capacity}, i.e., how many latent modes the fitted
model effectively uses, as a quantity that can be measured,
controlled, or audited in the fitted model itself. Also, learned
latent factors are identifiable only up to rotation and global
rescaling
\citep{hoff2002latent,airoldi2007mixedmembershipstochasticblockmodels,
athreya2018statistical}, and dimension misspecification has measurable
statistical costs \citep{taing2026effect}. The operative quantity is
therefore not only the declared rank, but how representational mass is
distributed across latent modes.

We propose the Spectral Prefix Extraction and Capacity-Targeted Representation Analysis (\textsc{Spectra}), to replace rank as the unit of analysis with the spectrum of a learned positive semidefinite (PSD) kernel. The kernel is
invariant under the gauge symmetries of the factorization, trace
normalization fixes total spectral mass, and the normalized
eigenvalues form a probability distribution on latent modes, which we
call the \emph{spectral occupancy distribution}. We summarize this
distribution by the \emph{effective spectral dimension}
$d_{\mathrm{spec}}$, the exponential of its Shannon entropy
\citep{roy2007effective,friedman2022vendi}. Spectral entropy has been
used as a diagnostic in language models
\citep{wei2024diff,jha2025spectral}, training dynamics
\citep{yang2024effective}, and adaptive-rank compression
\citep{cherukuri2025low}. Here, it becomes a controllable,
end-to-end training-time capacity coordinate for latent graph models.

This positioning connects to two views of overparameterized models:
effective complexity can govern generalization even when parameter
counts are large
\citep{belkin2019reconciling,bartlett2020benign}, and gradient
descent on factorized matrix models is implicitly biased toward
low-rank solutions
\citep{gunasekar2017implicit,arora2019implicit}. Our approach,
makes this bias explicit and parameterizable: an entropy weight
$\eta$ shapes the learned spectrum, bisection targets a desired
$d_{\mathrm{spec}}$, and the rank cap becomes a parameterization
ceiling. Trace normalization separates scale from rank, avoiding the
scale coupling of nuclear-norm penalties \citep{srebro2004maximum},
while spectral prefixes provide  optimal low-rank PSD summaries of the fitted kernel \citep{Eckart1936TheAO}.

Our contributions are: 
(i) \textbf{Spectral occupancy as realized capacity.}
We replace nominal rank with the trace-normalized spectrum of the learned PSD
kernel, and use its Shannon effective rank, $d_{\mathrm{spec}}$, as a smooth
measure of realized latent dimension. 
(ii) \textbf{Training-time capacity control.}
We introduce an entropy-regularized objective in which a scalar $\eta$ shapes
the realized spectral dimension, and use bisection to target a prescribed
$d_{\mathrm{spec}}$ within tolerance. 
(iii) \textbf{Spectral identifiability and summaries.}
We verify the identifiability of active spectral modes under a simple-spectrum
condition in our framework, we establish local $\mathcal{C}^1$ behavior of the realized-dimension profile under regularity conditions, and we use the optimality of spectral prefixes as
low-rank PSD summaries. 
(iv) \textbf{Empirical capacity frontiers.}
Across eight benchmark networks and three rank caps, AUC aligns by achieved
$d_{\mathrm{spec}}$ where rank caps overlap. The resulting frontiers distinguish
saturated datasets, whose best operating points lie below the cap, from
rank-cap-binding datasets, whose best operating points increase with $r$. 
(v) \textbf{Single-fit prefix families.}
A fitted kernel yields a nested family of aligned spectral prefixes, providing
lower-capacity views of the same representation without retraining. 
(vi) \textbf{Spectrally controlled overparameterization.}
At matched effective capacity, $\eta$-targeted overparameterized fits improve
test log-likelihood and train--test gap over rank-cap-only baselines in paired
experiments.

\section{Related Work}

Latent-distance and inner-product models~\citep{hoff2002latent,hoff2007modeling}
alongside random dot product graph (RDPG) frameworks~\citep{athreya2018statistical,sussman2013consistent}
treat spectra of latent positions or adjacency matrices as canonical inferential
objects, often with consistency guarantees, but do not provide explicit
training-time capacity control. Gradient-based RDPG inference
\citep{fiori2023gradient} similarly infers realized rank from the optimizer
rather than controlling it, while simplex-constrained and latent-distance
extensions~\citep{nakis2022hmldmhybridmembershiplatentdistance,
nakis2023hierarchicalblockdistancemodel} add geometric interpretability but
retain a fixed nominal dimension. \textsc{Spectra} works in this lineage, but
uses an explicit kernel parameterization to expose capacity as a continuous
training-time coordinate. Profile-likelihood elbow rules~\citep{zhu2006automatic}, Bayesian
latent-dimension selection~\citep{passino2020bayesian}, network
cross-validation~\citep{chen2018network,li2020network}, structural information
criteria~\citep{gu2021principled,luo2021graph}, spectral parallel analysis
\citep{hong2025network}, and logarithmic search under metric latent-distance
models~\citep{nakis2025how} select a dimension from spectral, posterior, or
held-out diagnostics in a one-shot procedure. The operative coordinate is fixed
before or after training. \textsc{Spectra} instead shapes realized capacity
during training and measures it on the learned kernel spectrum. Factorized PSD models exhibit gradient-descent bias toward low rank
\citep{gunasekar2017implicit,arora2019implicit}, within a broader literature in
which effective complexity governs overparameterized generalization
\citep{belkin2019reconciling,bartlett2020benign}. Nuclear-norm
relaxations~\citep{candes2012exact,recht2010guaranteed,srebro2004maximum}
provide explicit but scale-coupled regularization, and stable-rank
normalization~\citep{sanyal2020stable} controls a Frobenius-to-spectral ratio.
By trace-normalizing the learned PSD kernel, \textsc{Spectra} separates scale
from rank, exposes capacity on the simplex of normalized eigenvalues, and yields
Eckart--Young--Mirsky-optimal low-rank PSD summaries
\citep{Eckart1936TheAO}. Shannon effective rank has appeared as a signal-processing spectral-entropy
measure~\citep{roy2007effective} and as the Vendi score for diversity
\citep{friedman2022vendi}; spectral entropy also diagnoses capacity in language
models~\citep{wei2024diff,jha2025spectral}, characterizes neural-network
training dynamics~\citep{yang2024effective}, and guides adaptive-rank
compression~\citep{cherukuri2025low}. Related alternatives include the
participation ratio, intrinsic dimension~\citep{li2018measuring}, stable
rank~\citep{ipsen2025stable}, and entropy functionals in determinantal point
processes~\citep{kulesza2012determinantal}. In these settings the spectrum is
typically read post hoc; \textsc{Spectra} uses Shannon effective rank as a
controllable training-time coordinate.
\section{Proposed Method}
\label{sec:method}

\begin{figure}
    \centering
    \includegraphics[scale=0.2]{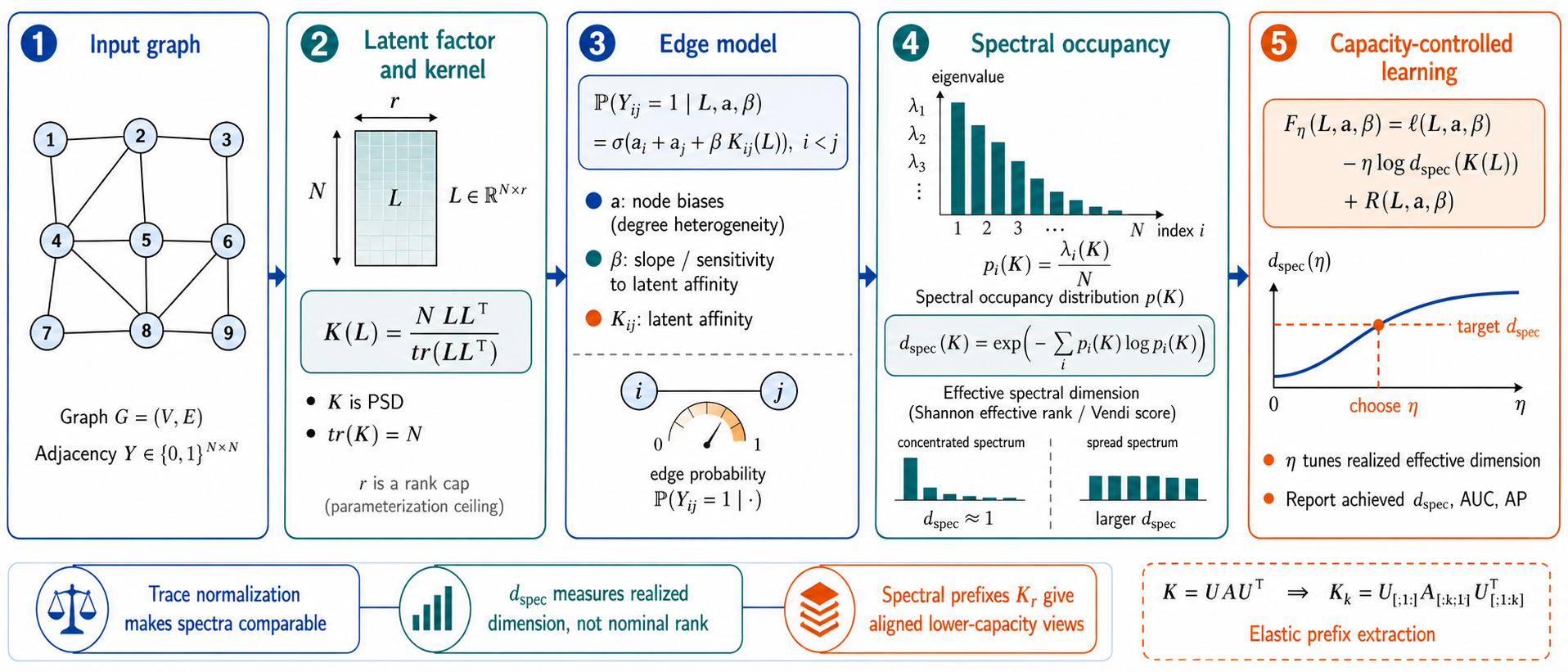}
\caption{Overview of \textsc{Spectra}. A rank-capped factor $L$ induces a
trace-normalized PSD kernel $K(L)$ used in the edge model. The normalized
spectrum of $K(L)$ defines spectral occupancy.}
    \label{fig:overview}
\end{figure}

\textbf{Preliminaries.}
Let $\mathcal{G}=(V,E)$ be a simple undirected graph with $N=|V|$ nodes and
adjacency matrix $\bm{Y}\in\{0,1\}^{N\times N}$, where
$\bm{Y}_{ij}=\bm{Y}_{ji}$ and $\bm{Y}_{ii}=0$. Our goal is to learn a latent
graph representation whose realized dimension can be measured from the fitted
model, rather than fixed in advance by a chosen factorization rank. We represent latent affinities between nodes by a positive semidefinite (PSD) kernel
$K\in\mathbb{R}^{N\times N}$. To compare the spectra of learned kernels across
runs, rank caps, and regularization strengths, we fix the total spectral mass:
\begin{equation}
    K \succeq 0,
    \qquad
    \operatorname{tr}(K)=N .
\end{equation}
This trace normalization removes arbitrary global rescaling of the kernel and
places all learned spectra on the same scale. As a result, differences in the
eigenvalue distribution of $K$ reflect how the representation allocates its
fixed spectral budget across latent modes, rather than differences in overall
magnitude. We parameterize the kernel by a rank-capped factor
$L\in\mathbb{R}^{N\times r}$:
\begin{equation}
\label{eq:trace-normalized-kernel}
    K(L)
    =
    N
    \frac{LL^\top}{\operatorname{tr}(LL^\top)},
    \qquad L\neq 0 .
\end{equation}
The integer $r$ is therefore a parameterization ceiling: it limits the maximum
rank available to the optimizer, but it does not define the dimension actually
used by the learned representation. The realized dimension will be measured from
the normalized spectrum of $K(L)$.

\textbf{Spectral Prefix Extraction and Capacity-Targeted Representation Analysis (\textsc{Spectra}).}
Given the trace-normalized kernel $K(L)$, node-specific offsets
$\bm{a}\in\mathbb{R}^N$, and a slope parameter $\beta>0$, we model each
undirected edge as
\begin{equation}
    \mathbb{P}(\bm{Y}_{ij}=1\mid L,\bm{a},\beta)
    =
    \phi\!\left(a_i+a_j+\beta K_{ij}(L)\right),
    \qquad i<j,
\end{equation}
where $\phi(t)=(1+\exp(-t))^{-1}$. The offsets $\bm{a}$ capture degree
heterogeneity: nodes with larger offsets have higher baseline propensity to form
edges. The kernel entry $K_{ij}(L)$ captures latent affinity between nodes $i$
and $j$. The slope $\beta$ controls the strength of this latent-affinity signal
on the log-odds scale: when $\beta$ is small, edge probabilities are driven
mostly by the node offsets, whereas larger $\beta$ makes differences in
$K_{ij}(L)$ more influential. Because the kernel trace is fixed, $\beta$ cannot
be absorbed into an arbitrary rescaling of $K$ and it has a well-defined role as the
global sensitivity to latent affinity.

\textbf{Spectral occupancy and effective spectral dimension.}
Let $\lambda_1(K)\geq \lambda_2(K)\geq\cdots\geq \lambda_N(K)\geq 0 $
be the eigenvalues of the trace-normalized kernel $K$. Since
$\operatorname{tr}(K)=N$, the normalized spectrum
\begin{equation}
\label{eq:spectral-occupancy}
    p_i(K)=\frac{\lambda_i(K)}{N}
\end{equation}
defines a probability distribution over spectral modes. We call
$p(K)=(p_1(K),\ldots,p_N(K))$ the \emph{spectral occupancy distribution}: it
records how the fixed spectral budget of $K$ is allocated across latent
directions.

\begin{definition}[Effective spectral dimension]
\label{def:effective-spectral-dimension}
The \emph{effective spectral dimension} of $K$ is
\begin{equation}
\label{eq:effective-spectral-dimension}
    d_{\mathrm{spec}}(K)
    :=
    \exp\!\left(
        -\sum_{i:p_i(K)>0}p_i(K)\log p_i(K)
    \right).
\end{equation}
Equivalently, $d_{\mathrm{spec}}(K)=\exp(H(p(K)))$, where $H$ is the Shannon
entropy. This quantity is the Shannon effective rank of the trace-normalized
kernel.
\end{definition}

The value $d_{\mathrm{spec}}(K)$ can be interpreted as the number of
equal-mass spectral modes that would have the same Shannon entropy as the
observed occupancy distribution $p(K)$. It equals one when the entire trace
budget is concentrated in a single mode. It equals $s$ when the trace budget is
spread uniformly over $s$ nonzero modes, in which case
$p_1(K)=\cdots=p_s(K)=1/s$ and $p_i(K)=0$ for $i>s$. Between these extremes,
$d_{\mathrm{spec}}(K)$ varies continuously with the spectrum and provides a
smooth effective-dimension coordinate for the learned representation. In
contrast, the hard rank
$\operatorname{rank}(K)=|\{i:p_i(K)>0\}|$ only counts the support size of the
spectrum and is sensitive to modes with arbitrarily small spectral mass.

\textbf{Capacity-control objective.}
The effective spectral dimension $d_{\mathrm{spec}}(K)$ is bounded above by the
hard rank of $K$, with equality attained when the spectral mass is
uniformly distributed. This maximum-entropy reference case motivates controlling
realized dimension through the entropy of the spectral occupancy distribution.
Higher entropy corresponds to a more diffuse allocation of the trace budget
across latent modes, while lower entropy corresponds to a more concentrated
allocation.

Let $\ell(L,\bm{a},\beta)$ denote the sampled link-prediction loss induced by
the latent-kernel edge model, and let $R(L,\bm{a},\beta)$ denote quadratic
regularization. Since
$ \log d_{\mathrm{spec}}(K(L)) = H(p(K(L))),$
we estimate the model by minimizing the entropy-regularized objective
\begin{equation}
\label{eq:map-objective}
    F_\eta(L,\bm{a},\beta)
    =
    \ell(L,\bm{a},\beta)
    - \eta \log d_{\mathrm{spec}}(K(L))
    + R(L,\bm{a},\beta).
\end{equation}
The scalar $\eta$ serves as a Lagrange-style control parameter governing the
tradeoff between predictive fit and spectral spread. Larger positive values of
$\eta$ place greater reward on spectral entropy and favor representations with
larger effective spectral dimension while smaller or negative values favor more
concentrated spectra and lower effective spectral dimension.

For each value of $\eta$, optimization yields a fitted factor
$L^\star(\eta)$ and a corresponding trace-normalized kernel $K^\star(\eta)=K(L^\star(\eta)).$
This induces the realized-dimension profile
\begin{equation}
\label{eq:capacity-profile}
    d_{\mathrm{spec}}(\eta)
    =
    d_{\mathrm{spec}}(K^\star(\eta)).
\end{equation}
We report fitted models in the interpretable coordinate
$d_{\mathrm{spec}}(K^\star(\eta))$, which gives the effective number of spectral
modes realized by the learned representation.

\textbf{Interpretation.}
Two kernels with the same hard rank can have very different
$d_{\mathrm{spec}}$, and conversely. The quantity summarizes the
shape of the occupancy distribution rather than only its support
size. Among spectral summaries, the participation ratio
$\mathrm{PR}(K) = 1/\sum_i p_i(K)^2$ is the exponential of the
second-order R\'enyi entropy and emphasizes high-mass modes through
a second-order moment, while thresholded ranks
$k_\tau(K) = |\{i : \lambda_i(K) \geq \tau \lambda_1(K)\}|$ depend
on a discontinuous cutoff. The Shannon effective rank in
$d_{\mathrm{spec}}$ is the unique choice that is continuous in the
occupancy distribution, mass-proportional, and recovers the hard
rank in the equimass limit, making it a smooth coordinate for
controlling and reporting realized dimension.

\textbf{Identifiability and optimality of spectral summaries.} We record two properties of the spectrum of the trace-normalized kernel. The first
shows that spectral prefixes are optimal low-rank summaries of a learned kernel and follows from Eckart-Young-Mirsky \cite{Eckart1936TheAO}.
The second shows that, under a simple  spectrum, the spectral modes are
identified up to sign and permutation.
We present the proofs in Appendices \ref{ap:thm:optimal-spectral-prefix} and \ref{ap:thm:identifiability-spectral-modes} for completeness.

\begin{theorem}[Optimal spectral prefix]\label{thm:optimal-spectral-prefix}
Let \(K \succeq 0\) be a symmetric matrix with rank $r$ and eigenvalues $\lambda_1\geq \lambda_2\geq \cdots \geq \lambda_r>0$ and eigenvectors $u_1,\ldots, u_r$ such that $K=\sum_{j=1}^r \lambda_j u_j u_j^\top$.
For any positive integer \(k\leq r\), define $K_k:=\sum_{j=1}^k \lambda_j u_j u_j^\top$.
Then \(K_k\) is a best rank-\(k\) positive semidefinite approximation of \(K\) in Frobenius norm, i.e., 
$
    K_k\in
    \arg\min_{\substack{M\succeq 0, \operatorname{rank}(M)\leq k}}
    \|K-M\|_F^2.
$
Moreover, if \(\lambda_k>\lambda_{k+1}\),  the optimizer is unique. 
\end{theorem}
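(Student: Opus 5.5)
The plan is to reduce the statement to the classical Eckart--Young--Mirsky theorem by \emph{relaxing} the PSD constraint. For a symmetric $K\succeq 0$, the spectral decomposition $K=\sum_{j=1}^r\lambda_j u_ju_j^\top$ with $\lambda_j>0$ is already a singular value decomposition, with singular values $\sigma_j=\lambda_j$ and left and right singular vectors both equal to $u_j$. Eckart--Young--Mirsky \cite{Eckart1936TheAO} then gives that the truncated SVD $\sum_{j\le k}\sigma_j u_ju_j^\top=K_k$ minimizes $\|K-M\|_F^2$ over \emph{all} real $N\times N$ matrices with $\operatorname{rank}(M)\le k$, with optimal value $\sum_{j>k}\lambda_j^2$. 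Since $K_k$ has nonnegative eigenvalues $\lambda_1,\dots,\lambda_k$ and rank $k$, it is feasible for the smaller PSD-constrained problem. A minimizer over a larger set that lies in the smaller set is also a minimizer over the smaller set, which proves the first claim.

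For uniqueness I would not rely on a black-box uniqueness clause. Instead I would give a direct argument valid for any competitor $M\succeq 0$ of rank $\le k$ with eigenvalues $\mu_1\ge\cdots\ge\mu_k\ge 0$ and $\mu_i=0$ for $i>k$. Expand
\[
\|K-M\|_F^2=\sum_i\lambda_i^2-2\operatorname{tr}(KM)+\sum_i\mu_i^2 .
\]
Von Neumann's trace inequality gives $\operatorname{tr}(KM)\le\sum_i\lambda_i\mu_i$, hence
\[
\|K-M\|_F^2\ge\sum_{i\le k}(\lambda_i-\mu_i)^2+\sum_{i>k}\lambda_i^2 \ge \sum_{i>k}\lambda_i^2 .
\]
Any minimizer must attain equality in both inequalities. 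The second equality forces $\mu_i=\lambda_i$ for $i\le k$. The first equality then reads $\operatorname{tr}(KM)=\sum_{i\le k}\lambda_i^2$.

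The main obstacle is the equality case of von Neumann's inequality. I would handle it by hand rather than cite it. Write $M=\sum_{i\le k}\lambda_i v_iv_i^\top$ with orthonormal $v_i$, and set $P=\sum_{i\le k}v_iv_i^\top$. Then
\[
\operatorname{tr}(KM)=\sum_{i\le k}\lambda_i\, v_i^\top K v_i .
\]
By the Ky Fan maximum principle, together with the ordering $\lambda_1\ge\cdots\ge\lambda_k$ and Abel summation, this is at most $\sum_{i\le k}\lambda_i^2$. Equality requires in particular $\operatorname{tr}(PKP)=\sum_{i\le k}\lambda_i$, using that $\lambda_k>0$ puts positive weight on the full-$k$ partial sum.

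When $\lambda_k>\lambda_{k+1}$ (with the convention $\lambda_{r+1}=0$, so the gap is automatic for $k=r$), the Ky Fan maximizer is unique: the range of $P$ must equal $\operatorname{span}\{u_1,\dots,u_k\}$. Consequently $M$ and $K_k$ are both operators on this common $k$-dimensional invariant subspace, and
\[
\|K-M\|_F^2=\|K_k-M\|_F^2+\sum_{i>k}\lambda_i^2 .
\]
Optimality then forces $M=K_k$. This last orthogonal decomposition is the step I would check most carefully, since it is where the gap condition actually enters.
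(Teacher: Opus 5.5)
Your proposal is correct and follows the paper's route. For existence, you apply Eckart--Young--Mirsky over all rank-$\le k$ matrices and note that $K_k$ is feasible for the PSD-constrained problem; for uniqueness, you expand the Frobenius norm, apply von Neumann's trace inequality, and use equality in both bounds to force $\mu_i=\lambda_i$. Your endgame is tighter than the paper's: you derive the range condition by Ky Fan with Abel summation and the uniqueness of the Ky Fan maximizer under the gap, then conclude with the orthogonal decomposition $\|K-M\|_F^2=\|K_k-M\|_F^2+\sum_{i>k}\lambda_i^2$, which rigorously justifies a step the paper only asserts (knowing $\operatorname{range}(M)=\operatorname{span}\{u_1,\dots,u_k\}$ and the eigenvalues of $M$ does not by itself fix how $M$'s eigenvectors sit inside that subspace).
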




\begin{theorem}[Identifiability of simple spectral modes]
\label{thm:identifiability-spectral-modes}
Let \(K\succeq 0\) be a symmetric matrix of rank $r$ with eigenvalues $\lambda_1\geq \lambda_2\geq \cdots \geq \lambda_r>0$ with $\lambda_i\neq \lambda_j$ for all $i\neq j$ and eigenvectors $u_1,\ldots, u_r$ such that $K=\sum_{j=1}^r \lambda_j u_j u_j^\top$.
Then, if
\[
    K
    =
    U\operatorname{diag}(\lambda_1,\ldots,\lambda_r)U^\top
    =
    U'\operatorname{diag}(\lambda'_1,\ldots,\lambda'_r){U'}^\top,
\]
with \(U^\top U={U'}^\top U'=I_r\), there exist a permutation matrix \(\Pi\) and a diagonal
 matrix \(S\), with diagonal entries in \(\{\pm 1\}\), such that
\(
    U'=U\Pi S,
    \ 
    \lambda'=\Pi^\top \lambda.
\)
\end{theorem}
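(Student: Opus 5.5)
The plan is to identify the eigenvalues first and then the eigenvectors, one eigenspace at a time.

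\textbf{Step 1: the two decompositions describe the same range.} Write $\Lambda=\operatorname{diag}(\lambda_1,\ldots,\lambda_r)$ and $\Lambda'=\operatorname{diag}(\lambda'_1,\ldots,\lambda'_r)$. Extend $U$ and $U'$ to orthogonal $N\times N$ matrices. Then both factorizations are spectral decompositions of $K$. The nonzero eigenvalues of $K$, counted with multiplicity, are exactly the diagonal entries of $\Lambda$, and also those of $\Lambda'$ padded with zeros. Since $\operatorname{rank}(K)=r$ and the $\lambda_j$ are positive, $K$ has exactly $r$ nonzero eigenvalues. So all $\lambda'_j$ are nonzero. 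Moreover $\{\lambda'_j\}$ and $\{\lambda_j\}$ coincide as multisets.

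Because the $\lambda_j$ are distinct, there is a unique permutation $\pi$ with $\lambda'_j=\lambda_{\pi(j)}$. Let $\Pi$ be the permutation matrix with $\Pi_{\pi(j),j}=1$. Then $\lambda'=\Pi^\top\lambda$ and $\Lambda'=\Pi^\top\Lambda\Pi$.

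\textbf{Step 2: reduce to an orthogonal matrix that commutes with $\Lambda$.} From $U^\top U=I_r$ we get $KU=U\Lambda$, so $\operatorname{range}(U)=\operatorname{range}(K)$. The same holds for $U'$. Hence $\operatorname{range}(U')=\operatorname{range}(U)$, and $U'=UQ$ with $Q:=U^\top U'$. The matrix $Q$ is $r\times r$ and orthogonal, because $Q^\top Q={U'}^\top UU^\top U'={U'}^\top U'=I_r$. Here $UU^\top$ is the orthogonal projector onto $\operatorname{range}(K)$, which contains the columns of $U'$.

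Substituting into $U\Lambda U^\top=U'\Lambda'{U'}^\top$ and multiplying by $U^\top$ on the left and $U$ on the right gives $\Lambda=Q\Lambda' Q^\top$. Set $P:=Q\Pi^\top$, which is orthogonal. Then $\Lambda=P\Lambda P^\top$, so $P\Lambda=\Lambda P$.

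\textbf{Step 3: conclude using the simple spectrum.} Comparing entries of $P\Lambda=\Lambda P$ gives $P_{ij}(\lambda_j-\lambda_i)=0$. Distinctness of the eigenvalues forces $P_{ij}=0$ for $i\neq j$. So $P$ is diagonal, and an orthogonal diagonal matrix has entries $\pm1$. Then $Q=P\Pi$, and
\[
U'=UP\Pi=U\Pi(\Pi^\top P\Pi)=U\Pi S,
\]
where $S:=\Pi^\top P\Pi$ is again diagonal with entries in $\{\pm1\}$, being a permutation of $P$'s diagonal.

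\textbf{Main obstacle.} I expect the only real care to lie in Step 2. One must justify that $Q$ is orthogonal, which requires the range equality and not just $U^\top U'$ existing. One must also track the ordering convention for $\Pi$ so that $\lambda'=\Pi^\top\lambda$ and $U'=U\Pi S$ hold simultaneously. The rest is routine linear algebra.
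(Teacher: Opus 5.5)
Your proof is correct, and it takes a different route from the paper's.

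\textbf{The paper's route.} The paper argues column by column. Each eigenspace $E_{\lambda_j}$ is one-dimensional, so any unit eigenvector for $\lambda_j$ equals $\pm u_j$. Each column $u'_\ell$ of $U'$ is a unit eigenvector with eigenvalue $\lambda'_\ell$. That eigenvalue is matched to a unique $\lambda_{\sigma(\ell)}$, giving $u'_\ell=s_\ell u_{\sigma(\ell)}$, and stacking the columns yields $U'=U\Pi S$.

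\textbf{Your route.} You work at the matrix level instead. First you obtain the multiset equality $\{\lambda'_j\}=\{\lambda_j\}$ from the zero-padded full spectral decompositions. Then you move $U'$ into the coordinates of $U$ through the $r\times r$ orthogonal matrix $Q=U^\top U'$, which is justified by the range equality. Finally you reduce everything to the commutant fact: an orthogonal matrix commuting with a diagonal matrix with distinct entries is a diagonal sign matrix.

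\textbf{What each buys.} The paper's argument is shorter and exhibits the signs and the permutation directly. However, it passes quickly over two points that your route handles explicitly.
\begin{itemize}
\item It does not check that every $\lambda'_\ell$ is positive before matching it to one of the positive eigenvalues $\lambda_1,\ldots,\lambda_r$. You get this from the rank count in your Step 1.
\item It asserts without argument that $\sigma$ is a bijection. This follows from orthonormality of the $u'_\ell$, or from your multiset equality.
\end{itemize}
The paper also tacitly identifies the columns of the arbitrary $U$ in the statement with the given $u_j$. Your proof never refers to the $u_j$, so it needs no such identification.

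Finally, your commutant formulation generalizes immediately. Without the simple-spectrum hypothesis, the same entrywise computation $P_{ij}(\lambda_j-\lambda_i)=0$ shows that $P$ is block-diagonal orthogonal. That is, the factorization is identifiable exactly up to rotations within each eigenspace, which makes precise why distinct eigenvalues are the right hypothesis.
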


\textbf{Implications for representation summaries.}
Theorem~\ref{thm:optimal-spectral-prefix} shows that the leading spectral modes provide
canonical low-rank summaries of the fitted kernel: truncating to the first $k$
eigenmodes is the optimal rank-$k$ PSD approximation in Frobenius norm. Thus,
spectral prefixes are not arbitrary coordinate projections of the factor $L$,
but intrinsic summaries of the kernel itself. Theorem~\ref{thm:identifiability-spectral-modes}
further shows that, when the active spectrum is simple, these spectral modes are
stable representation-level objects, identifiable up to sign and permutation.
Together, these results separate the parameterization from the representation:
the rank cap $r$ controls the largest available kernel rank, while
$d_{\mathrm{spec}}(K)$ measures the effective spectral dimension realized by the
learned kernel.

\textbf{Elastic prefix extraction.}
A fitted kernel also induces a nested family of aligned lower-dimensional representations, in the spirit of \emph{Matryoshka Representation Learning} \citep{kusupati2024matryoshkarepresentationlearning}, but obtained here without any multi-scale training objective. If
$K = U \Lambda U^\top$, define
\begin{equation}
    K_k = U_{1:k}\Lambda_{1:k}U_{1:k}^\top,
    \qquad k = 1, \ldots, r .
\end{equation}
The family $\{K_k\}_{k=1}^r$ is strictly nested, since
$\operatorname{range}(K_k) \subset \operatorname{range}(K_{k+1})$
because $K_{k+1}$ extends $K_k$ by one additional eigenpair without
modifying the previous $k$. Independently trained rank-$k$ models
do not satisfy this property: their eigenbases are unrelated across
$k$, blocking direct comparison of low- and high-capacity summaries
of the same data. Each $K_k$ is the optimal rank-$k$ PSD
approximation to $K$ in Frobenius norm by
Theorem~\ref{thm:optimal-spectral-prefix}, in contrast to MRL,
where nested representations are enforced through a joint loss at
training time, our nestedness and optimality at every $k$ are
deterministic consequences of the Eckart--Young--Mirsky theorem
applied to a single-objective fit. A single learned kernel thus
yields an auditable sequence of deployment-time operating points
indexed by $k$ and summarized by $d_{\mathrm{spec}}(K_k)$.

\textbf{Local behavior as $\eta$ varies.}
We next investigate the one-dimensional capacity path induced by varying the entropy weight
$\eta$. Because the objective may have multiple critical points, there is no globally single-valued optimizer map $\eta\mapsto L^\star(\eta)$ in general.
The result below is therefore local and branch-wise: it establishes a path of nondegenerate restricted local minima corresponding to values $\eta$ in the neighborhood of $\eta_0$. Along such
a branch, the realized dimension
\[
    d_{\mathrm{spec}}(\eta)
    =
    d_{\mathrm{spec}}\bigl(K(L^\star(\eta))\bigr)
\]
varies smoothly and is locally nondecreasing, provided no spectral or
second-order degeneracy occurs.

The statement must quotient out the orthogonal gauge of the factorization.
Indeed, $K(LQ)=K(L)$ for every orthogonal $Q\in\mathbb R^{r\times r}$, so the
full parameterization contains flat directions unrelated to the represented
kernel. Fix a rank $s>0$. After a right-orthogonal
reparameterization, work on the slice
\[
    \widetilde{\Theta}
        \coloneqq
        \left\{
            ([L_1\mid 0],a,\beta):
            L_1\in\mathbb R^{N\times s},
            \operatorname{rank}(L_1)=s,\,
            L_1^\top L_1 \text{ is diagonal},\,
            a\in\mathbb R^N,\,
            \beta>0
        \right\}
        \subset \Theta .
\]
This slice is a smooth embedded submanifold of the parameter space, and all
derivatives below are restricted to its tangent spaces.\footnote{Basic differential-geometric definitions are recalled in Appendix~\ref{app:dg-definitions}.\label{fn:definitions}} Its tangent space at $\theta\in\widetilde{\Theta}$ is denoted by
$T_\theta\widetilde{\Theta}$. The restricted objective
$F_\eta|_{\widetilde{\Theta}}$ is $\mathcal C^2$; see
Appendix~\ref{app:lemma}. We denote the restricted gradient by
$
    \nabla_\theta F_\eta(\theta)\big|_{T_\theta\widetilde{\Theta}}
$
and the Hessian of the restricted function $F_\eta|_{\widetilde{\Theta}}$, as
$
    \nabla_\theta^2 F_\eta(\theta)\big|_{T_\theta\widetilde{\Theta}}
$ evaluated on tangent directions.

Fix $\eta_0\in\mathbb R$. We take as reference an arbitrary restricted critical point
$
    \theta^\star=(L^\star,a^\star,\beta^\star)\in\widetilde{\Theta}$
of $F_{\eta_0}|_{\widetilde{\Theta}}$, meaning that
$
    \nabla_\theta F_{\eta_0}(\theta^\star)
    \big|_{T_{\theta^\star}\widetilde{\Theta}}
    =
    0.
$
The following assumptions are local regularity conditions for the chosen
restricted critical point $\theta^\star$. If a rank transition, eigenvalue crossing, or Hessian
degeneracy occurs, this branch-wise smoothness guarantee no longer applies.

\begin{assumption}[Simple active spectrum]
\label{ass:A2}
The positive eigenvalues of \(K(L^{*})\) are pairwise distinct.
\end{assumption}

\begin{assumption}[Nondegenerate second-order condition]
\label{ass:A3}
The restricted Hessian is positive definite:
\[
    v^\top
    \left(
        \nabla_\theta^2F_{\eta_0}(\theta^{*})
        \big|_{T_{\theta^{*}}\widetilde{\Theta}}
    \right)
    v
    >
    0
    \quad
    \text{for every }
    0\neq v\in T_{\theta^{*}}\widetilde{\Theta}.
\]
\end{assumption}


The proof of the following Theorem is in Appendix \ref{app:proof-regularity} and an accompanying illustration in Figure \ref{fig:thm6}. 

\begin{theorem}[Local smoothness and monotonicity of the effective spectral dimension]
\label{thm:frontier-regularity}
Let \(\eta_0\in\mathbb R\), and let $\theta^\star=(L^\star,a^\star,\beta^\star)$ satisfy Assumptions~\ref{ass:A2}--\ref{ass:A3}.
Then, there exists an open interval \(U\ni\eta_0\) and a \(\mathcal{C}^1\) map $\theta^\star\colon \R\to\widetilde{\Theta}$ such that $\theta^\star(\eta_0)=\theta^\star$ and the following three claims hold: 
\textbf{(i)} for every $\eta\in U$, it holds
    \(
        \nabla_\theta F_\eta(\theta^\star(\eta))
        \big|_{T_{\theta^\star(\eta)}\tilde\Theta}
        =
        0
    \)
    and \(\theta^\star(\eta)\) is a strict local minimum of
    \(F_\eta\big|_{\widetilde{\Theta}}\), 
\textbf{(ii)} the capacity profile
    \(
        d_{\mathrm{spec}}(\eta)
        :=
        d_{\mathrm{spec}}\bigl(K(L^\star(\eta))\bigr)
    \)
    is \(\mathcal{C}^1\) on \(U\) and,
\textbf{(iii)} for every \(\eta\in U\),
    \[\frac{d}{d\eta}\log d_{\mathrm{spec}}(\eta)\ge 0\]
\end{theorem}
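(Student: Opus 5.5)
The plan is to apply the implicit function theorem to the restricted first-order condition on the slice $\widetilde{\Theta}$, and then get monotonicity from the standard envelope/comparative-statics identity for a one-parameter family of objectives that is linear in $\eta$.

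First, I work in local coordinates. Since $\widetilde{\Theta}$ is a smooth embedded submanifold, I fix a chart $\psi$ around $\theta^\star$ with $\psi(0)=\theta^\star$. I then set $G(x,\eta)=\nabla_x (F_\eta\circ\psi)(x)$. This map is $\mathcal C^1$ in $(x,\eta)$, because $F_\eta|_{\widetilde{\Theta}}$ is $\mathcal C^2$ (Appendix~\ref{app:lemma}) and $F_\eta$ is affine in $\eta$. The $\eta$-derivative is $-\nabla_x h(x)$, where $h:=\log d_{\mathrm{spec}}\circ K\circ\psi$ is smooth on the slice. Smoothness here follows from two facts. First, $K(L)$ has rank exactly $s$, so its $s$ positive eigenvalues are bounded away from $0$ and $-\sum p\log p$ has no singularity. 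Second, Assumption~\ref{ass:A2} makes those eigenvalues simple, hence $\mathcal C^\infty$ functions of $L$ nearby.

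At $x=0$ the Jacobian $\partial_x G(0,\eta_0)$ is the coordinate Hessian. At a critical point this represents the restricted Hessian, so it is positive definite by Assumption~\ref{ass:A3} and therefore invertible. The IFT then gives an interval $U\ni\eta_0$ and a $\mathcal C^1$ curve $x(\eta)$ with $G(x(\eta),\eta)=0$. I define $\theta^\star(\eta)=\psi(x(\eta))$, which proves the criticality part of (i). Shrinking $U$, continuity of the Hessian keeps it positive definite, and second-order sufficiency gives a strict local minimum. Claim (ii) follows because $d_{\mathrm{spec}}(\eta)=\exp(h(x(\eta)))$ is a composition of $\mathcal C^1$ maps.

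For (iii), I differentiate the identity $\nabla_x F_{\eta_0}(x) - (\eta-\eta_0)\nabla_x h(x)=0$, or more simply $\nabla_x f(x(\eta)) - \eta\nabla_x h(x(\eta))=0$, where $f$ collects the $\eta$-free terms. This gives $H(\eta)\,x'(\eta)=\nabla_x h(x(\eta))$, with $H(\eta)=\nabla_x^2F_\eta(x(\eta))\succ 0$. Hence
\[
\frac{d}{d\eta}\log d_{\mathrm{spec}}(\eta)=\nabla_x h^\top x'(\eta)=\nabla_x h^\top H(\eta)^{-1}\nabla_x h\ \ge 0 .
\]

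The main obstacle is justifying that the coordinate Hessian coincides with the intrinsic restricted Hessian and stays positive definite along the branch. At critical points the connection term vanishes, and the curve consists of critical points, so I would state this point carefully. A second delicate point is checking that the slice has constant rank $s$, so that no eigenvalue hits zero and the entropy stays smooth.
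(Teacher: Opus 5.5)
Your proposal is correct and follows essentially the same route as the paper: a chart on $\widetilde{\Theta}$, the implicit function theorem applied to $G(x,\eta)=\nabla_x\widehat F(x,\eta)$, positive definiteness of the Hessian persisting along the branch, and the identity $\frac{d}{d\eta}\log d_{\mathrm{spec}}=\nabla_x\widehat h^\top H(\eta)^{-1}\nabla_x\widehat h\ge 0$. The one minor difference is how you get smoothness of $h$: you invoke Assumption~\ref{ass:A2} to make the eigenvalues smooth, whereas the paper uses the diagonal gauge on the slice, where $p_i=\|\ell_i\|_2^2/\|L_1\|_F^2$ explicitly, so this step does not actually need a simple spectrum.
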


The theorem is intentionally local. Its hypotheses can fail when - as $\eta$ varies - an eigenvalue
enters or leaves the active support, when positive eigenvalues collide, or when
the restricted Hessian becomes degenerate. These events can produce nonsmooth spectrum
changes (see Appendix~\ref{app:structural-events}, Fig.~\ref{fig:frontier-allfour}). The appropriate interpretation of Theorem~6 is a sensitivity statement for a
chosen nondegenerate local optimum, rather than a global statement about the
optimizer set.  In nonconvex problems the argmin correspondence
$\eta \mapsto \operatorname{argmin} F_\eta$ can be set-valued and need not vary
continuously.  The theorem therefore proves the property that is actually
needed locally: whenever a fitted solution is nondegenerate and its active
spectrum remains simple, the implicit-function theorem selects a unique nearby
branch of local minima, and along this branch increasing the entropy weight
cannot decrease $\log d_{\mathrm{spec}}$.  Thus the result justifies local
continuation and local one-dimensional calibration of realized capacity, but it
does not assert that all local minima, or the global minimizer, lie on a single
smooth monotone path.

\textbf{Targeting a desired effective dimension.} Motivated by the local monotonicity in Theorem~6, we target a desired effective
dimension by treating \(d_{\mathrm{spec}}(\eta)\) as a one-dimensional response
curve on empirically identified monotone intervals. The entropy weight $\eta$ parameterizes a one-dimensional family of
fitted kernels: $\eta>0$ raises $d_{\mathrm{spec}}$, $\eta<0$ lowers
it. Given a target $d_{\mathrm{spec}}^\star$, we solve
$d_{\mathrm{spec}}(\eta) \approx d_{\mathrm{spec}}^\star$ as
one-dimensional root finding for
$g(\eta) := d_{\mathrm{spec}}(\eta) - d_{\mathrm{spec}}^\star$.
Starting from an anchor fit at $\eta=0$, the sign of $g(0)$ selects
the search direction; geometric expansion brackets the target and
bisection refines $\eta$ until $|g(\eta)|/d_{\mathrm{spec}}^\star
\leq \tau$ (we use $\tau=0.02$). A monotone guard during expansion
falls back to bidirectional search if local monotonicity fails, and
probe fits may early-stop once $d_{\mathrm{spec}}$ stabilizes,
since exact convergence is unnecessary for bracketing. The selected
$\eta^\star$ is retrained from a fresh initialization at the full
training budget. By Theorem~\ref{thm:frontier-regularity}, bisection
reaches $\eta$-precision $\delta$ in $O(\log(1/\delta))$ probe
evaluations on intervals where the branch is monotone, so calibration
adds at most a logarithmic multiplicative factor over a single
training run.

\textbf{Complexity and parametrization.} We parametrize the kernel through its SVD,
$L = Q\,\mathrm{diag}(\sigma)$, with $Q$ column-orthonormal and
$\sigma$ positive via softplus. This is equivalent to a free
factor under $K(L)=N\,LL^\top/\mathrm{tr}(LL^\top)$ but reads the
eigendecomposition of $K$ directly off the parameters
($\lambda_j(K) = N\sigma_j^2/\sum_l\sigma_l^2$, eigenvectors $Q$),
making $d_{\mathrm{spec}}$ and prefixes $K_k$ available without an
eigensolve and exposing the signed-permutation gauge of
Theorem~\ref{thm:identifiability-spectral-modes} at the parameter
level. Each training step costs
$\mathcal O(|E_{\mathrm{batch}}|\, r + N r)$ with $\mathcal O(Nr)$
memory; total cost is
$\mathcal{O}(T(|E_{\mathrm{batch}}|\,r + Nr))$, linear in $N$ and
$r$ and tractable for $r \ll N$. Calibration adds a logarithmic
factor.
\section{Experiments \& Results}
\label{sec:experiments}

\begin{table}[t]
\centering
\caption{Dataset statistics for eight undirected graphs used in the experiments.}
\label{tab:dataset-stats}
\resizebox{\textwidth}{!}{
\begin{tabular}{lrrrrrrrr}
\toprule
Statistic 
& ca-GrQc 
& ca-HepTh 
& socfb-American75 
& socfb-Amherst41 
& bio-grid-human 
& bio-grid-worm 
& inf-power 
& inf-openflights \\
\midrule
Domain 
& Collaboration 
& Collaboration 
& Social 
& Social 
& Biological 
& Biological 
& Infrastructure 
& Infrastructure \\
Nodes 
& 4{,}158 & 8{,}638 & 6{,}370 & 2{,}235 & 9{,}186 & 3{,}343 & 4{,}941 & 2{,}905 \\
Edges 
& 13{,}426 & 24{,}818 & 224{,}024 & 93{,}188 & 40{,}224 & 9{,}780 & 11{,}534 & 18{,}550 \\
\bottomrule
\end{tabular}
}
\end{table}

We evaluate \textsc{Spectra} against representative latent
representation methods spanning Euclidean embeddings, matrix
factorization, mixed-membership models, and latent-distance
simplex-based representations. \textsc{Spectra} is optimized with
Adam~\citep{kingma2014adam} at $5{:}1$ negatives-to-positives
sampling per step. For embedding baselines, dyadic edge features
use the binary operator (average, Hadamard, weighted-$L_1$,
weighted-$L_2$) maximizing each baseline's test AUC, giving each
baseline its strongest configuration; likelihood-based latent
graph models, including \textsc{Spectra}, are evaluated directly
from the learned log-odds. \textsc{Spectra} is reported in a
strictly test-blind regime: no hyperparameter, $\eta$ value,
$d_{\mathrm{spec}}$ target, or checkpoint is selected with
reference to any test quantity at any stage.
Hyperparameters, hardware, and implementation pointers are in
Appendix \ref{sec:supp-experimental}.

\begin{figure}[t]
  \centering

  \noindent
  \begin{minipage}[c]{0.03\linewidth}
    \centering
    \rotatebox{90}{\itshape\small Saturated regime}
  \end{minipage}\hfill
  \begin{minipage}[c]{0.96\linewidth}
    \begin{minipage}[c]{0.245\linewidth}\centering
      \textsl{inf-power}~{\footnotesize\textcolor{gray}{$\cdot$ infrastructure}}
    \end{minipage}\hfill
    \begin{minipage}[c]{0.245\linewidth}\centering
      \textsl{inf-openflights}~{\footnotesize\textcolor{gray}{$\cdot$ infrastructure}}
    \end{minipage}\hfill
    \begin{minipage}[c]{0.245\linewidth}\centering
      \textsl{socfb-American75}~{\footnotesize\textcolor{gray}{$\cdot$ social}}
    \end{minipage}\hfill
    \begin{minipage}[c]{0.245\linewidth}\centering
      \textsl{socfb-Amherst41}~{\footnotesize\textcolor{gray}{$\cdot$ social}}
    \end{minipage}

    \vspace{2pt}

    \includegraphics[width=0.245\linewidth]{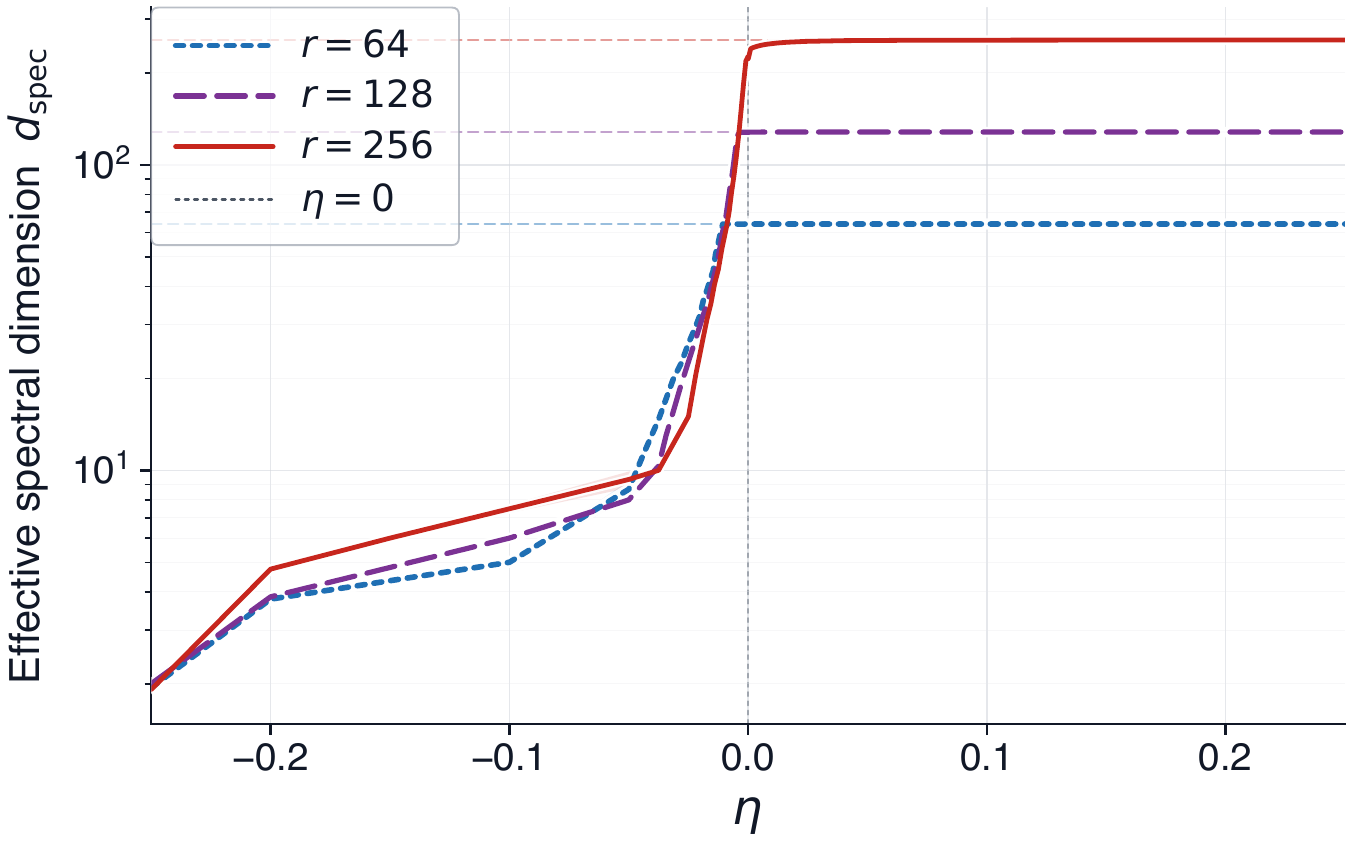}\hfill
    \includegraphics[width=0.245\linewidth]{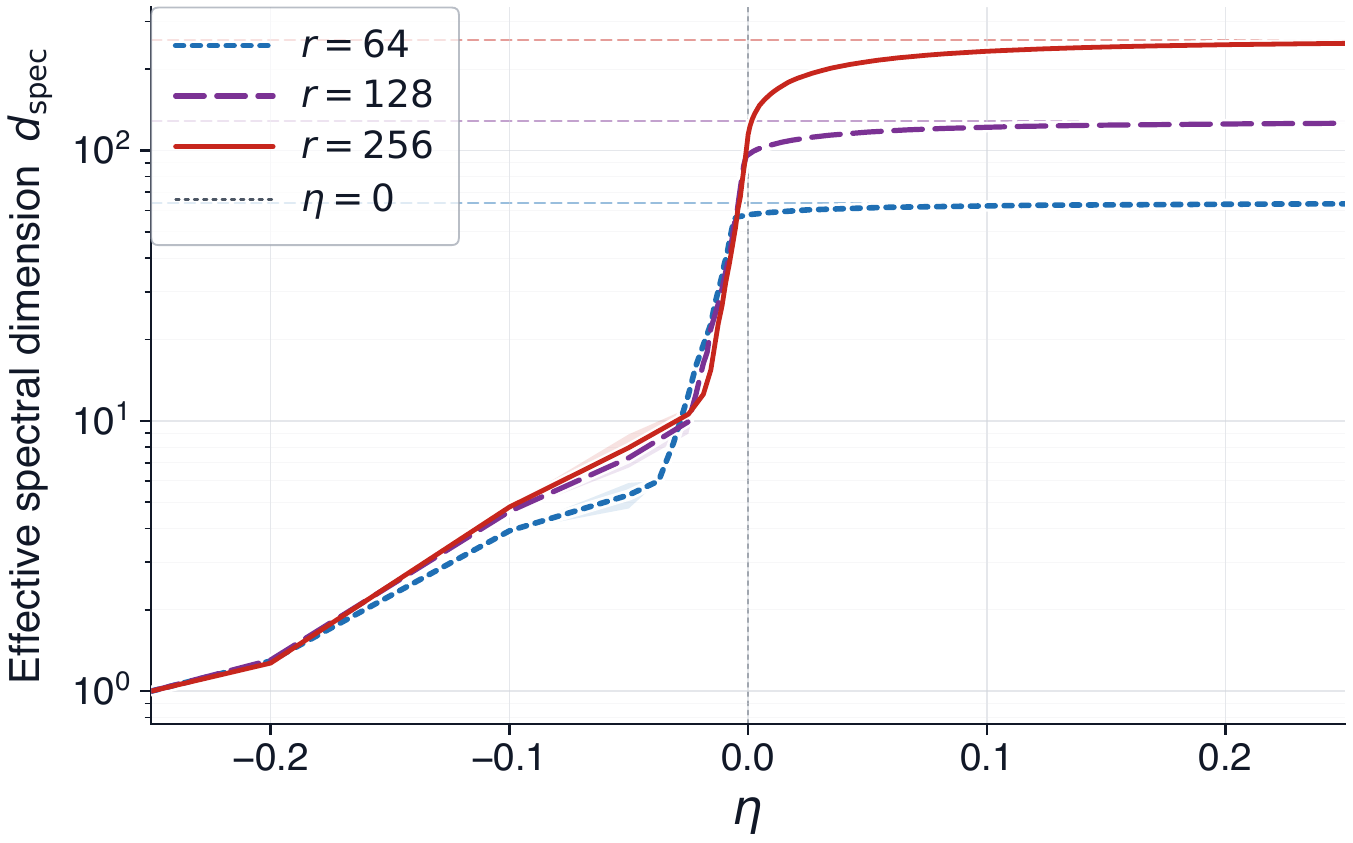}\hfill
    \includegraphics[width=0.245\linewidth]{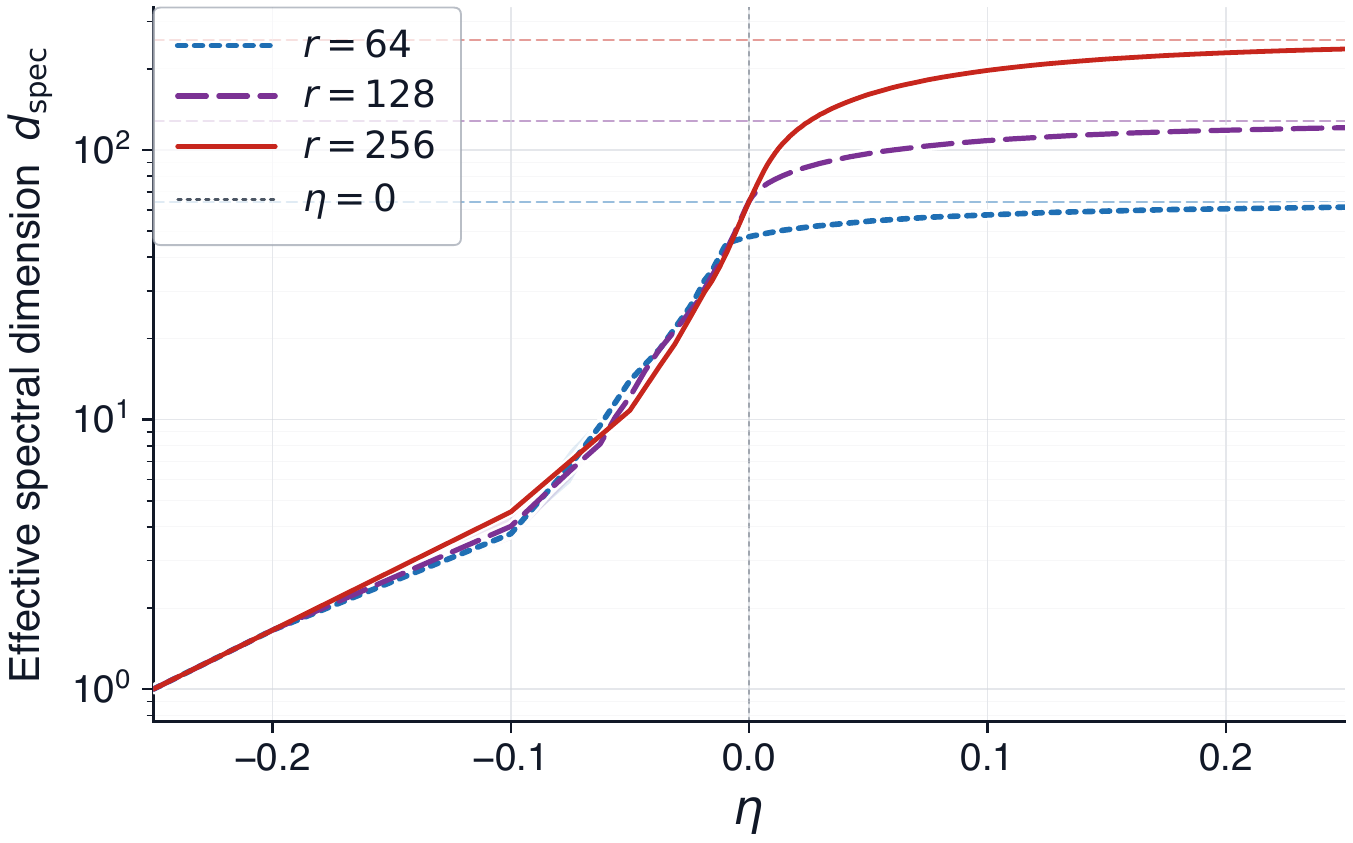}\hfill
    \includegraphics[width=0.245\linewidth]{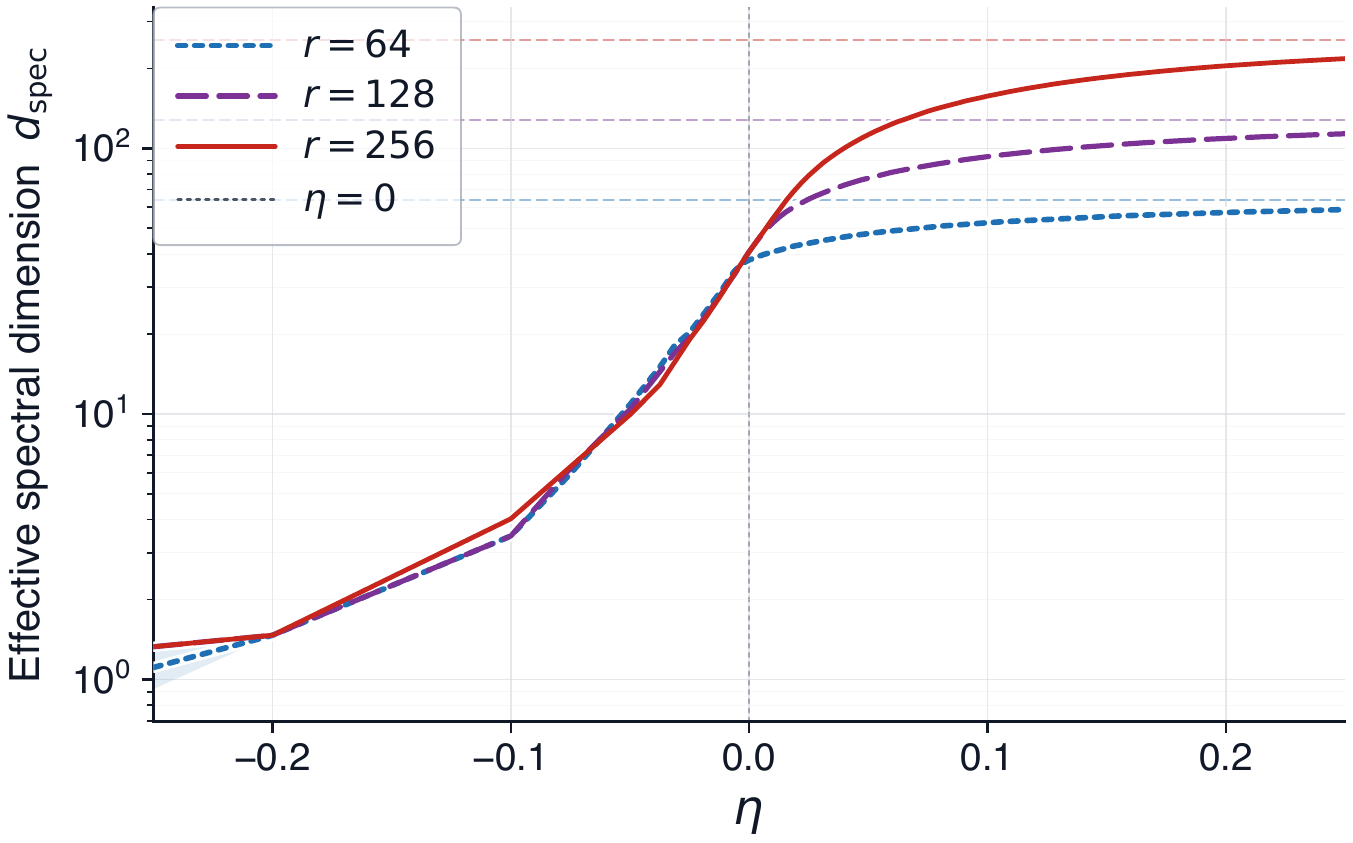}

    \vspace{1pt}

    \includegraphics[width=0.245\linewidth]{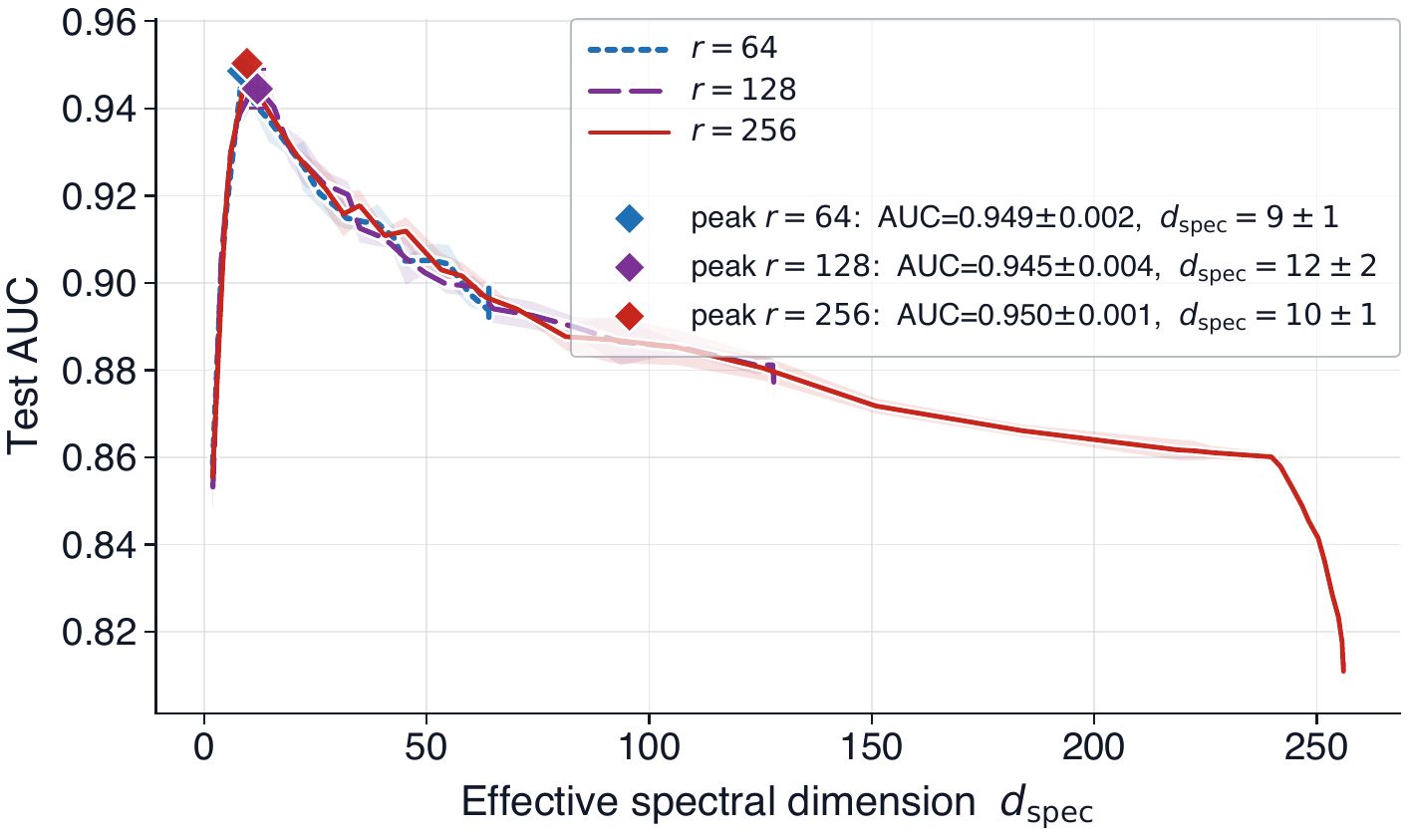}\hfill
    \includegraphics[width=0.245\linewidth]{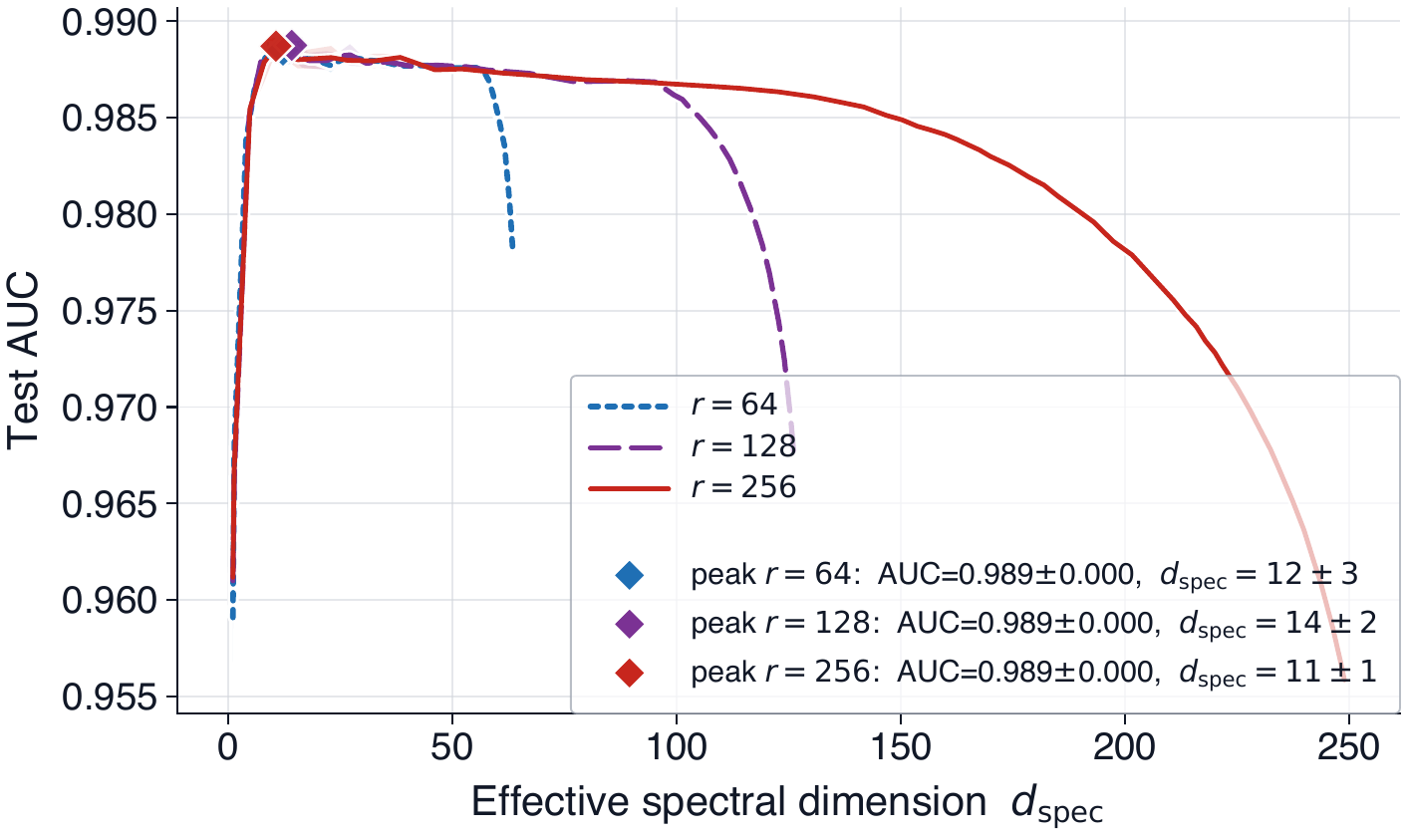}\hfill
    \includegraphics[width=0.245\linewidth]{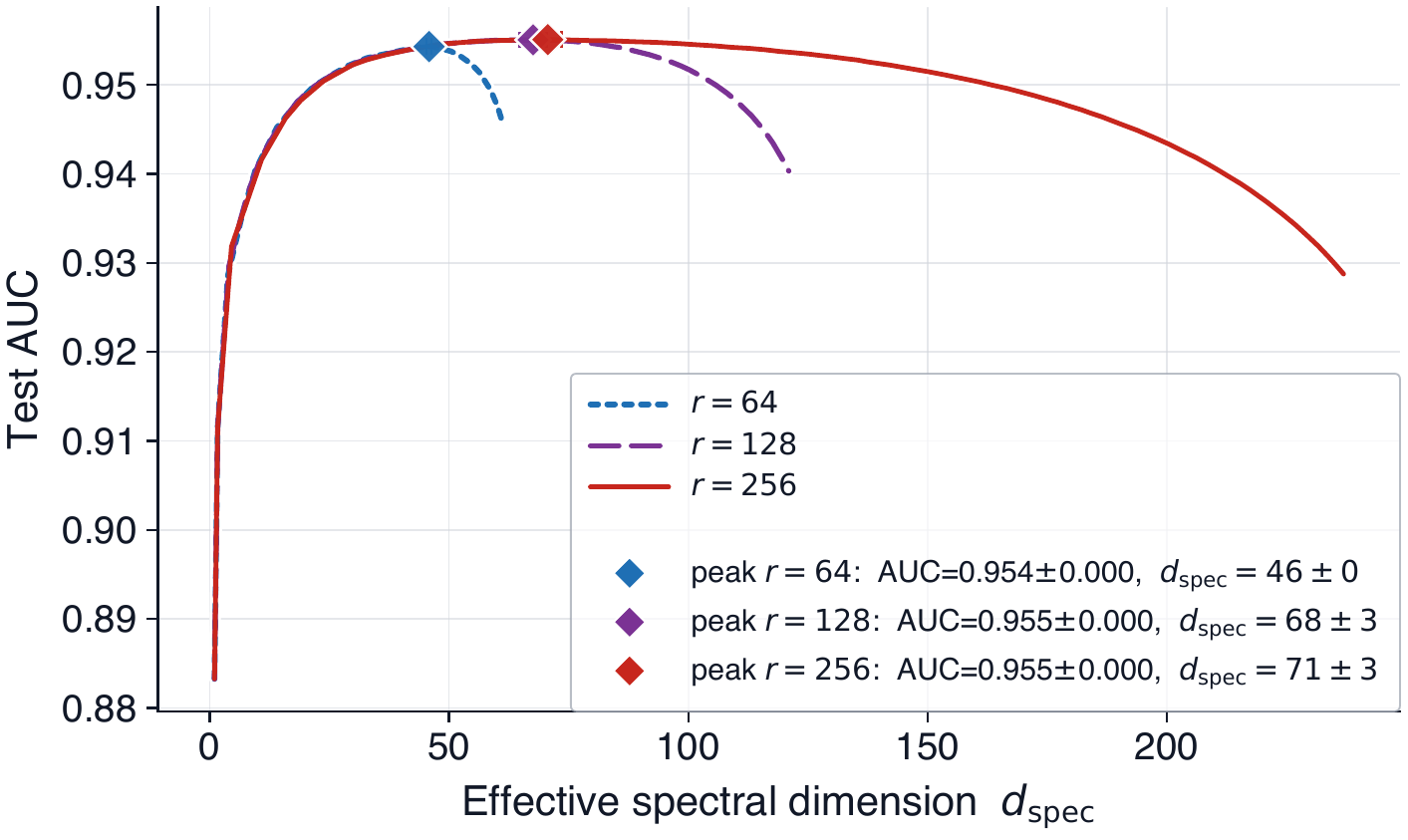}\hfill
    \includegraphics[width=0.245\linewidth]{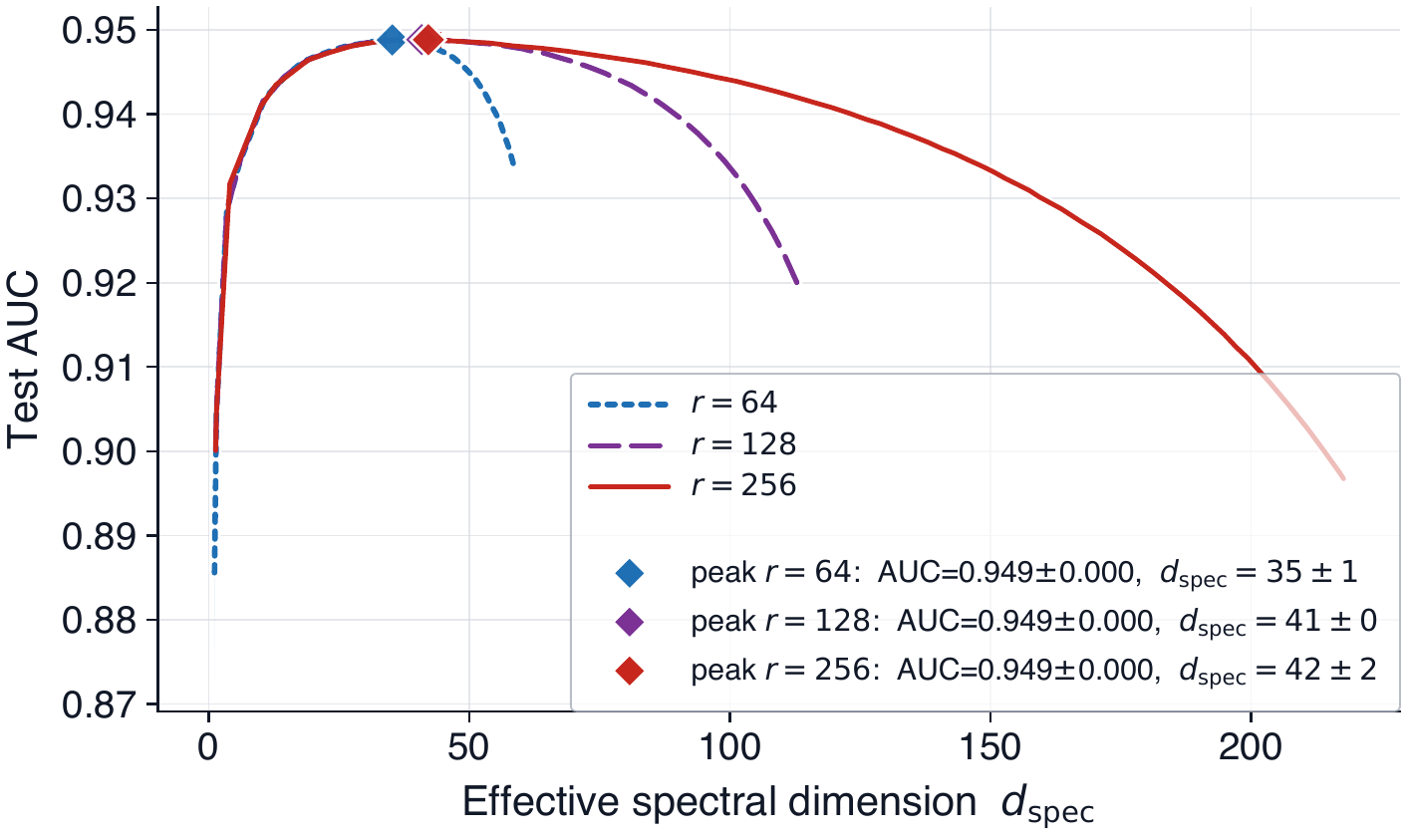}
  \end{minipage}

  \vspace{8pt}

  \noindent
  \begin{minipage}[c]{0.03\linewidth}
    \centering
    \rotatebox{90}{\itshape\small Rank-cap-binding regime}
  \end{minipage}\hfill
  \begin{minipage}[c]{0.96\linewidth}
    \begin{minipage}[c]{0.245\linewidth}\centering
      \textsl{ca-grqc}~{\footnotesize\textcolor{gray}{$\cdot$ citation}}
    \end{minipage}\hfill
    \begin{minipage}[c]{0.245\linewidth}\centering
      \textsl{ca-hepth}~{\footnotesize\textcolor{gray}{$\cdot$ citation}}
    \end{minipage}\hfill
    \begin{minipage}[c]{0.245\linewidth}\centering
      \textsl{bio-grid-human}~{\footnotesize\textcolor{gray}{$\cdot$ biological}}
    \end{minipage}\hfill
    \begin{minipage}[c]{0.245\linewidth}\centering
      \textsl{bio-grid-worm}~{\footnotesize\textcolor{gray}{$\cdot$ biological}}
    \end{minipage}

    \vspace{2pt}

    \includegraphics[width=0.245\linewidth]{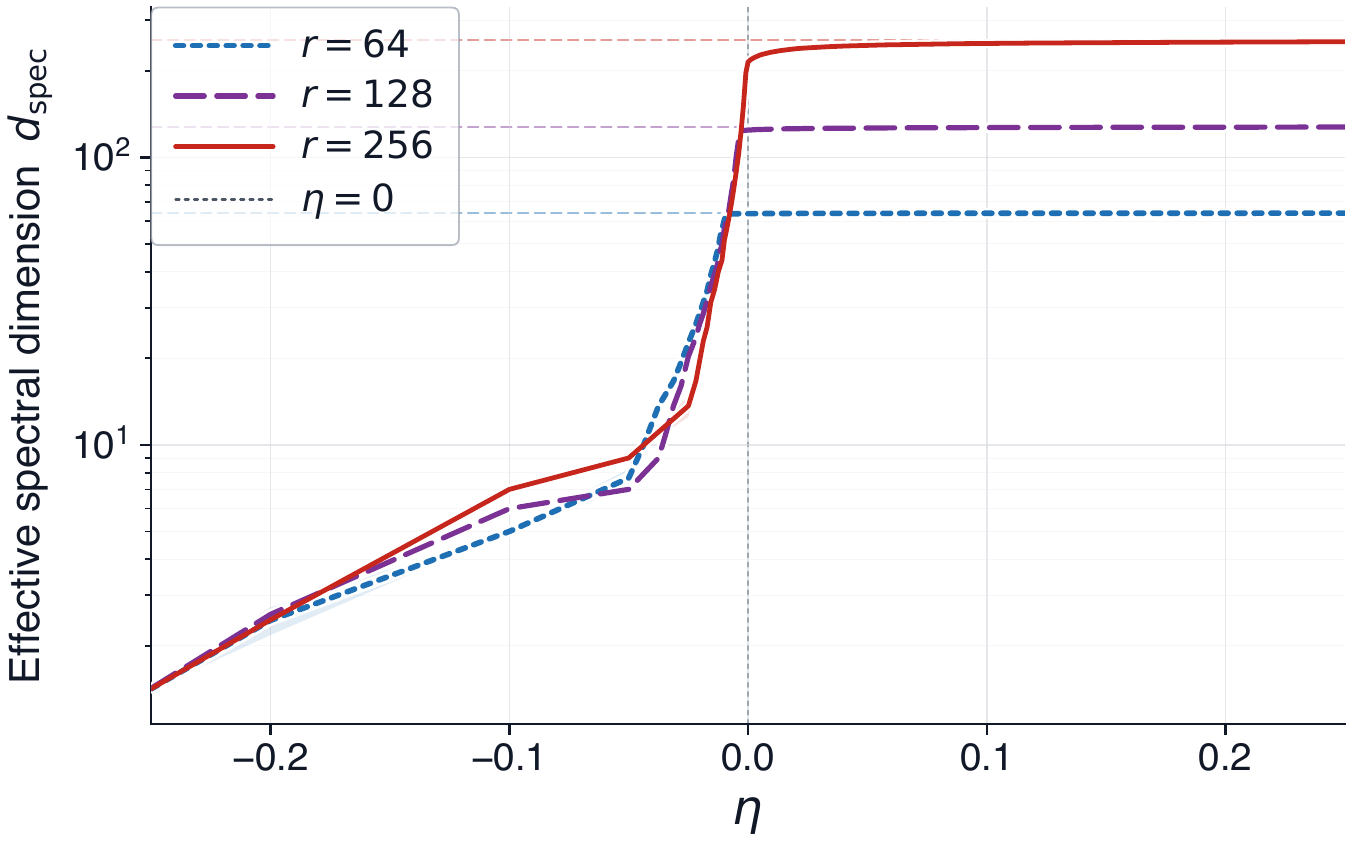}\hfill
    \includegraphics[width=0.245\linewidth]{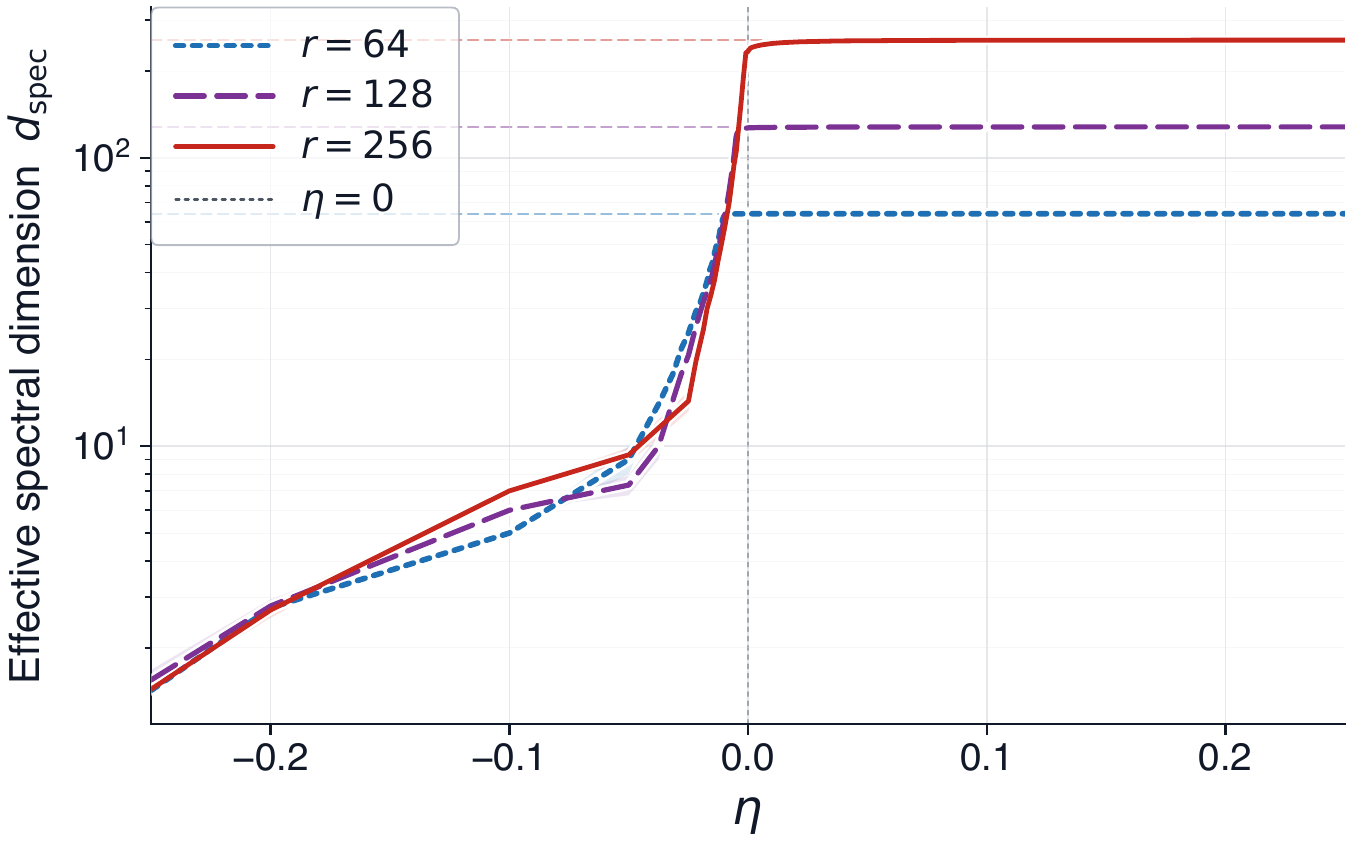}\hfill
    \includegraphics[width=0.245\linewidth]{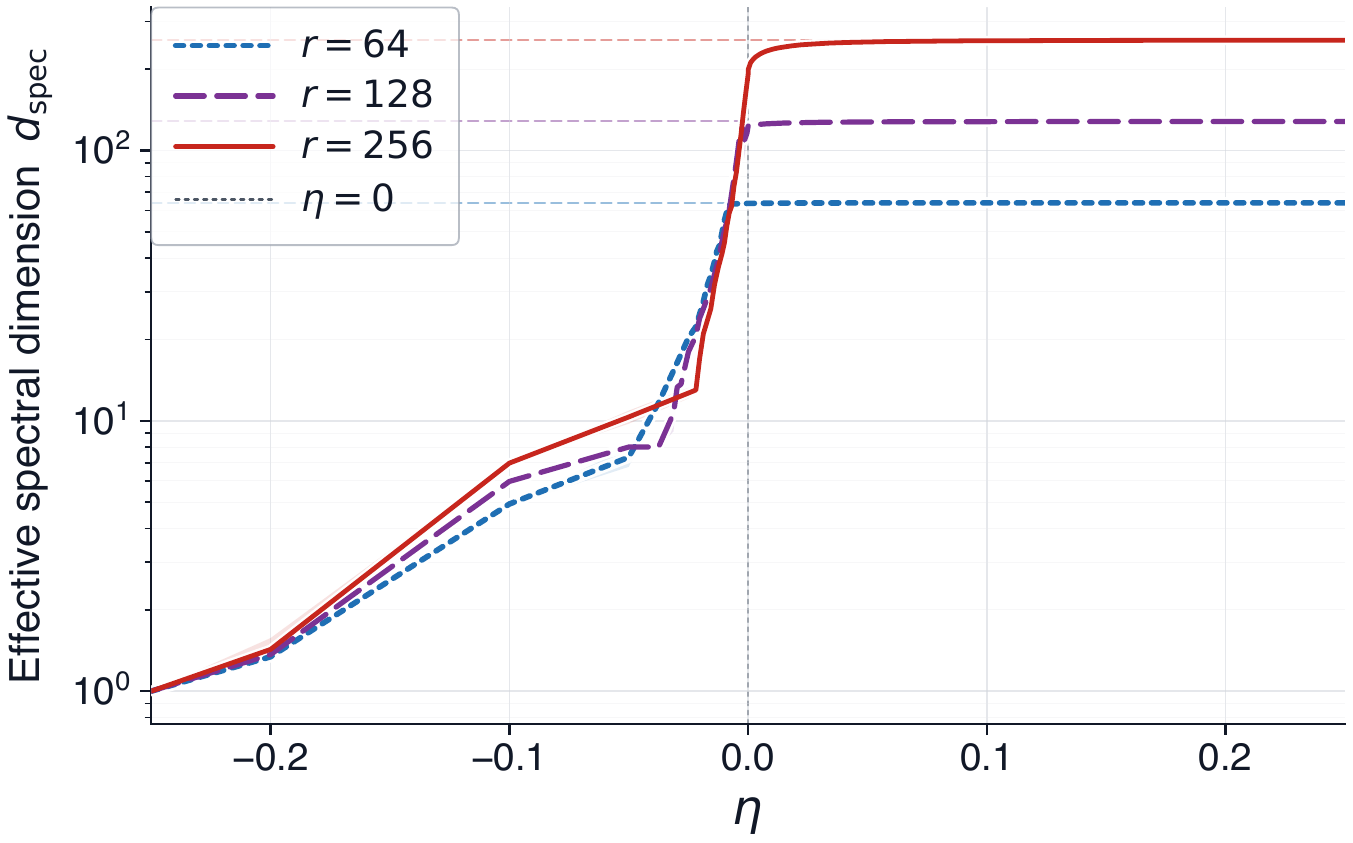}\hfill
    \includegraphics[width=0.245\linewidth]{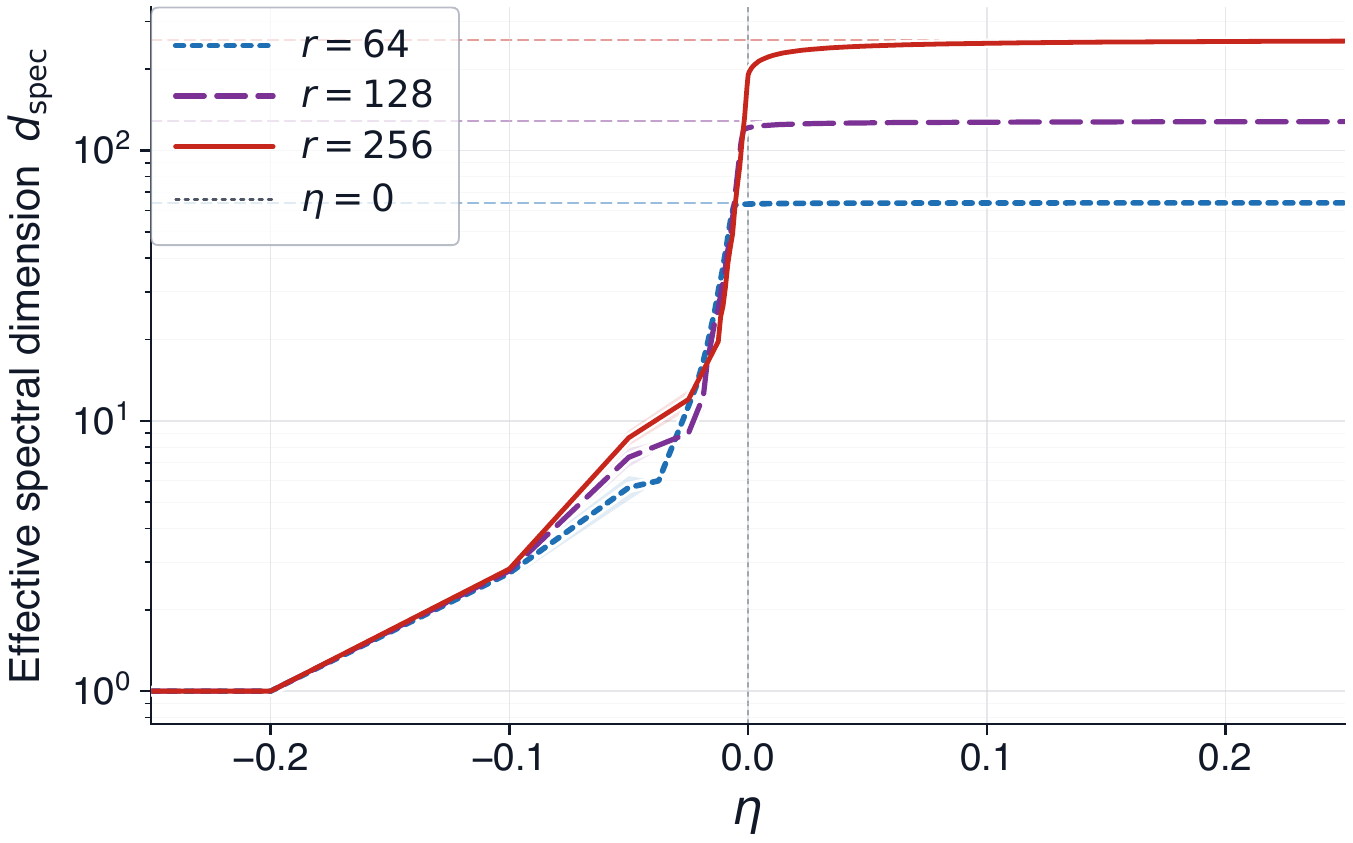}

    \vspace{1pt}

    \includegraphics[width=0.245\linewidth]{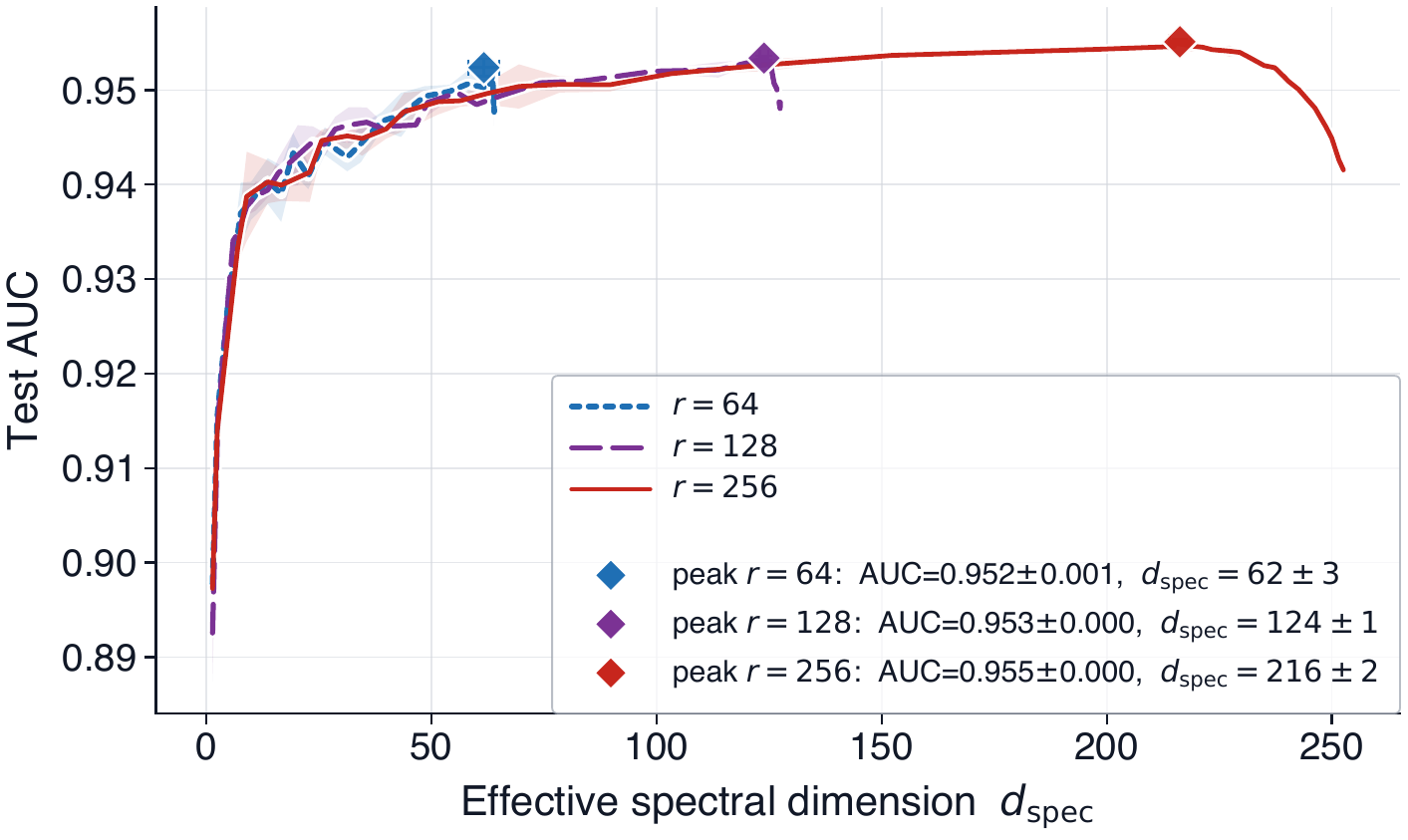}\hfill
    \includegraphics[width=0.245\linewidth]{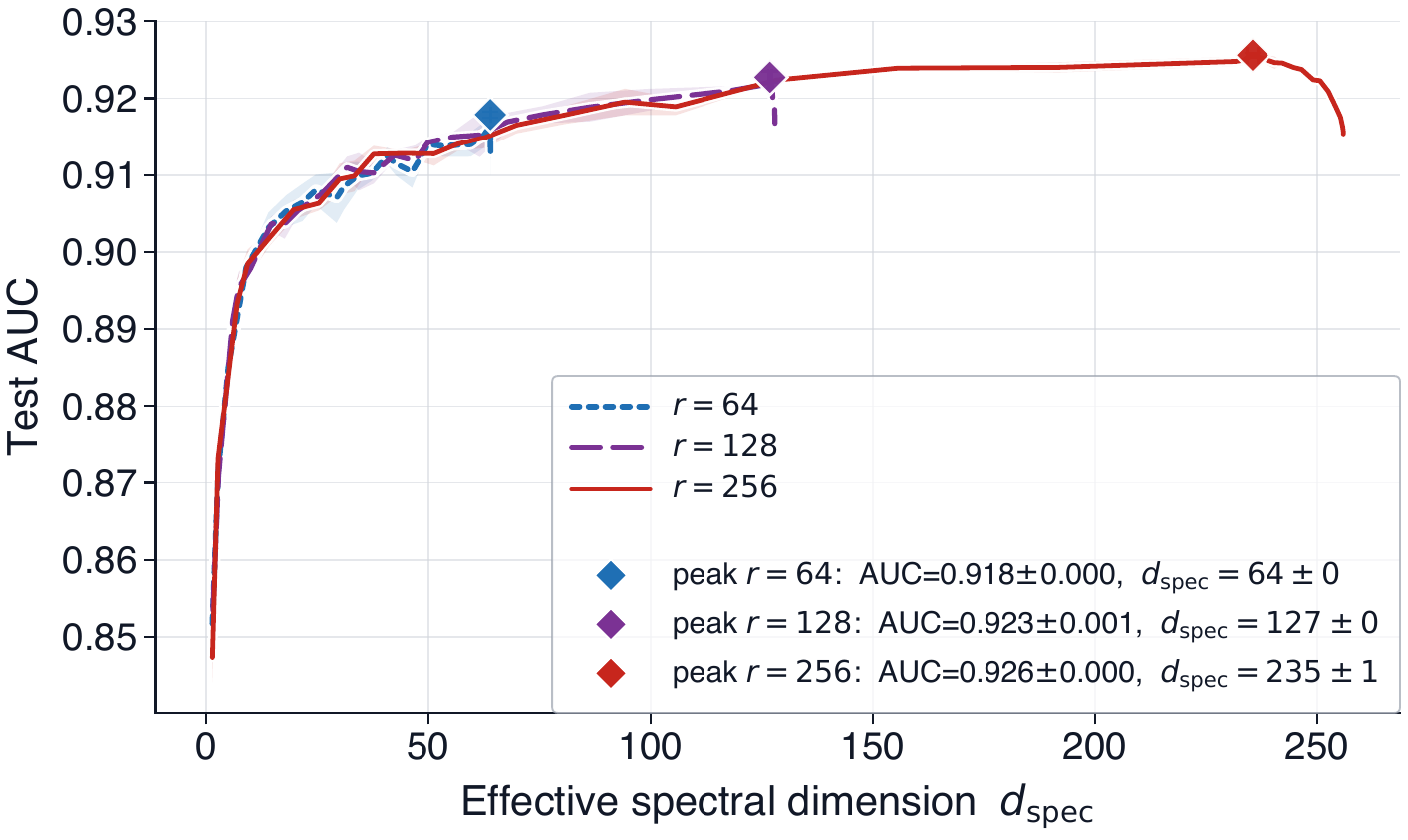}\hfill
    \includegraphics[width=0.245\linewidth]{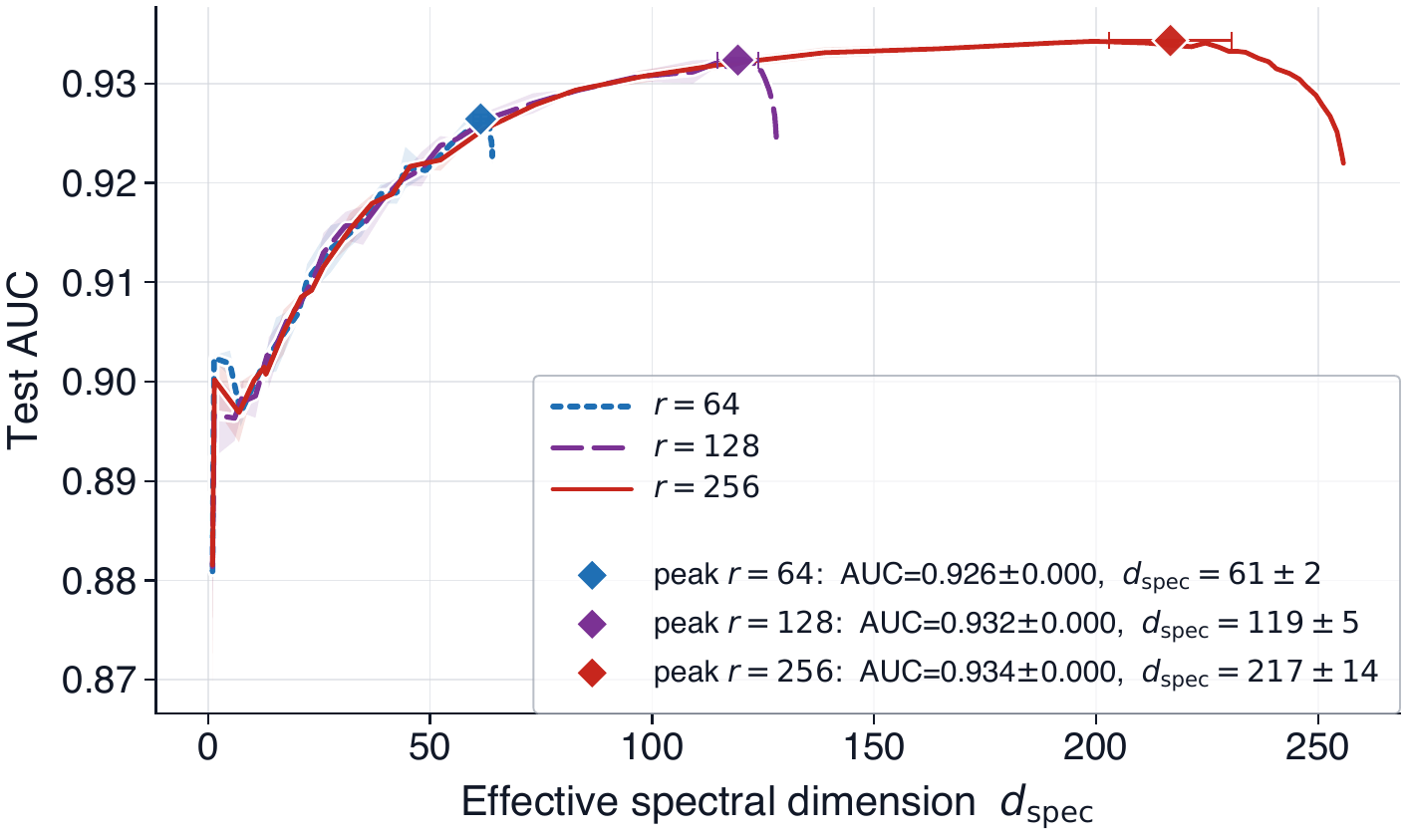}\hfill
    \includegraphics[width=0.245\linewidth]{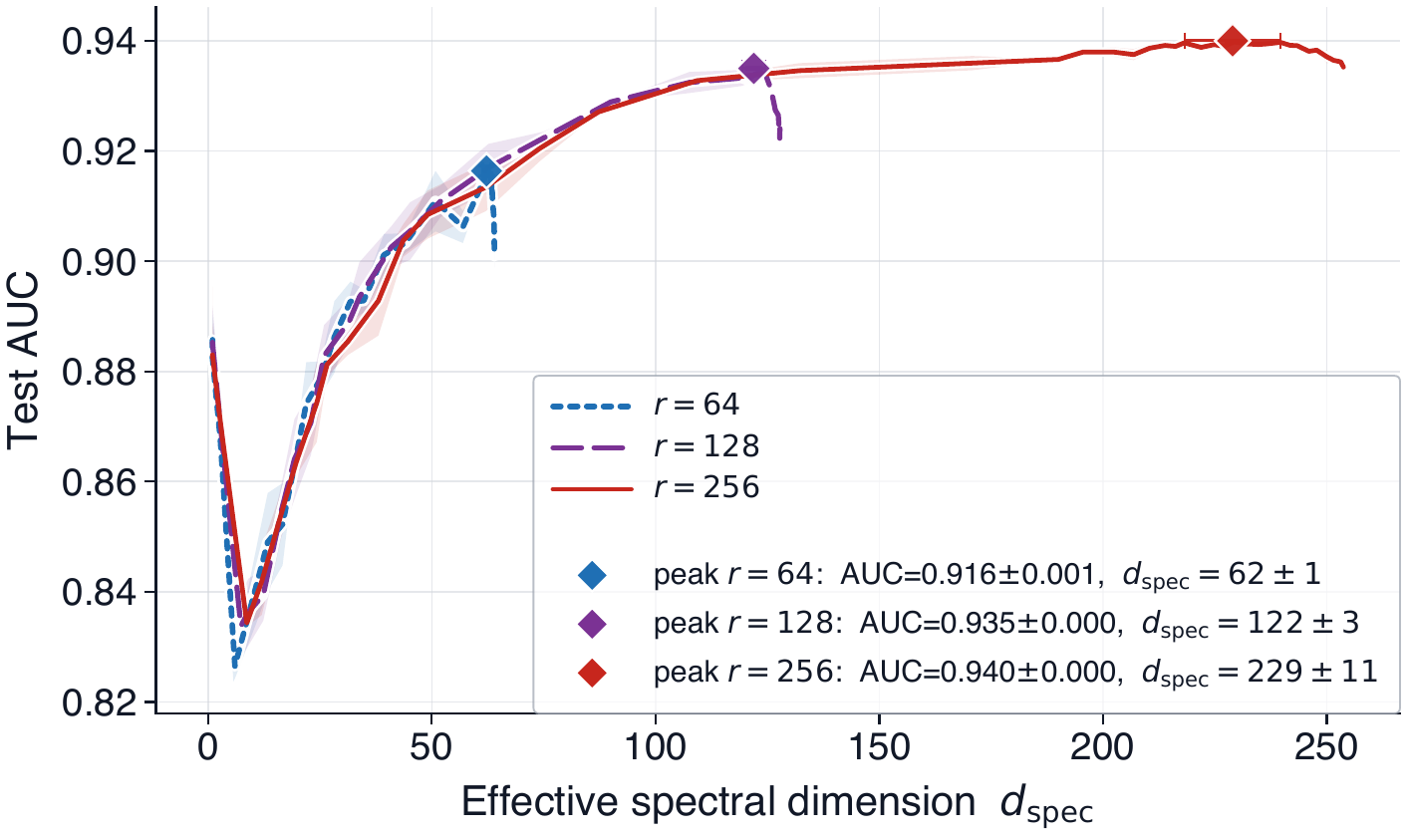}
  \end{minipage}

\caption{\textbf{Capacity frontiers across rank caps.}
Each dataset is swept over $\eta\in[-0.25,0.25]$ at
$r\in\{64,128,256\}$; curves show mean$\pm1\sigma$ across three seeds. Upper
panels show achieved $d_{\mathrm{spec}}$ versus $\eta$, lower panels show test
AUC versus achieved $d_{\mathrm{spec}}$. Top: saturated datasets. Bottom:
rank-cap-binding datasets. Legends report per-rank peak coordinates.}
  \label{fig:frontier-multirank-all}
\end{figure}

\textbf{Datasets and Baselines.}
All experiments use featureless undirected graphs spanning five
structural regimes: collaboration networks
\textsl{ca-GrQc} and \textsl{ca-HepTh}~\citep{leskovec2007graph};
social networks \textsl{socfb-American75} and
\textsl{socfb-Amherst41}~\citep{fb}; biological protein-protein interaction
networks \textsl{bio-grid-human} and \textsl{bio-grid-worm}~\citep{biogrid};
and infrastructure networks \textsl{inf-power}~\citep{power} and
\textsl{inf-openflights}~\citep{airport} (see Table \ref{tab:dataset-stats} for details). Datasets are obtained from
SNAP~\citep{snapnets} and the Network Repository~\citep{nr}. We compare against four families of latent representation methods:
(i)~\emph{Euclidean vector-space embeddings} that learn node representations
in $\mathbb{R}^d$ under inner-product geometry, represented by
\textsc{Node2Vec}~\citep{grover2016node2vec} and
\textsc{Role2Vec}~\citep{role2vec};
(ii)~\emph{matrix-factorization methods}, represented by
\textsc{NetMF}~\citep{netmf} and \textsc{GraRep}~\citep{grarep};
(iii)~\emph{mixed-membership models}, represented by
\textsc{MNMF}~\citep{MNMF};
and (iv)~\emph{latent-distance simplex-based models}, represented by
\textsc{HM-LDM}~\citep{nakis2022hmldmhybridmembershiplatentdistance}. We focus on featureless graphs in an unsupervised link-prediction setting; the comparison class is therefore the latent-representation methods that operate in this regime, rather than message-passing GNNs, which usually underperform under this regime \cite{nikolentzos2024gnnsactuallylearnunderstanding}.

\paragraph{Empirical tracking protocol.}
Theorem \ref{thm:frontier-regularity} provides a local sensitivity result for nondegenerate branches of
local minima.  In the experiments, we complement this branch-wise theory with a
more stringent protocol: each value of $\eta$ is trained from independent random
initializations rather than obtained by warm-start continuation from a single
solution.  Stability of achieved $d_{\mathrm{spec}}$ across seeds therefore
tests whether spectral capacity control is reproducible under the actual
nonconvex training procedure, not only along a locally tracked branch.

\textbf{Capacity Frontiers.}
We test whether predictive performance is better organized by realized spectral
capacity $d_{\mathrm{spec}}(\eta)$ than by the nominal rank cap. For each
dataset, we sweep $\eta$ at rank caps $r\in\{64,128,256\}$ over three seeds,
using an adaptive cold-start grid that refines regions where
$d_{\mathrm{spec}}(\eta)$ changes rapidly (Appendix section \ref{app:frontier-protocol}).
Figure~\ref{fig:frontier-multirank-all} shows the resulting frontiers. Across all eight datasets, the map $\eta\mapsto d_{\mathrm{spec}}$ is
monotone with narrow cross-seed variation, spanning roughly two orders of
magnitude in realized dimension before saturating near the rank cap. When
plotted against achieved $d_{\mathrm{spec}}$, AUC curves from different rank
caps align wherever their realized-capacity ranges overlap, indicating that
performance is organized by realized spectral capacity rather than by $r$
alone. At very large $d_{\mathrm{spec}}$, performance can decline because
positive $\eta$ pushes the spectrum toward equal mass over all available modes,
including modes that carry little signal. The peak locations reveal two regimes. Infrastructure and social networks
(\textsl{inf-power}, \textsl{inf-openflights}, \textsl{socfb-American75},
\textsl{socfb-Amherst41}) are saturated: their best operating points occur at
similar $d_{\mathrm{spec}}$ across rank caps and lie well below the smallest cap
tested. Citation and biological networks (\textsl{ca-grqc}, \textsl{ca-hepth},
\textsl{bio-grid-human}, \textsl{bio-grid-worm}) are rank-cap-binding: their
best achieved $d_{\mathrm{spec}}$ increases with $r$, and AUC continues to
improve as the cap is enlarged. Thus the multi-$r$ sweep distinguishes datasets
whose capacity demand is already saturated from those for which the
parameterization remains constraining through $r=256$.

\begin{table*}[!t]
\centering
\caption{\textbf{AUC-ROC scores} on eight benchmark graphs,
over $S=5$ seeds for various $d$. Each method block reports AUC and
achieved $d_{\mathrm{spec}}$. \textbf{Bold}: best AUC for each
(dataset, $d$); \underline{underline}: second best.}
\label{tab:auc_roc} 
\setlength{\tabcolsep}{2pt}
\renewcommand{\arraystretch}{1.0}
\scriptsize
\definecolor{auccol}{RGB}{215,230,246}   
\definecolor{dspeccol}{RGB}{249,234,212} 
\resizebox{1\textwidth}{!}{%
\begin{tabular}{l c >{\columncolor{auccol}}c >{\columncolor{dspeccol}}c >{\columncolor{auccol}}c >{\columncolor{dspeccol}}c >{\columncolor{auccol}}c >{\columncolor{dspeccol}}c >{\columncolor{auccol}}c >{\columncolor{dspeccol}}c >{\columncolor{auccol}}c >{\columncolor{dspeccol}}c >{\columncolor{auccol}}c >{\columncolor{dspeccol}}c >{\columncolor{auccol}}c >{\columncolor{dspeccol}}c >{\columncolor{auccol}}c >{\columncolor{dspeccol}}c >{\columncolor{auccol}}c >{\columncolor{dspeccol}}c}
\toprule
&   & \multicolumn{2}{c}{\textsc{Node2Vec}} & \multicolumn{2}{c}{\textsc{Role2Vec}} & \multicolumn{2}{c}{\textsc{NetMF}} & \multicolumn{2}{c}{\textsc{GraRep}} & \multicolumn{2}{c}{\textsc{MNMF}} & \multicolumn{2}{c}{\textsc{HM-LDM}} & \multicolumn{2}{c}{\textsc{Spectra} ($\eta\!=\!0$)} & \multicolumn{2}{c}{\textsc{Spectra} (med $d_{\mathrm{spec}}$)} & \multicolumn{2}{c}{\textsc{Spectra} (min $d_{\mathrm{spec}}$)} \\
\cmidrule(lr){3-4}\cmidrule(lr){5-6}\cmidrule(lr){7-8}\cmidrule(lr){9-10}\cmidrule(lr){11-12}\cmidrule(lr){13-14}\cmidrule(lr){15-16}\cmidrule(lr){17-18}\cmidrule(lr){19-20}
\textbf{Dataset} & $d$ & AUC & $d_{\mathrm{spec}}$ & AUC & $d_{\mathrm{spec}}$ & AUC & $d_{\mathrm{spec}}$ & AUC & $d_{\mathrm{spec}}$ & AUC & $d_{\mathrm{spec}}$ & AUC & $d_{\mathrm{spec}}$ & AUC & $d_{\mathrm{spec}}$ & AUC & $d_{\mathrm{spec}}$ & AUC & $d_{\mathrm{spec}}$ \\
\midrule
\multirow{4}{*}{ca-GrQc} & 16 & .931 & 15.91 & .936 & 15.36 & .887 & 15.58 & .906 & 8.45 & .905 & 14.99 & .937 & 14.72 & \textbf{.940} & 15.99 & \underline{.938} & 15.18 & .932 & 8.45 \\
& 32 & .936 & 31.62 & .936 & 30.47 & .899 & 30.71 & .909 & 13.7 & .918 & 30.94 & \underline{.942} & 28.15 & \textbf{.944} & 31.99 & \underline{.942} & 30.59 & .940 & 13.7 \\
& 64 & .937 & 59.64 & .931 & 55.88 & .897 & 60.38 & .897 & 22.45 & .924 & 62.68 & .937 & 48.40 & \underline{.949} & 63.74 & \textbf{.950} & 57.76 & .944 & 22.45 \\
& 128 & .934 & 86.41 & .924 & 125.58 & .888 & 116.33 & .893 & 35.08 & .918 & 125.58 & .949 & 90.85 & \underline{.952} & 124.51 & \textbf{.953} & 103.59 & .946 & 35.08 \\
\midrule
\multirow{4}{*}{ca-HepTh} & 16 & .883 & 15.95 & \textbf{.907} & 15.12 & .831 & 15.75 & .846 & 5.47 & .845 & 14.99 & .876 & 14.70 & .902 & 15.99 & \underline{.905} & 15.06 & .891 & 5.47 \\
& 32 & .888 & 31.80 & .902 & 30.13 & .836 & 31.41 & .842 & 8.24 & .864 & 30.97 & .877 & 29.32 & \underline{.910} & 31.99 & \textbf{.911} & 30.55 & .895 & 8.24 \\
& 64 & .892 & 63.36 & .896 & 58.86 & .827 & 62.37 & .832 & 13.11 & .875 & 62.91 & .874 & 56.30 & \underline{.915} & 63.94 & \textbf{.917} & 60.61 & .905 & 13.11 \\
& 128 & .892 & 109.57 & .882 & 108.52 & .810 & 122.42 & .825 & 21.59 & .876 & 126.70 & .891 & 95.85 & \textbf{.921} & 127.00 & \underline{.920} & 109.04 & .908 & 21.59 \\
\midrule
\multirow{4}{*}{socfb-Am.75} & 16 & .923 & 14.35 & .910 & 14.47 & .879 & 13.02 & .932 & 8.05 & .857 & 14.58 & .938 & 11.03 & \underline{.944} & 14.82 & \textbf{.945} & 13.68 & .938 & 8.05 \\
& 32 & .934 & 28.75 & .914 & 28.81 & .896 & 26.35 & \underline{.938} & 11.47 & .876 & 30.21 & \underline{.938} & 15.66 & \textbf{.951} & 27.61 & \textbf{.951} & 27.55 & .944 & 11.47 \\
& 64 & .930 & 57.64 & .910 & 60.97 & .905 & 52.22 & .942 & 16.95 & .895 & 60.97 & .938 & 17.24 & \textbf{.954} & 47.65 & \underline{.952} & 54.93 & .947 & 16.95 \\
& 128 & .914 & 114.33 & .896 & 120.43 & .906 & 100.69 & .943 & 26.44 & .912 & 120.43 & .939 & 20.99 & \textbf{.955} & 64.58 & \underline{.949} & 107.51 & .949 & 20.99 \\
\midrule
\multirow{4}{*}{socfb-Amh.41} & 16 & .926 & 13.18 & .903 & 13.51 & .892 & 11.11 & .930 & 7.81 & .884 & 14.07 & \underline{.939} & 10.12 & \textbf{.943} & 14.23 & .901 & 12.14 & .938 & 7.81 \\
& 32 & .926 & 26.36 & .899 & 26.87 & .901 & 21.80 & .933 & 11.47 & .899 & 29.19 & \underline{.941} & 12.35 & \textbf{.947} & 25.00 & .915 & 24.08 & .942 & 11.47 \\
& 64 & .909 & 52.58 & .882 & 53.59 & .899 & 42.35 & .934 & 17.95 & .909 & 58.34 & .941 & 13.11 & \underline{.949} & 38.01 & \textbf{92.4} & 47.46 & .944 & 13.11 \\
& 128 & .890 & 102.40 & .862 & 107.40 & .885 & 79.84 & .929 & 29.64 & .913 & 115.79 & \underline{.942} & 13.87 & \textbf{.949} & 40.82 & .932 & 91.12 & .944 & 13.87 \\
\midrule
\multirow{4}{*}{bio-grid-h.} & 16 & .902 & 15.75 & .878 & 15.45 & .867 & 14.62 & .910 & 8.66 & .840 & 14.97 & \underline{.919} & 14.29 & .902 & 15.99 & \textbf{.943} & 14.80 & .899 & 8.66 \\
& 32 & .900 & 31.51 & .876 & 30.69 & .851 & 29.09 & \underline{.917} & 14.34 & .858 & 30.89 & \underline{.917} & 27.61 & .915 & 31.97 & \textbf{.947} & 29.89 & .903 & 14.34 \\
& 64 & .892 & 62.43 & .875 & 60.36 & .847 & 56.72 & .918 & 23.99 & .867 & 62.71 & .919 & 45.53 & \underline{.925} & 63.68 & \textbf{.946} & 58.54 & .910 & 23.99 \\
& 128 & .883 & 119.15 & .877 & 118.62 & .854 & 109.69 & .913 & 38.77 & .872 & 125.91 & .923 & 76.82 & \underline{.932} & 122.87 & \textbf{.938} & 114.16 & .919 & 38.77 \\
\midrule
\multirow{4}{*}{bio-grid-w.} & 16 & .786 & 15.52 & .744 & 15.21 & .769 & 15.57 & \underline{.900} & 11.95 & .769 & 14.79 & \textbf{.913} & 13.32 & .852 & 15.99 & .852 & 15.00 & .835 & 11.95 \\
& 32 & .775 & 30.58 & .748 & 29.00 & .821 & 30.10 & \underline{.911} & 17.67 & .780 & 30.64 & \textbf{.914} & 23.06 & .886 & 31.93 & .882 & 29.55 & .860 & 17.67 \\
& 64 & .754 & 57.84 & .772 & 54.80 & .852 & 56.28 & \underline{.916} & 24.17 & .792 & 62.35 & \textbf{.920} & 32.03 & .915 & 63.33 & .913 & 55.54 & .877 & 24.17 \\
& 128 & .747 & 95.37 & .812 & 103.96 & .878 & 98.86 & .917 & 30.30 & .785 & 124.80 & .920 & 36.19 & \textbf{.934} & 121.33 & \underline{.931} & 97.12 & .889 & 30.30 \\
\midrule
\multirow{4}{*}{inf-power} & 16 & .892 & 15.95 & \underline{.942} & 15.25 & .899 & 15.73 & \textbf{.956} & 13.01 & .816 & 14.99 & .885 & 14.67 & .936 & 15.99 & .939 & 15.12 & .949 & 13.01 \\
& 32 & .919 & 31.9 & \underline{.942} & 30.92 & .920 & 31.44 & \textbf{.956} & 20.64 & .841 & 30.97 & .881 & 28.75 & .912 & 31.99 & .916 & 30.94 & .925 & 20.64 \\
& 64 & .914 & 60.28 & \underline{.940} & 55.54 & .892 & 62.45 & \textbf{.953} & 31.07 & .841 & 62.89 & .847 & 50.53 & .895 & 63.98 & .896 & 57.91 & .916 & 31.07 \\
& 128 & .901 & 80.41 & \underline{.935} & 63.67 & .845 & 122.82 & \textbf{.945} & 45.15 & .813 & 126.61 & .889 & 95.65 & .880 & 127.89 & .884 & 88.03 & .910 & 45.15 \\
\midrule
\multirow{4}{*}{inf-openfl.} & 16 & .974 & 15.41 & .962 & 13.56 & .948 & 15.45 & .984 & 11.59 & .944 & 14.63 & \textbf{.992} & 12.88 & .987 & 15.83 & \underline{.988} & 14.10 & .989 & 11.59 \\
& 32 & .974 & 30.16 & .963 & 25.26 & .952 & 29.95 & .986 & 16.45 & .959 & 19.93 & \textbf{.990} & 21.43 & .987 & 30.87 & \underline{.988} & 23.34 & .988 & 16.45 \\
& 64 & .974 & 55.31 & .966 & 48.12 & .949 & 57.18 & .985 & 21.90 & .968 & 60.95 & \textbf{.991} & 26.67 & \underline{.987} & 57.73 & \underline{.987} & 51.72 & .988 & 21.90 \\
& 128 & .973 & 87.94 & .966 & 94.68 & .935 & 103.84 & .982 & 26.60 & .972 & 122.08 & \textbf{.991} & 38.86 & \underline{.987} & 96.43 & \underline{.987} & 91.31 & .988 & 26.60 \\
\bottomrule
\end{tabular}}
\end{table*}

\textbf{Link prediction.}
We follow the standard unsupervised protocol~\citep{perozzi2014deepwalk,
nakis2023hierarchicalblockdistancemodel,libennowell2007link}: remove
$50\%$ of edges while keeping the training graph connected, and sample an equal
number of non-edges as negatives. Table~\ref{tab:auc_roc} reports mean
AUC-ROC over $S=5$ seeds for $d\in\{16,32,64,128\}$; AUC-PR gives
qualitatively identical conclusions. Baselines are trained at native dimension
$d$, while \textsc{Spectra} uses rank cap $r=d$, so comparisons are matched in
parameter count. We also report each method's achieved
$d_{\mathrm{spec}}=\exp(H(\sigma^2/\sum_i\sigma_i^2))$ from embedding singular
values. \textsc{Spectra} is evaluated at $\eta=0$ and at two bisection-selected
settings targeting the median and minimum baseline effective ranks at the same
$d$; these targets are fixed from baseline embeddings, with no validation or
test AUC used to choose $\eta$. At matched parameter count, \textsc{Spectra} $(\eta=0)$ is top or second on
$7/8$ datasets at every $d$; the exceptions are \textsl{inf-power}, where
\textsc{GraRep} dominates, and \textsl{inf-openflights}, where \textsc{HM-LDM}
leads narrowly. Capacity-targeted \textsc{Spectra} remains competitive at much
smaller realized dimensions: the median-target setting leads on
\textsl{ca-HepTh}, \textsl{socfb-American75}, and \textsl{bio-grid-human} at
multiple $d$, while the minimum-target setting often matches full-rank
baselines at $5{-}10\times$ smaller $d_{\mathrm{spec}}$. Baselines also realize
effective ranks well below nominal $d$ on socfb and bio-grid graphs
(e.g., \textsc{HM-LDM} at $d=128$ has
$d_{\mathrm{spec}}\approx21$ on \textsl{socfb-American75} and
$\approx36$ on \textsl{bio-grid-worm}), reinforcing realized spectral capacity
as the operative comparison coordinate.

\begin{figure}[!t]
  \centering
  \begin{subfigure}[t]{0.19\linewidth}
    \centering
    \includegraphics[width=0.9\linewidth]{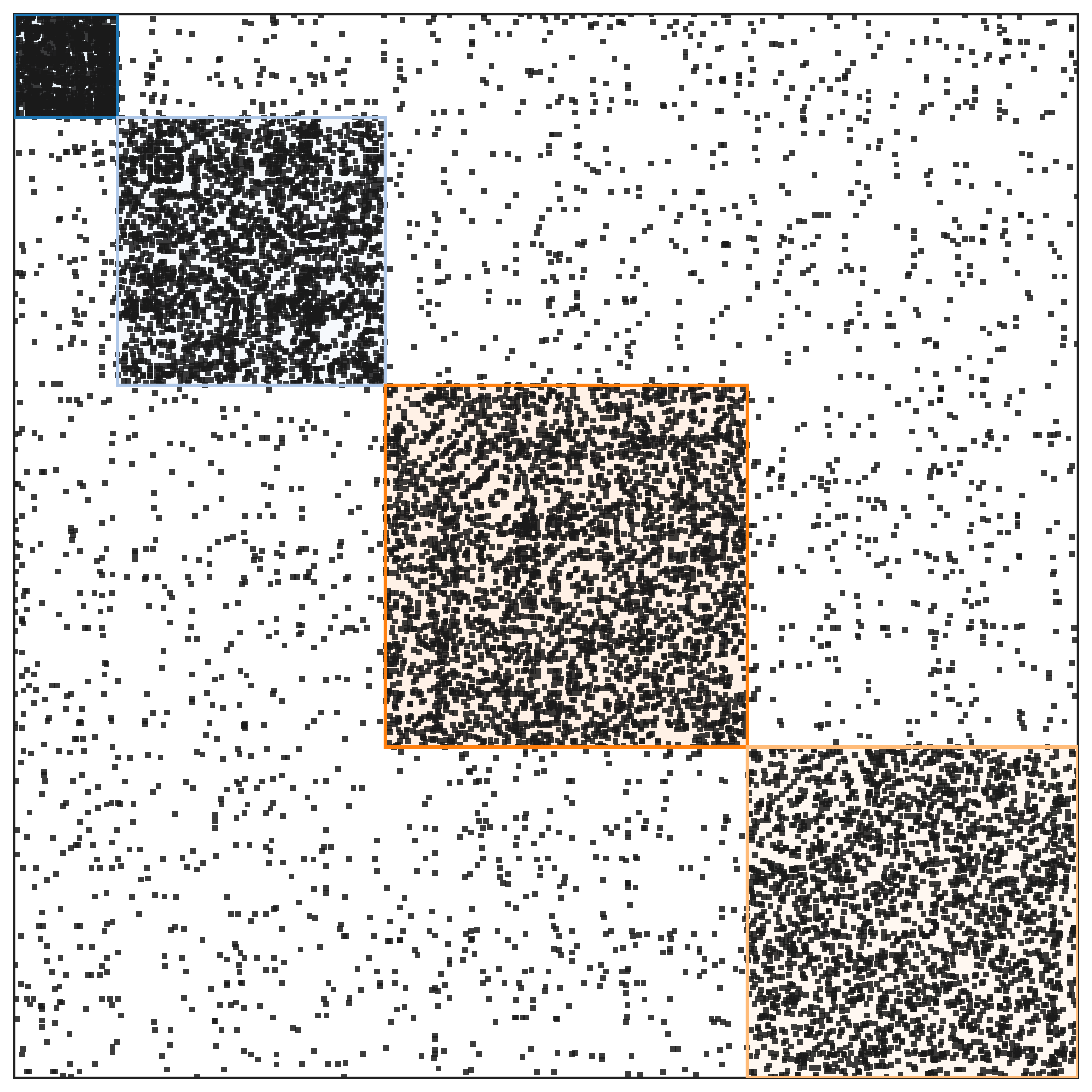}
    \caption{$d_{\mathrm{pre}}{=}4$}
    \label{fig:adj-grqc-k4}
  \end{subfigure}\hfill
  \begin{subfigure}[t]{0.19\linewidth}
    \centering
    \includegraphics[width=0.9\linewidth]{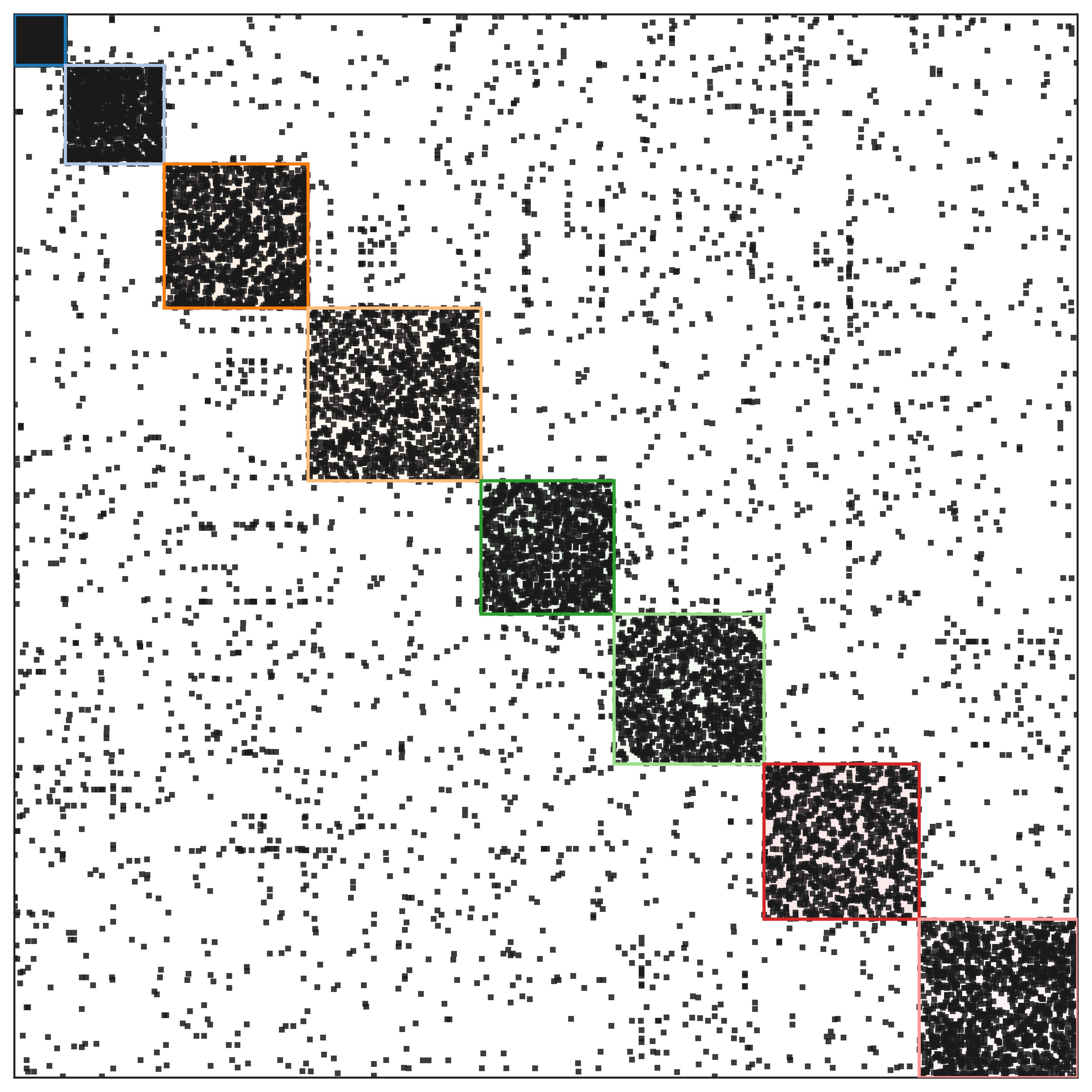}
    \caption{$d_{\mathrm{pre}}{=}8$}
    \label{fig:adj-grqc-k8}
  \end{subfigure}\hfill
  \begin{subfigure}[t]{0.19\linewidth}
    \centering
    \includegraphics[width=0.9\linewidth]{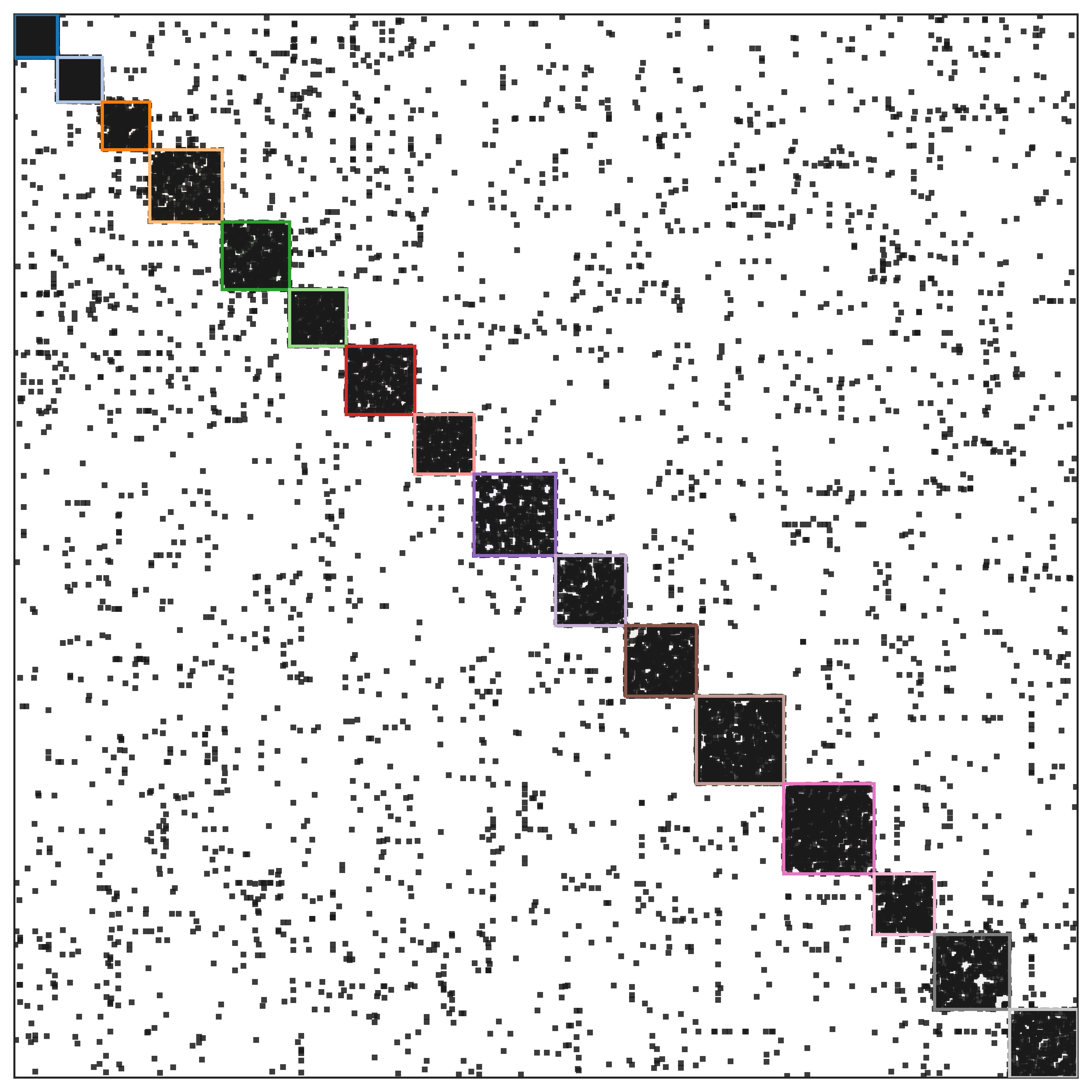}
    \caption{$d_{\mathrm{pre}}{=}16$}
    \label{fig:adj-grqc-k16}
  \end{subfigure}\hfill
  \begin{subfigure}[t]{0.19\linewidth}
    \centering
    \includegraphics[width=0.9\linewidth]{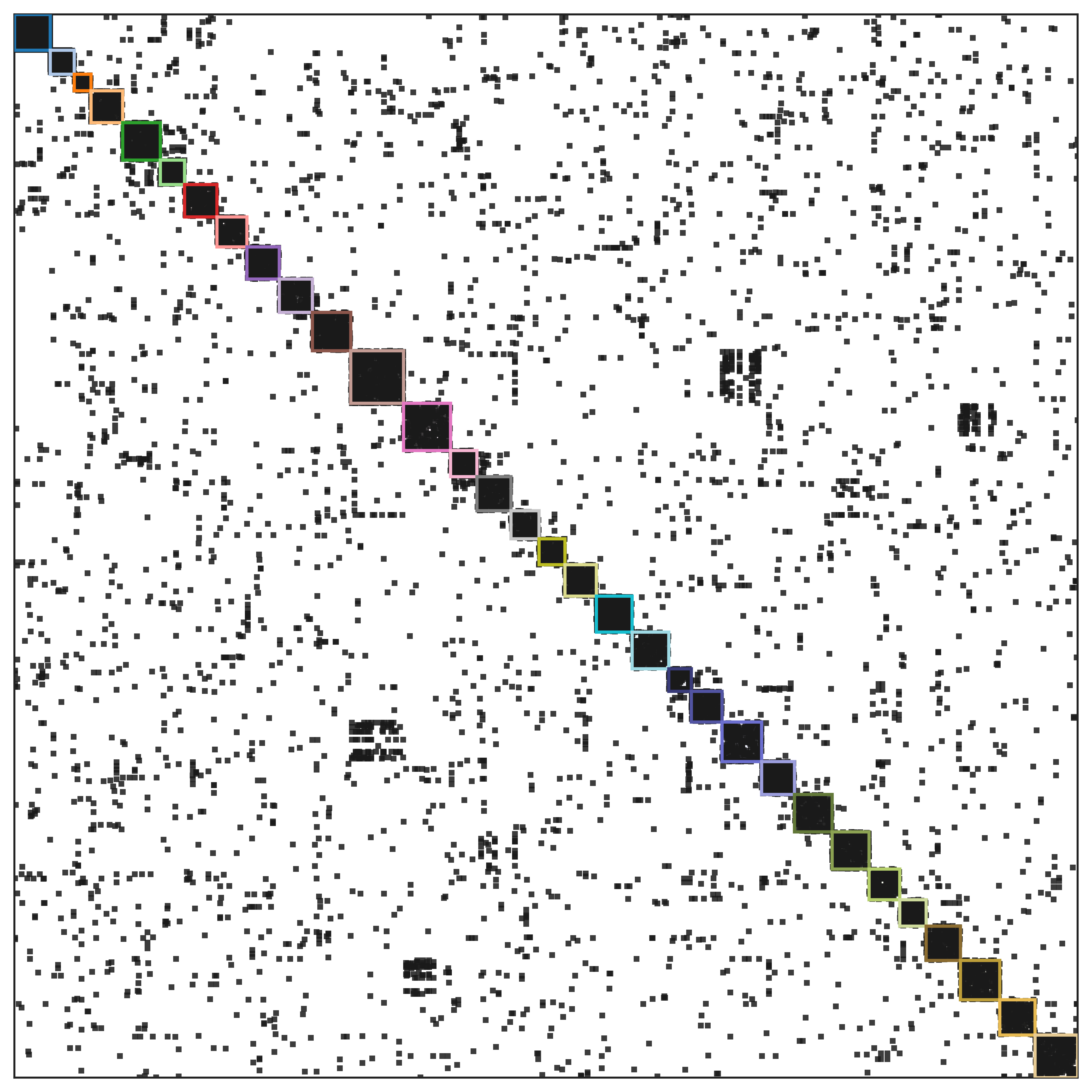}
    \caption{$d_{\mathrm{pre}}{=}32$}
    \label{fig:adj-grqc-k32}
  \end{subfigure}
   \begin{subfigure}[t]{0.19\linewidth}
    \centering
    \includegraphics[width=0.9\linewidth]{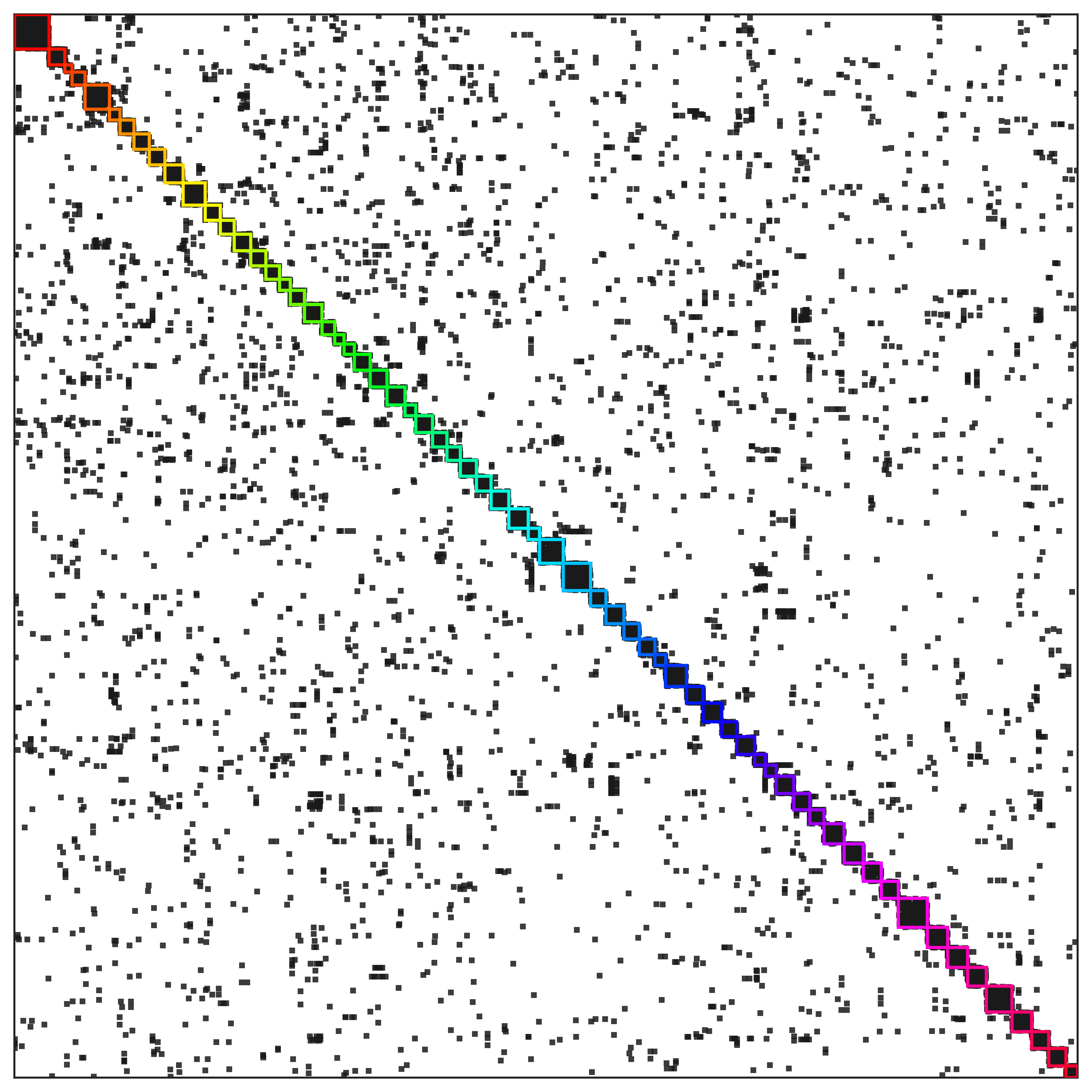}
    \caption{$d_{\mathrm{pre}}{=}64$}
    \label{fig:adj-grqc-k64}
  \end{subfigure}
  \caption{Adjacency matrices reordered by spectral assignment
$c_i(d_{\mathrm{pre}})=\arg\max_{j\le d_{\mathrm{pre}}} U_{ij}^2$ for
$d_{\mathrm{pre}}\in\{4,8,16,32,64\}$. Panels come from a single fitted
\textsc{Spectra} kernel via the rank-$d_{\mathrm{pre}}$ prefix.}
  \label{fig:adj-grqc-prefix-grid}
\end{figure}

\textbf{Visualizing the prefix family.}
Figure~\ref{fig:adj-grqc-prefix-grid} visualizes spectral prefixes
$d_{\mathrm{pre}}\in\{4,8,16,32,64\}$ from a \emph{single}
\textsc{Spectra} fit on \textsl{grqc} with rank cap $r=128$ and target
$d_{\mathrm{spec}}^\star=64$. Each panel uses the rank-$d_{\mathrm{pre}}$
prefix
$
    K_{d_{\mathrm{pre}}}
    =
    U_{1:d_{\mathrm{pre}}}
    \Lambda_{1:d_{\mathrm{pre}}}
    U_{1:d_{\mathrm{pre}}}^{\top},
$
which is the optimal rank-$d_{\mathrm{pre}}$ PSD approximation of the fitted
kernel in Frobenius norm by
Theorem~\ref{thm:optimal-spectral-prefix}. Nodes are reordered by the dominant
retained spectral coordinate,
$
    c_i(d_{\mathrm{pre}})
    =
    \arg\max_{1\le j\le d_{\mathrm{pre}}} U_{ij}^{2},
$
and blocks are shaded by this assignment. This is a standard spectral way to
read block structure from leading eigenspaces
\citep{Ng2001OnSC,vonluxburg2007tutorialspectralclustering}, but here the
eigenspaces come from the learned trace-normalized kernel rather than directly
from the adjacency. As $d_{\mathrm{pre}}$ increases, the reordered adjacency resolves progressively
finer structure: broad macro-groups at $d_{\mathrm{pre}}=4$ split into smaller,
denser blocks by $d_{\mathrm{pre}}=64$, while off-diagonal density decreases.
No panel is a refit. The role of $\eta$ is to set the spectral operating point
\emph{before} prefix extraction: targeting $d_{\mathrm{spec}}^\star$ determines
how spectral mass is distributed across modes, and hence how far the prefix
family adds meaningful structure. This gives a graph analogue of
Matryoshka-style multi-resolution representations
\citep{kusupati2024matryoshkarepresentationlearning}, but with alignment and
optimality following from Eckart--Young--Mirsky for a single fitted kernel
rather than from a multi-scale training loss.

\textbf{Overparameterized target-$d_{\mathrm{spec}}^\star$ acquisition.}
We test whether overparameterization, used through spectral
control, finds a better-calibrated solution than the rank-cap-only
practice. We sweep target effective ranks
$d_{\mathrm{spec}}^\star \in \{16, 32, 64\}$ and rank caps
$r \in \{d_{\mathrm{spec}}^\star, 64, 128, 256\}$ on \textsl{grqc}
and \textsl{bio-grid-human}, with $n=10$ seeds per cell. Each cell
trains two models on identical data: a Phase-0 anchor at $\eta=0$
($d_{\mathrm{spec}}$ saturates near $r$) and a Phase-2 retrain at
$\eta^\star$ obtained by bidirectional bisection of fully-trained
probes ($\tau=0.05$). Metrics target the optimization objective:
test log-likelihood $\mathrm{TLL}$ and the train--test
generalization gap $\mathrm{GenGap} = \mathrm{trNLL} + \mathrm{TLL}$
(lower is better). To assess whether observed
differences between the two conditions reflect a systematic effect, we report paired Wilcoxon
signed-rank tests on the per-seed differences, paired because
the two conditions in each cell share data, negatives, and seed,
and signed-rank rather than $t$-test because paired-difference
distributions across seeds are heavy-tailed and not well-modeled
as Gaussian. \textit{Comparison A} (within-cell, isolating $\eta$ at fixed $r$):
without spectral control, overparameterization is
calibration-disastrous, across 240 paired comparisons, the
retrain improves GenGap ($p < 10^{-30}$) and TLL ($p < 10^{-25}$),
while the anchor fits training strictly harder (lower
$\mathrm{trNLL}$, $p < 10^{-25}$). \textit{Comparison B} (matched
$d_{\mathrm{spec}}^\star$, against the rank-cap-only baseline at
$r = d_{\mathrm{spec}}^\star$): $\eta$-targeted overparameterization
finds a calibration-superior minimum at matched effective capacity, across 180 paired comparisons, both GenGap ($p \approx 10^{-11}$)
and TLL ($p \approx 10^{-12}$) favor the retrain, with larger
effect on \textsl{bio-grid-human} (each $p < 10^{-10}$) than on
the more-saturated \textsl{grqc} (each $p \approx 0.015$). Since
both conditions reach the same realized $d_{\mathrm{spec}}$, the
gain comes not from additional representational capacity but from
the optimizer landscape that the larger parameterization opens up, a setting consistent with the broader literature on
overparameterized models that generalize well when their effective
complexity is controlled~\citep{belkin2019reconciling,bartlett2020benign}.

\section{Conclusion \& Limitations}

We introduced \textsc{Spectra}, which treats capacity in latent graph models as a property of the
fitted representation rather than as a nominal rank chosen before training. Its key
object is the trace-normalized learned PSD kernel: the Shannon effective rank
$d_{\mathrm{spec}}$ gives a smooth realized-dimension coordinate, shaped by an entropy
weight $\eta$ and targeted by bisection. The framework provides identifiability of
active spectral modes, local $C^1$ regularity away from structural events, and
Eckart--Young--Mirsky-optimal spectral prefixes. Empirically, \textsc{Spectra} is
competitive with strong link-prediction baselines at matched parameter count, remains
competitive at substantially smaller realized dimensions, reveals saturated and
rank-cap-binding regimes through capacity frontiers, and yields aligned
Matryoshka-style prefix views without a multi-scale objective. Beyond link prediction, \textsc{Spectra} suggests a broader use for comparing ensembles
of networks through realized representational complexity. For example,
$d_{\mathrm{spec}}$ could be used to ask whether village-level social networks from
field experiments in Honduras \citep{airoldi2024induction}, India
\citep{alexander2022algorithms}, or Malawi \citep{beaman2021network}; school friendship
networks from Add Health \citep{harris2019cohort,jeon2015adolescents} or school
intervention studies \citep{paluck2016changing}; protein interaction networks such as
those studied in lethality-centrality work \citep{jeong2001lethality}; or infrastructure
networks such as city electrical grids \citep{PAGANI20132688} exhibit comparable structural complexity. In
such settings, $d_{\mathrm{spec}}$ could serve as an outcome variable for studying how
network complexity varies with intervention context, school environment, biological
organization, or urban scale.

The main limitation is geometric: we work with Euclidean latent spaces, where capacity is
read from an $O(r)$-invariant Gram spectrum. Simplex-valued and non-Euclidean latent
models require the corresponding invariant kernel, e.g., a log-ratio Gram under
Aitchison geometry for compositional memberships
\citep{nakis2026aitchisonembeddingslearningcompositional} or a geometry-native kernel
for hyperbolic embeddings. Cross-network comparisons would also require common
preprocessing and fitting protocols, with controls for graph size and density. We leave
these extensions to future work.

\section*{Acknowledgments}

The authors gratefully acknowledge support from the NOMIS Foundation, the Stavros Niarchos Foundation, and the National Institutes of Health (R01AG081814).

\bibliographystyle{plainnat}
\bibliography{bibliography}
\clearpage
\appendix

\begin{table}[h]
\centering
\caption{\textbf{Notation used throughout the paper.} Symbols appear in
the order they are introduced. The two effective-dimension symbols
$d_{\mathrm{spec}}$ and $d_{\mathrm{pre}}$ are conceptually distinct:
$d_{\mathrm{spec}}$ is the continuous Shannon effective rank of the
fitted kernel (controlled at training time by $\eta$), while
$d_{\mathrm{pre}}$ is the integer prefix size chosen post-hoc when
extracting a rank-$d_{\mathrm{pre}}$ prefix of the fitted kernel.}
\label{tab:notation}
\renewcommand{\arraystretch}{1.25}
\resizebox{0.65\textwidth}{!}{
\begin{tabular}{l p{0.72\linewidth}}
\toprule
\textbf{Symbol} & \textbf{Meaning} \\
\midrule
\multicolumn{2}{l}{\emph{Graph and observations}} \\
\midrule
$\mathcal{G}=(V,E)$ & Simple undirected graph with node set $V$ and edge set $E$. \\
$N$               & Number of nodes, $N=|V|$. \\
$\bm{Y}\in\{0,1\}^{N\times N}$ & Symmetric adjacency matrix, $\bm{Y}_{ii}=0$. \\
\midrule
\multicolumn{2}{l}{\emph{Latent representation}} \\
\midrule
$L\in\mathbb{R}^{N\times r}$ & Latent factor; rank cap $r$. \\
$r$               & Parameterization ceiling on $\operatorname{rank}(L)$. \\
$K(L)\in\mathbb{R}^{N\times N}$ & Trace-normalized PSD kernel,
$K(L)=N\,LL^\top/\operatorname{tr}(LL^\top)$. \\
$\bm{a}\in\mathbb{R}^N$ & Node-specific log-odds offsets. \\
$\beta>0$         & Global slope on the latent affinity. \\
$\phi(t)$       & Sigmoid, $\phi(t)=(1+e^{-t})^{-1}$. \\
\midrule
\multicolumn{2}{l}{\emph{Spectrum and effective dimension}} \\
\midrule
$\lambda_j(K)$    & The $j$-th eigenvalue of $K$, sorted descending. \\
$u_j$             & Orthonormal eigenvector associated with $\lambda_j(K)$. \\
$U,\Lambda$       & Orthonormal eigenvector matrix and diagonal eigenvalue
matrix in $K=U\Lambda U^\top$. \\
$p_i(K)$          & Spectral occupancy, $p_i(K)=\lambda_i(K)/N$. \\
$p(K)$            & Spectral occupancy distribution
$(p_1(K),\ldots,p_N(K))$ on the $(N{-}1)$-simplex. \\
$H(p)$            & Shannon entropy of $p$,
$H(p)=-\sum_i p_i\log p_i$. \\
$d_{\mathrm{spec}}(K)$ & Effective spectral dimension,
$d_{\mathrm{spec}}(K)=\exp(H(p(K)))$. Continuous; ranges over
$[1,\operatorname{rank}(K)]$. \\
$d_{\mathrm{spec}}(\eta)$ & Realized $d_{\mathrm{spec}}$ along the
$\eta$-controlled solution branch,
$d_{\mathrm{spec}}(\eta)=d_{\mathrm{spec}}(K(L^\star(\eta)))$. \\
$d_{\mathrm{spec}}^\star$ & Target value of $d_{\mathrm{spec}}$ specified
by the user; $\eta$ is calibrated so that
$d_{\mathrm{spec}}(\eta)\approx d_{\mathrm{spec}}^\star$. \\
$\mathrm{PR}(K)$  & Participation ratio, $\mathrm{PR}(K)=1/\sum_i p_i(K)^2$. \\
$k_\tau(K)$       & Thresholded active rank,
$k_\tau(K)=|\{i:\lambda_i(K)\geq \tau\,\lambda_1(K)\}|$. \\
\midrule
\multicolumn{2}{l}{\emph{Spectral prefixes}} \\
\midrule
$d_{\mathrm{pre}}$ & Integer prefix size used for prefix-based
visualization and analysis; $1\leq d_{\mathrm{pre}}\leq r$. \\
$K_{d_{\mathrm{pre}}}$ & Rank-$d_{\mathrm{pre}}$ spectral prefix of $K$,
$K_{d_{\mathrm{pre}}}=U_{1:d_{\mathrm{pre}}}\Lambda_{1:d_{\mathrm{pre}}}
U_{1:d_{\mathrm{pre}}}^\top$. EYM-optimal among rank-$d_{\mathrm{pre}}$
PSD approximations of $K$ (Thm.~\ref{thm:optimal-spectral-prefix}). \\
$\pi_{ij}$ & Soft membership of node $i$ in prefix mode $j$,
$\pi_{ij}\propto q_j[i]^2$, normalized to a row-simplex. \\
\midrule
\multicolumn{2}{l}{\emph{Training objective and capacity control}} \\
\midrule
$\eta\in\mathbb{R}$ & Spectral-entropy weight (Lagrange-style coefficient)
controlling realized $d_{\mathrm{spec}}$. \\
$\ell(L,\bm{a},\beta)$ & Sampled link-prediction negative log-likelihood. \\
$R(L,\bm{a},\beta)$ & Quadratic regularization on parameters. \\
$F_\eta(L,\bm{a},\beta)$ & Entropy-regularized objective,
$F_\eta=\ell-\eta\log d_{\mathrm{spec}}(K(L))+R$. \\
$L^\star(\eta)$ & Local minimizer of $F_\eta$ along a fitted branch. \\
$K^\star(\eta)$ & Trace-normalized kernel at $L^\star(\eta)$. \\
$\widetilde{\Theta}$ & Gauge-fixed slice of the parameter space used to
state second-order conditions. \\
\midrule
\multicolumn{2}{l}{\emph{Calibration of $\eta$}} \\
\midrule
$\tau$ & Relative tolerance of the bisection stopping criterion,
$|d_{\mathrm{spec}}(\eta)-d_{\mathrm{spec}}^\star|/d_{\mathrm{spec}}^\star
\leq\tau$. \\
$\eta^\star$ & Calibrated entropy weight selected by bisection. \\
$[\eta_{\min},\eta_{\max}]$ & Bracket interval used in bisection;
$W=\eta_{\max}-\eta_{\min}$. \\
$\delta$ & Target $\eta$-precision in the complexity statement. \\
\midrule
\multicolumn{2}{l}{\emph{Evaluation}} \\
\midrule
$d$ & Nominal embedding dimension at which baselines are trained
(matched to \textsc{Spectra}'s rank cap $r=d$ in Table~\ref{tab:auc_roc}). \\
TLL & Test log-likelihood. \\
trNLL & Training negative log-likelihood. \\
GenGap & Generalization gap, $\mathrm{GenGap}=\mathrm{trNLL}+\mathrm{TLL}$. \\
$S$ & Number of random seeds per experimental cell. \\
\bottomrule
\end{tabular}}
\end{table}

\section{Extra Preliminaries}
\label{app:dg-definitions}

We give some basic definitions used in the proofs. We only consider smooth
submanifolds of the Euclidean space.

The definitions in this appendix are standard. See, for example,
~\cite{lee2013smooth} and ~\cite{tu2011manifolds}.

\begin{definition}[Smooth embedded submanifold]
\label{def:smoothManifold}
Let \(M\subset \mathbb R^D\). We say that \(M\) is a smooth embedded
submanifold of dimension \(m\) if, around every point \(p\in M\), there exists
an open set \(W\subset \mathbb R^m\) and a smooth parametrization
\[
    \psi:W\to \mathbb R^D
\]
such that \(\psi(W)\) is a neighborhood of \(p\) in \(M\), the map
\(\psi:W\to \psi(W)\) is one-to-one, with smooth inverse, and the Jacobian
matrix
\(
    J_\psi(x)
\)
has rank \(m\) for every \(x\in W\).
\end{definition}

\begin{definition}[Local chart]
Let \(M\subset \mathbb R^D\) be a smooth embedded submanifold. A local chart
around \(p\in M\) is a smooth parametrization
\[
    \psi:W\subset\mathbb R^m\to M
\]
whose image is a neighborhood of \(p\) in \(M\). If \( p=\psi(x), \)
then \(x\in\mathbb R^m\) is the local coordinate representation of \(p\). Thus, a function \(f:M\to\mathbb R\) can locally be written as the ordinary
Euclidean function
\(
    \widehat f(x):=f(\psi(x)).
\)
\end{definition}

\begin{definition}[Tangent vector]
Let \(M\subset\mathbb R^D\) be a smooth embedded submanifold and let \(p\in M\).
A vector \(v\in\mathbb R^D\) is tangent to \(M\) at \(p\) if there exists a
smooth curve
\[
    \gamma:(-\varepsilon,\varepsilon)\to M
\]
such that
\(
    \gamma(0)=p,
    \ 
    \gamma'(0)=v.
\)
\end{definition}

\begin{definition}[Tangent space]
The tangent space of \(M\) at \(p\), denoted \(T_pM\), is the set of all tangent
vectors to \(M\) at \(p\):
\[
    T_pM
    :=
    \left\{
        \gamma'(0)\mid
        \gamma:(-\varepsilon,\varepsilon)\to M
        \text{ is smooth and }\gamma(0)=p
    \right\}.
\]
It is a vector subspace of the ambient space \(\mathbb R^D\).
\end{definition}

\begin{definition}[Tangent space in local coordinates]
Let
\[
    \psi:W\subset\mathbb R^m\to M
\]
be a local chart and let \(p=\psi(x)\). Then
\(
    T_pM=\operatorname{Im} J_\psi(x).
\)
Equivalently, every tangent vector \(v\in T_pM\) can be written as
\(
    v=J_\psi(x)u
\)
for some \(u\in\mathbb R^m\).
\end{definition}

\begin{definition}[Tangent space of a constraint set]
\label{def:tangentSpace:constraint}
Suppose
\[
    M=\{x\in\mathbb R^D:c(x)=0\},
\]
where \(c:\mathbb R^D\to\mathbb R^q\) is smooth. Suppose also that the
Jacobian matrix
\(
    J_c(p)
\)
has full rank for every \(p\in M\). Then \(M\) is a smooth embedded
submanifold, and its tangent space is
\[
    T_pM=\ker J_c(p).
\]
Equivalently,
\(
    v\in T_pM
    \ \Leftrightarrow\ 
    J_c(p)v=0.
\)
\end{definition}

\section{Proof of Theorem \ref{thm:optimal-spectral-prefix}}
\label{ap:thm:optimal-spectral-prefix}
Since \(K\) is symmetric and PSD, the spectral theorem gives an orthonormal
eigenbasis for its image $\mathrm{Im}(K)
    = \{Kx : x \in \mathbb{R}^N\}$. Thus, we may write
\[
    K=\sum_{j=1}^r \lambda_j u_j u_j^\top,
\]
where \(r=\operatorname{rank}(K)\), the vectors \(u_1,\ldots,u_r\) are orthonormal, and the
nonzero eigenvalues satisfy \(\lambda_j>0\). Equivalently,
\[
    K=U\operatorname{diag}(\lambda_1,\ldots,\lambda_r)U^\top,
\]
where \(U=(u_1,\ldots,u_r)\) and \(U^\top U=I_r\). By the Eckart--Young--Mirsky Theorem \cite{Eckart1936TheAO}, the best rank-\(k\) approximation to \(K\) in Frobenius
norm, among all matrices of rank at most \(k\), is obtained by keeping the \(k\) largest singular
values and their corresponding singular vectors. Since \(K\succeq 0\), its singular values are
exactly its eigenvalues. Therefore the unconstrained best rank-\(k\) approximation is
\[
    K_k=\sum_{j=1}^k \lambda_j u_j u_j^\top .
\]
Next, we want to check that the positive semidefinite constraint does not change the optimizer.
The optimization over all rank-\(k\) matrices is a relaxation of the optimization over rank-\(k\)
positive semidefinite matrices:
\[
    \min_{\operatorname{rank}(M)\leq k}\|K-M\|_F^2
    \leq
    \min_{\substack{M\succeq 0\\ \operatorname{rank}(M)\leq k}}\|K-M\|_F^2.
\]
But the unconstrained minimizer \(K_k\) is itself trivially PSD, because it is a nonnegative
linear combination of rank-one positive semidefinite matrices. Therefore,
it is also optimal among positive semidefinite rank-\(k\) matrices.


It remains to prove the uniqueness statement.  Assume that \(\lambda_k>\lambda_{k+1}\), and let \(M\succeq 0\) with
\(\operatorname{rank}(M)\leq k\) be any minimizer. Write the spectral
decomposition of \(M\) as
\[
    M=\sum_{i=1}^k \mu_i v_i v_i^\top,
\]
where \(\mu_1\geq \cdots \geq \mu_k\geq 0\)
and \(v_1,\ldots,v_k\) are orthonormal. Using the Frobenius inner product,
\[
    \|K-M\|_F^2
    =
    \|K\|_F^2+\|M\|_F^2-2\langle K,M\rangle.
\]
Since \(K\succeq 0\) and \(M\succeq 0\), von Neumann's trace inequality gives
\[
    \langle K,M\rangle
    =
    \operatorname{tr}(KM)
    \leq
    \sum_{i=1}^k \lambda_i \mu_i .
\]
Therefore,
\[
\begin{aligned}
    \|K-M\|_F^2
    &\geq
    \sum_{j=1}^r \lambda_j^2
    +
    \sum_{i=1}^k \mu_i^2
    -
    2\sum_{i=1}^k \lambda_i\mu_i  \\
    &=
    \sum_{i=1}^k(\mu_i-\lambda_i)^2
    +
    \sum_{j=k+1}^r \lambda_j^2 .
\end{aligned}
\]
On the other hand, the truncated matrix \(K_k\) satisfies
\(
    \|K-K_k\|_F^2
    =
    \sum_{j=k+1}^r \lambda_j^2.
\)
Since \(M\) is also assumed to be optimal, we must have
\(
    \|K-M\|_F^2
    =
    \sum_{j=k+1}^r \lambda_j^2.
\)
Comparing this with the lower bound above, we get
\[
    \sum_{i=1}^k(\mu_i-\lambda_i)^2=0.
\]
Hence
\(
    \mu_i=\lambda_i,
    \  i=1,\ldots,k.
\)

Moreover, equality must also hold in von Neumann's trace inequality. The
equality implies that the image
of \(M\) must be a top-\(k\) eigenspace of \(K\). Because \(\lambda_k>\lambda_{k+1}\), this top-\(k\) eigenspace is uniquely
determined, namely
\[
    \operatorname{span}\{u_1,\ldots,u_k\}.
\]
Therefore \(M\) has eigenvalues \(\lambda_1,\ldots,\lambda_k\) on this uniquely
determined subspace and is zero on its orthogonal complement. Consequently,
\[
    M
    =
    \sum_{j=1}^k \lambda_j u_j u_j^\top
    =
    K_k.
\]
Thus \(K_k\) is the unique minimizer.

\section{Proof of Theorem \ref{thm:identifiability-spectral-modes}}
\label{ap:thm:identifiability-spectral-modes}
Since \(K\) is symmetric and PSD, the spectral theorem gives an orthonormal
basis of eigenvectors for its image. Thus \(K\) can be written as
\[
    K
    =
    \sum_{j=1}^r \lambda_j u_j u_j^\top,
\]
where \(u_1,\ldots,u_r\) are orthonormal eigenvectors and
\(\lambda_1,\ldots,\lambda_r\) are the positive eigenvalues. 

For any eigenvalue \(\lambda_j\), the associated eigenspace is
\[
    E_{\lambda_j}
    =
    \{v:Kv=\lambda_j v\}.
\]
Since \(E_{\lambda_j}\) is one-dimensional, any unit eigenvector \(u'_j\) associated with
\(\lambda_j\) satisfies
\(
    u'_j = c_j u_j
\)
for some scalar \(c_j\). Because the vectors in the eigendecompositioon are orthonormal it holds that \(\|u'_j\|_2=\|u_j\|_2=1\), we have \(|c_j|=1\), and hence
\(c_j\in\{\pm1\}\). Therefore \begin{equation}\label{eq:pm_u}
u'_j=\pm u_j.
\end{equation}

Now suppose
\[
    K
    =
    U\operatorname{diag}(\lambda_1,\ldots,\lambda_r)U^\top
    =
    U'\operatorname{diag}(\lambda'_1,\ldots,\lambda'_r){U'}^\top
\]
are two spectral decompositions, with \(U^\top U={U'}^\top U'=I_r\).

For each \(\ell\in\{1,\ldots,r\}\), the vector \(u'_\ell\), the \(\ell\)-th column of \(U'\), is a unit
eigenvector of \(K\) with eigenvalue \(\lambda'_\ell\). Since the positive eigenvalues of \(K\) are
exactly \(\lambda_1,\ldots,\lambda_r\), and these eigenvalues are pairwise distinct, there exists a
unique index \(\sigma(\ell)\in\{1,\ldots,r\}\) such that
\[
    \lambda'_\ell=\lambda_{\sigma(\ell)}.
\]
The map \(\sigma:\{1,\ldots,r\}\to\{1,\ldots,r\}\) is a permutation.

Since both \(u'_\ell\) and \(u_{\sigma(\ell)}\) are unit eigenvectors associated with the same
one-dimensional eigenspace \(E_{\lambda_{\sigma(\ell)}}\), using \eqref{eq:pm_u} there exists a sign
\(s_\ell\in\{\pm1\}\) such that
\[
    u'_\ell=s_\ell u_{\sigma(\ell)}.
\]

Let \(\Pi\) be the permutation matrix whose \(\ell\)-th column is \(e_{\sigma(\ell)}\), and let
\[
    S=\operatorname{diag}(s_1,\ldots,s_r).
\]
Then the column-wise identities above imply
\[
    U'=U\Pi S.
\]
Moreover, the identities \(\lambda'_\ell=\lambda_{\sigma(\ell)}\) imply
\(
    \lambda'=\Pi^\top \lambda
\) and the proof is complete.

\section{Smoothness of the restricted objective, $F_\eta|_{\widetilde{\Theta}}$.}
\label{app:lemma}

\begin{lemma}
\label{lem:objective:smoothness}
For any $\eta$, the restricted objective
$F_\eta|_{\widetilde{\Theta}}$ is $\mathcal{C}^2$.
\end{lemma}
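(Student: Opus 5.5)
The plan is to split $F_\eta = \ell - \eta\log d_{\mathrm{spec}}(K(L)) + R$ and show that each piece, restricted to $\widetilde{\Theta}$, is $\mathcal{C}^2$. Since $\widetilde{\Theta}$ is a smooth embedded submanifold, it is enough to show that each piece is the restriction of a $\mathcal{C}^2$ function defined on an open neighborhood of $\widetilde{\Theta}$ in the ambient space. Composing with a smooth local chart $\psi$ (Definition~\ref{def:smoothManifold}) then gives a $\mathcal{C}^2$ local coordinate representation, which is what restricted $\mathcal{C}^2$ means.

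\emph{The regularizer and the loss.} The regularizer $R$ is quadratic in $(L,a,\beta)$, hence polynomial and $\mathcal{C}^\infty$.

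For the loss, take an open neighborhood $\{L\neq 0,\ \beta>0\}$. On it, $\operatorname{tr}(LL^\top)=\|L\|_F^2>0$, so every entry of $K(L)=N\,LL^\top/\|L\|_F^2$ is a rational function with nonvanishing denominator, hence $\mathcal{C}^\infty$. Each summand of $\ell$ is a term $\log\phi(t)$ or $\log(1-\phi(t))$ with $t=a_i+a_j+\beta K_{ij}(L)$, and $\log\phi$ and $\log(1-\phi)$ are smooth on $\mathbb{R}$. The loss is a finite (sampled) sum of such terms, so it is $\mathcal{C}^\infty$.

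\emph{The entropy term.} This is the step I expect to be the main obstacle. The entropy $H(p)=-\sum_i p_i\log p_i$ is not differentiable where some $p_i=0$, and in general eigenvalues are not smooth functions of a matrix at crossings. The slice is built to avoid both problems.

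On $\widetilde{\Theta}$ we have $L=[L_1\mid 0]$ with $L_1$ of full rank $s$ and $L_1^\top L_1=\operatorname{diag}(\sigma_1^2,\ldots,\sigma_s^2)$. The columns of $L_1$ are therefore orthogonal, each $\sigma_j^2>0$, and the nonzero eigenvalues of $LL^\top$ are exactly $\sigma_j^2=\|L_1e_j\|^2$, attained at eigenvectors $L_1e_j/\sigma_j$. Hence the positive occupancies are
\[
p_j=\frac{\|L_1e_j\|^2}{\sum_{l=1}^s\|L_1e_l\|^2},\qquad j=1,\ldots,s,
\]
and all other occupancies are zero. These contribute nothing to $H$ because the sum in Definition~\ref{def:effective-spectral-dimension} ranges only over $p_i>0$.

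Consider the function
\[
G(L_1)=-\sum_{j=1}^s p_j(L_1)\log p_j(L_1),
\]
defined on the open set $\{L_1: \|L_1e_j\|\neq 0 \text{ for all } j\}$. Each $p_j$ is smooth and strictly positive there, so $G$ is $\mathcal{C}^\infty$, and $G$ agrees with $\log d_{\mathrm{spec}}(K(L))$ on $\widetilde{\Theta}$. No ordering of eigenvalues or eigenvector selection is needed, so possible crossings among the $\sigma_j$ cause no harm for smoothness. Assumption~\ref{ass:A2} is not required for this lemma; it matters only later, for the implicit-function argument.

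\emph{Conclusion.} Each of the three pieces is the restriction to $\widetilde{\Theta}$ of a $\mathcal{C}^\infty$ ambient function, so $F_\eta|_{\widetilde{\Theta}}$ is $\mathcal{C}^\infty$, and in particular $\mathcal{C}^2$, for every $\eta\in\mathbb{R}$.

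A remaining routine point is that $\widetilde{\Theta}$ really is embedded. One can write it as a constraint set by requiring the off-diagonal entries of $L_1^\top L_1$ to vanish, within the open set where $L_1$ has full rank and $\beta>0$. One then checks that the Jacobian of these constraints has full rank there, which holds because the columns are nonzero and orthogonal, and applies Definition~\ref{def:tangentSpace:constraint}.
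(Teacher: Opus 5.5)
Your proof is correct and follows essentially the same route as the paper. It splits off $\ell$ and $R$ as smooth on $\{L\neq 0\}$, and it uses orthogonality of the columns of $L_1$ on the slice to write the occupancies as $p_j=\|L_1e_j\|^2/\|L_1\|_F^2>0$, so that the entropy term becomes an explicit smooth function of $L_1$. Your ambient extension $G$ on the open set of nonzero columns, and your remark that eigenvalue ordering and Assumption~\ref{ass:A2} play no role, just make precise the paper's ``smooth in $L_1$'' step.
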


\begin{proof}
Recall that
\[
    F_\eta(\theta)
    =
    \Phi(\theta)-\eta h(\theta),
    \qquad
    \Phi(\theta):=\ell(\theta)+R(\theta),
    \qquad
    h(\theta):=\log d_{\mathrm{spec}}(K(L)).
\]

The logistic negative log-likelihood $\ell$ is $\mathcal{C}^\infty$ in $\theta$ on
$\{L \ne 0\}$ as it is a finite sum of terms $\log \phi(\pm(a_i + a_j + \beta K_{ij}(L)))$,
where $\phi$ and $\log\phi$ are $\mathcal{C}^\infty$ on $\R$, and
$K(L) = nLL^\top / \|L\|_F^2$ is rational in $L$ with denominator nonzero on
$\{L \ne 0\}$. The regularizer $R$ is a quadratic polynomial and
therefore $\mathcal{C}^\infty$. It remains to show that
$h|_{\widetilde{\Theta}}$ is smooth.

Let
\[
    \theta=([L_1\mid 0],a,\beta)\in\widetilde{\Theta},
\]
and write the columns of $L_1$ as
\(
    L_1=(\ell_1,\ldots,\ell_s).
\)
Since $L_1^\top L_1$ is diagonal and $L_1$ has full column rank, define
\[
    d_i:=\|\ell_i\|_2^2>0,
    \qquad
    Q:=\sum_{j=1}^s d_j=\|L_1\|_F^2.
\]
Then
\(
    L_1^\top L_1=\operatorname{diag}(d_1,\ldots,d_s).
\)
The nonzero eigenvalues of $L_1L_1^\top$ are the same as the eigenvalues of
$L_1^\top L_1$. Therefore the nonzero eigenvalues of
\[
    K(L)
    =
    N\frac{L_1L_1^\top}{\|L_1\|_F^2}
\]
are
\[
    \lambda_i(K(L))
    =
    N\frac{d_i}{Q},
    \qquad i=1,\ldots,s.
\]
Hence the active spectral occupancies are
\[
    p_i(L)
    =
    \frac{\lambda_i(K(L))}{N}
    =
    \frac{d_i}{Q},
    \qquad i=1,\ldots,s.
\]
Thus, on $\widetilde{\Theta}$,
\[
    h(\theta)
    =
    \log d_{\mathrm{spec}}(K(L))
    =
    -\sum_{i=1}^s
    \frac{d_i}{Q}
    \log\left(\frac{d_i}{Q}\right).
\]
Each $d_i$ is a polynomial function of the entries of $L_1$, and on
$\widetilde{\Theta}$ we have $d_i>0$ and $Q>0$. Therefore the expression above
is smooth in $L_1$. It does not depend on $a$ or $\beta$. Hence
$h|_{\widetilde{\Theta}}$ is smooth.

Consequently,
\[
    F_\eta|_{\widetilde{\Theta}}
    =
    \Phi|_{\widetilde{\Theta}}
    -
    \eta h|_{\widetilde{\Theta}}
\]
is $\mathcal{C}^2$ for every fixed $\eta$. This proves the lemma.
\end{proof}

\section{Proof of Theorem \ref{thm:frontier-regularity}}
\label{app:proof-regularity}

\begin{figure}[t]
  \centering
  \begin{subfigure}[b]{0.48\textwidth}
    \centering
    \includegraphics[width=\linewidth]{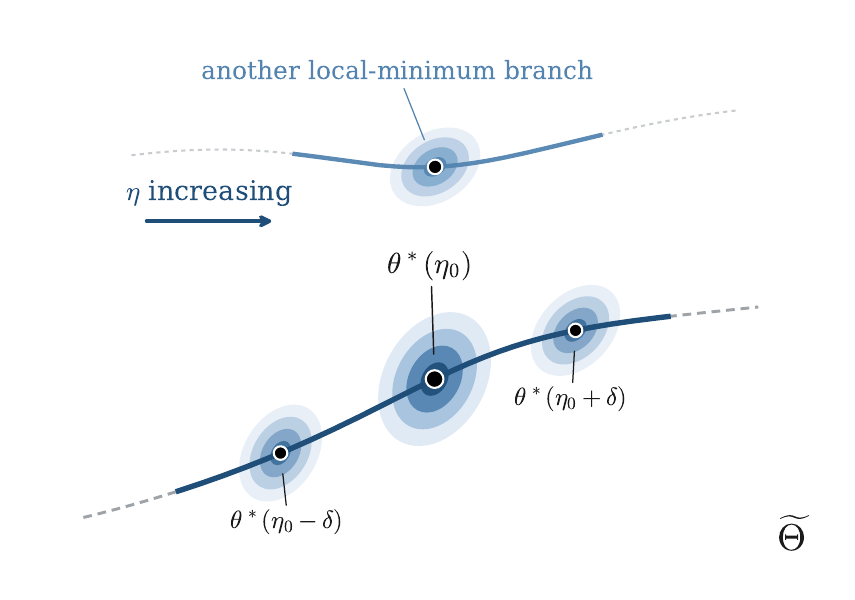}
    \caption{}
    \label{fig:thm6-a}
  \end{subfigure}\hfill
  \begin{subfigure}[b]{0.48\textwidth}
    \centering
    \includegraphics[width=\linewidth]{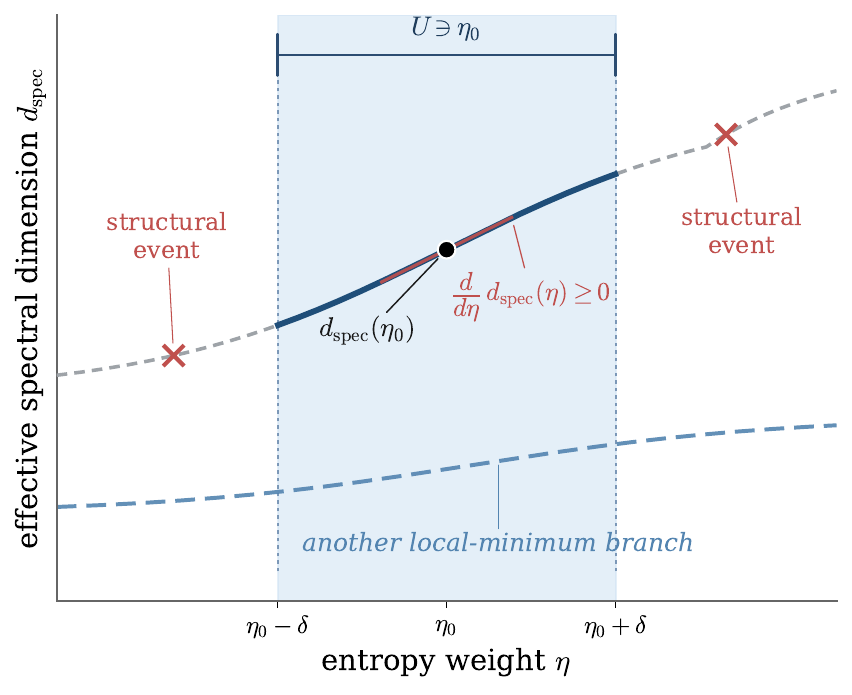}
    \caption{}
    \label{fig:thm6-b}
  \end{subfigure}
  \caption{
    \small Illustration of Theorem~\ref{thm:frontier-regularity}. \textbf{(a)} Solution branches in $\widetilde{\Theta}$.\ Near a reference
      point $\theta^*(\eta_0)$ satisfying the Assumptions~\ref{ass:A2} and \ref{ass:A3}, the optimum
      traces a smooth curve as $\eta$ varies, with each $\theta^*(\eta)$
      a strict local minimum of $F_\eta$ on its slice. $F_\eta$ may have additional local
      minima, each generating its own local branch under the same
      conditions. \textbf{(b)} Capacity profiles $d_{\mathrm{spec}}(\eta)$ along the
      branches. The tracked branch is smooth and nondecreasing
      on $U$. Beyond $U$, structural events on the tracked branch can interrupt the local guarantee.}
  \label{fig:thm6}
\end{figure}

First, we show the set $\widetilde{\Theta}$ is a smooth submanifold and define the tangent space $T_\theta \widetilde{\Theta}$.
Then we show the three parts of Theorem \ref{thm:frontier-regularity}.

\paragraph{Smooth Submanifold and Tangent Space.}
Recall the definition of the set $\widetilde{\Theta}$, i.e.,
\[
    \widetilde{\Theta}
    :=
    \left\{
        ([L_1\mid 0],a,\beta):
        L_1\in\mathbb R^{N\times s},
        \ \operatorname{rank}(L_1)=s,
        \ L_1^\top L_1 \text{ is diagonal},
        \ a\in\mathbb R^N,
        \ \beta>0
    \right\}.
\]
The set \(\widetilde{\Theta}\) is a smooth submanifold as per Definition \ref{def:smoothManifold}: indeed, the
conditions \(\operatorname{rank}(L_1)=s\) and \(\beta>0\) are open conditions,
while the constraints \(L_2=0\) and
\(\operatorname{offdiag}(L_1^\top L_1)=0\) form a smooth submersion on the
full-column-rank set; hence the claim follows from the regular level-set
theorem (see Chapter 8, \cite{lee2013smooth}).

Fix a point $\theta=([L_1\mid 0], \alpha, \beta)\in\widetilde{\Theta}$.
Next we show the tangent space of $\widetilde{\Theta}$ is
\[
    T_\theta \widetilde{\Theta}
        \coloneqq \left\{
            ([\dot L_1\mid 0],\dot a,\dot\beta):
            \operatorname{offdiag}
            \left(
                L_1^\top \dot L_1+\dot L_1^\top L_1
            \right)=0
        \right\}
    \,,
\]
using Definition \ref{def:tangentSpace:constraint}.
We write a general factor matrix \(L\in\mathbb R^{N\times r}\) as $L=[L_1\mid L_2]$ where $L_1\in\mathbb R^{N\times s}$, $L_2\in\mathbb R^{N\times(r-s)}$.
Consider the open set
$
    \mathcal U
    :=
    \left\{
        (L_1,L_2,a,\beta):
        \operatorname{rank}(L_1)=s,\ \beta>0
    \right\}
$.
Then $\widetilde\Theta \subseteq \mathcal U$ and is defined by the
smooth equality constraints
$L_2=0$ and $\operatorname{offdiag}(L_1^\top L_1)=0$.
Equivalently, define 
$c:\mathcal U \to \mathbb R^{N\times(r-s)} \times \mathbb S_0^s$ where 
$\mathbb S_0^s := \{ B\in\mathbb R^{s\times s} : B^\top=B,\ B_{ii}=0\text{ for all }i\}$ by 
\[
    c(L_1,L_2,a,\beta)
    :=
    \left(
        L_2,\,
        \operatorname{offdiag}(L_1^\top L_1)
    \right)
    \,.
\]
Then $\widetilde\Theta=c^{-1}(0)$.
After identifying matrix spaces with Euclidean spaces by vectorization, this is exactly of the form \(M=\{x:c(x)=0\}\) in Definition
\(\ref{def:tangentSpace:constraint}\).
Thus, we know $T_\theta \widetilde{\Theta} = \ker J_c(\theta)$ which we calculate next.

Let $\dot\theta=([\dot L_1\mid \dot L_2],\dot a,\dot\beta)$ be an arbitrary direction in the ambient space. The first component of \(c\) is the map $(L_1,L_2,a,\beta)\mapsto L_2$, which is linear in \(L_2\) and independent of \(L_1,a,\beta\). Therefore the Jacobian of this component maps $([\dot L_1\mid \dot L_2],\dot a,\dot\beta)\mapsto\dot L_2$.

It remains to compute the Jacobian of the second component, $L_1\mapsto \operatorname{offdiag}(L_1^\top L_1)$.
Write $L_1=(\ell_1,\ldots,\ell_s)$ and $\dot L_1=(\dot\ell_1,\ldots,\dot\ell_s)$, where \(\ell_i,\dot\ell_i\in\mathbb R^n\). For \(i\neq j\), define the scalar constraint
\[
    c_{ij}(L_1)
    :=
    (L_1^\top L_1)_{ij}
    =
    \ell_i^\top \ell_j
    =
    \sum_{k=1}^n (L_1)_{ki}(L_1)_{kj}
    \,.
\]
We identify \(L_1\) with the vector \(\operatorname{vec}(L_1)\). Hence
\(J_{c_{ij}}(L_1)\) denotes the ordinary Jacobian row of the scalar function
\(c_{ij}\) with respect to the entries of \(L_1\). For \(a\in[N]\) and
\(b\in[s]\), we have
\[
    \frac{\partial c_{ij}}{\partial (L_1)_{ab}}(L_1)
    =
    \mathbf 1_{\{b=i\}}(L_1)_{aj}
    +
    \mathbf 1_{\{b=j\}}(L_1)_{ai}
    \,.
\]
Therefore, applying this Jacobian row to the direction
\(\operatorname{vec}(\dot L_1)\), we obtain
\begin{align*}
    J_{c_{ij}}(L_1)\operatorname{vec}(\dot L_1)
    &=
    \sum_{a=1}^n\sum_{b=1}^s
    \frac{\partial c_{ij}}{\partial (L_1)_{ab}}(L_1)
    (\dot L_1)_{ab}                                      \\
    &=
    \sum_{a=1}^n
    (L_1)_{aj}(\dot L_1)_{ai}
    +
    \sum_{a=1}^n
    (L_1)_{ai}(\dot L_1)_{aj}                            \\
    &=
    \ell_j^\top \dot\ell_i+\ell_i^\top\dot\ell_j         \\
    &=
    \bigl(
        L_1^\top \dot L_1+\dot L_1^\top L_1
    \bigr)_{ij}
    \,.
\end{align*}
Collecting these identities over all off-diagonal pairs \(i\neq j\), we get
$J_{\operatorname{offdiag}(L_1^\top L_1)}(L_1) \operatorname{vec}(\dot L_1) = \operatorname{offdiag}
    \bigl(
        L_1^\top \dot L_1+\dot L_1^\top L_1
    \bigr)
$.

Combining the two components of \(c\), the full Jacobian satisfies
\[
    J_c(L_1,L_2,a,\beta)
    \begin{pmatrix}
        \dot L_1\\
        \dot L_2\\
        \dot a\\
        \dot\beta
    \end{pmatrix}
    =
    \left(
        \dot L_2,\,
        \operatorname{offdiag}
        \bigl(
            L_1^\top \dot L_1+\dot L_1^\top L_1
        \bigr)
    \right).
\]


By Definition \ref{def:tangentSpace:constraint}, the tangent space is the kernel of the Jacobian, i.e, all $\dot\theta$ such that $J_c(\theta)\, \dot\theta = 0$.
Therefore, using the formula above, we get the result
\[
    T_\theta\widetilde{\Theta}
    =
    \left\{
        ([\dot L_1\mid 0],\dot a,\dot\beta):
        \operatorname{offdiag}
        \bigl(
            L_1^\top \dot L_1+\dot L_1^\top L_1
        \bigr)=0
    \right\}
    \,.
\]

\paragraph{Claim (i)}
Recall that $\theta^\star\in\widetilde{\Theta}$ satisfies Assumptions \ref{ass:A2} and \ref{ass:A3} and
\[
    F_\eta(\theta)
    =
    \Phi(\theta)-\eta h(\theta),
    \qquad
    \Phi(\theta):=\ell(\theta)+R(\theta),
    \qquad
    h(\theta):=\log d_{\mathrm{spec}}(K(L)).
\]
Since $\widetilde \Theta$ is a smooth manifold as per Definition \ref{def:smoothManifold} there exists a smooth local chart
\(
    \psi:W\subset\mathbb R^m\to\tilde\Theta
\) with
\(
    \psi(0)=\theta^\star.
\)
Define
\(
    \widehat F(x,\eta)
    :=
    F_\eta(\psi(x)),
    \)
where
\(   \widehat h(x)
    :=
    h(\psi(x))
\)
and
\(
    \widehat \Phi(x):=\Phi(\psi(x)).
\)
Then,
\[
    \widehat F(x,\eta)
    =
    \widehat\Phi(x)-\eta\widehat h(x).
\]
By Lemma~\ref{lem:objective:smoothness} in Appendix~\ref{app:lemma}, \(\widehat F\) is \(C^2\) in \((x,\eta)\). Now define $G\colon\R^m\times\R\to\R^m$ as
\[
    G(x,\eta)
    :=
    \nabla_x\widehat F(x,\eta).
\]
By Assumption~\ref{ass:A3}, since $\psi(0)=\theta^\star$, it holds $G(0,\eta_0)=\nabla_x\widehat F(0,\eta_0)=0$.
Moreover, let
\(
    \nabla_xG(0,\eta_0)
    =
    \nabla_x^2\widehat F(0,\eta_0).
\)
By Assumption~\ref{ass:A3} matrix $\nabla_xG(0,\eta_0)$ is positive definite.
In particular, it is invertible. Hence, by the implicit function theorem
\citep[Ch.~5]{lee2013smooth}, there exist an open interval \(U\ni\eta_0\) and a
\(\mathcal{C}^1\) map
\(
    x^\star:U\to W
\)
such that
\(
    x^\star(\eta_0)=0
\)
and
\(
    G(x^\star(\eta),\eta)=0
    \ 
    \text{for every }\eta\in U.
\)
Define
\[
    \theta^\star(\eta):=\psi(x^\star(\eta)).
\]
Then \(\theta^\star:U\to\tilde\Theta\) is \(\mathcal{C}^1\),
\(\theta^\star(\eta_0)=\theta^\star\), and
\begin{align}    
    \nabla_\theta F_\eta(\theta^\star(\eta))
    \big|_{T_{\theta^\star(\eta)}\tilde\Theta}
    =
    0
    \qquad
    \text{for every }\eta\in U.
    \label{eq:restrictedCriticalPointCondition}
\end{align}
This proves the restricted criticality part of claim~(i). It remains to show that these restricted critical points, i.e., the ones satisfying Equation \ref{eq:restrictedCriticalPointCondition}, are strict local
minima. Define 
\[
    H(\eta)
    :=
    \nabla_x^2\widehat F(x^\star(\eta),\eta).
\]
The map \(\eta\mapsto H(\eta)\) is continuous, because \(\widehat F\) is \(\mathcal{C}^2\)
and \(x^\star\) is \(\mathcal{C}^1\). Notice that, for \(\eta=\eta_0\), we have
\(
    H(\eta_0)
    =
    \nabla_x^2\widehat F(0,\eta_0)
    \succ 0
\)
by Assumption~\ref{ass:A3}. Positive definiteness is an open condition.
Therefore, after possibly shrinking \(U\), we have
\(
    H(\eta)\succ 0
    \qquad
    \text{for every }\eta\in U.
\)

Thus for any \(\eta\in U\) the function \(x\mapsto\widehat F(x,\eta)\) satisfies
\[
    \nabla_x\widehat F(x^\star(\eta),\eta)=0
    \qquad
    \nabla_x^2\widehat F(x^\star(\eta),\eta)=H(\eta)\succ 0
    \,.
\]
By the standard second-order sufficient condition for an unconstrained local
minimum \citep[Ch.~2]{nocedal2006numerical}, \(x^\star(\eta)\) is a strict
local minimum of \(x\mapsto \widehat F(x,\eta)\). Since \(\psi\) is a local
coordinate chart for \(\widetilde\Theta\), this is exactly the statement that
\(\theta^\star(\eta)\) is a strict local minimum of \(F_\eta\) restricted to
\(\widetilde\Theta\). Claim~(i) follows.

\paragraph{Claim (ii)}
By claim~(i), the map
\(
    \eta\mapsto\theta^\star(\eta)
\)
is \(\mathcal{C}^1\) and takes values in \(\tilde\Theta\). Therefore we may write
\(
    L^\star(\eta)=[L_1^\star(\eta)\mid 0],
\)
where
\(
    L_1^\star(\eta)
    =
    (\ell_1(\eta),\ldots,\ell_s(\eta))
\)
depends \(\mathcal{C}^1\) on \(\eta\).
Define
\[
    d_i(\eta):=\|\ell_i(\eta)\|_2^2,
    \qquad
    Q(\eta):=\sum_{j=1}^s d_j(\eta).
\]
Since
\(
    d_i(\eta_0)=d_i^\star>0,
\)
continuity implies that, after shrinking \(U\) if necessary,
\(
    d_i(\eta)>0
    \qquad
    \text{for every }i=1,\ldots,s
    \text{ and every }\eta\in U.
\)
Therefore,
\[
    p_i(\eta)
    :=
    \frac{d_i(\eta)}{Q(\eta)}
\]
is \(\mathcal{C}^1\) on \(U\). 
Notice that $p_i(\eta) = \lambda_i(K(L^\star(\eta)))/N$ since $(L^\star)^T L$ is diagonal and ${K(L^\star) = N\cdot L^\star(L^\star)^T / \tr(L^\star(L^\star)^T)}$.\footnote{We repeat this argument in more detail in the proof of Lemma \ref{lem:objective:smoothness}.}
Hence
\[
    \log d_{\mathrm{spec}}(\eta)
    =
    \log d_{\mathrm{spec}}(K(L^\star(\eta)))
    =
    -\sum_{i=1}^s p_i(\eta)\log p_i(\eta).
\]
The function \(x\mapsto x\log x\) is \(\mathcal{C}^\infty\) on \((0,\infty)\), and each
\(p_i(\eta)\) is positive and \(\mathcal{C}^1\) (since it is a polynomial in the entries of $L^\star$ and $L^\star$ is $\mathcal{C}^1$ on $\eta$). Therefore \(\log d_{\mathrm{spec}}(\eta)\) is \(\mathcal{C}^1\).
Since
\(
    d_{\mathrm{spec}}(\eta)=\exp(\log d_{\mathrm{spec}}(\eta)),
\)
the capacity profile \(d_{\mathrm{spec}}\) is also \(\mathcal{C}^1\). This proves Claim~(ii).

\paragraph{Claim (iii)}
We work in the same local coordinates as in the proof of claim~(i). Recall that
\[
    \widehat F(x,\eta)
    =
    \widehat\Phi(x)-\eta\widehat h(x).
\]
The branch satisfies
\[
    \nabla_x\widehat F(x^\star(\eta),\eta)=0
    \qquad
    \text{for every }\eta\in U.
\]
Differentiate this identity with respect to \(\eta\). Here the first term comes
from differentiating the map
\[
    x\mapsto \nabla_x\widehat F(x,\eta)
\]
along the curve \(x^\star(\eta)\), while the second term comes from the
explicit dependence of \(\widehat F\) on \(\eta\). Since
\[
    \partial_\eta\nabla_x\widehat F(x,\eta)
    =
    -\nabla_x\widehat h(x),
\]
we get
\[
    0
    =
    \frac{d}{d\eta}
    \nabla_x\widehat F(x^\star(\eta),\eta)
    =
    H(\eta)\frac{dx^\star}{d\eta}(\eta)
    -
    \nabla_x\widehat h(x^\star(\eta)).
\]
Therefore,
\[
    H(\eta)\frac{dx^\star}{d\eta}(\eta)
    =
    \nabla_x\widehat h(x^\star(\eta)).
\]
By claim~(i), \(H(\eta)\succ 0\) for every \(\eta\in U\). Thus \(H(\eta)\) is
invertible, and
\[
    \frac{dx^\star}{d\eta}(\eta)
    =
    H(\eta)^{-1}
    \nabla_x\widehat h(x^\star(\eta)).
\]
Now
\[
    \log d_{\mathrm{spec}}(\eta)
    =
    h(\theta^\star(\eta))
    =
    \widehat h(x^\star(\eta)).
\]
Hence, by the chain rule,
\[
    \frac{d}{d\eta}\log d_{\mathrm{spec}}(\eta)
    =
    \nabla_x\widehat h(x^\star(\eta))^\top
    \frac{dx^\star}{d\eta}(\eta).
\]
Substituting the expression for \(dx^\star/d\eta\), we obtain
\[
    \frac{d}{d\eta}\log d_{\mathrm{spec}}(\eta)
    =
    \nabla_x\widehat h(x^\star(\eta))^\top
    H(\eta)^{-1}
    \nabla_x\widehat h(x^\star(\eta)).
\]
 Since \(H(\eta)\succ 0\), also
\(H(\eta)^{-1}\succ 0\). Therefore
\[
    \frac{d}{d\eta}\log d_{\mathrm{spec}}(\eta)\ge 0.
\]

\subsection{Structural events along the $\eta$ path}
\label{app:structural-events}

Theorem~\ref{thm:frontier-regularity} guarantees $\mathcal{C}^1$
regularity and local monotonicity of the realized effective
dimension $d_{\mathrm{spec}}(\eta)$ on a neighborhood of any
reference $\eta_0$ where Assumptions~\ref{ass:A2}--\ref{ass:A3}
hold. Three failure modes can interrupt this regularity along an
extended $\eta$ path: an eigenvalue enters or leaves the active
support; two adjacent positive eigenvalues approach equality
(violating Assumption~\ref{ass:A2}); or the restricted Hessian
becomes degenerate (violating Assumption~\ref{ass:A3}).

We monitor the first two directly during the $\eta$ sweep. Support
changes are flagged when $k_\tau(K) =
|\{i : \lambda_i(K) \geq \tau \lambda_1(K)\}|$ at relative-mass
threshold $\tau$ changes between consecutive probes. Near-degeneracy
events are flagged when the minimum gap between adjacent positive
eigenvalues drops below a relative tolerance of the largest
eigenvalue. Hessian degeneracy is not directly observable from the
spectrum and is treated as correlated with the first two.

Figure~\ref{fig:frontier-allfour} shows the $\eta$-path of the
realized effective dimension on four representative datasets, one
from each structural family, at rank caps $r{=}128$ and $r{=}256$.
Markers indicate detected structural events, without distinguishing
type. Three observations are consistent across datasets:
\textbf{(i)} the $\eta\!\mapsto\!d_{\mathrm{spec}}$ map is monotone
in $\eta$ across the entire sweep, including across detected events;
\textbf{(ii)} events concentrate near the extremes of the
$\eta$-range --- at large positive $\eta$ the spectrum is driven
toward uniformity and adjacent eigenvalues collide, while at large
negative $\eta$ the spectrum compresses and modes drop below the
support threshold; \textbf{(iii)} between events, the path is
visibly smooth, consistent with the local $\mathcal{C}^1$ guarantee
of Theorem~\ref{thm:frontier-regularity}.

Empirically, structural events do not invalidate the capacity
frontier: $d_{\mathrm{spec}}(\eta)$ remains monotone and continuous
in $\eta$ on every dataset and rank cap we examined. The events
appear as visible inflections rather than as discontinuities,
suggesting that on these graphs the local $\mathcal{C}^1$ branches
connect into a globally continuous (if not globally $\mathcal{C}^1$)
profile. We report $d_{\mathrm{spec}}(K^\star)$ as the operating
coordinate while tracking events as a diagnostic.

Comparing the two rank caps shows that the event landscape depends
on whether $r$ binds the representation. On rank-cap-binding graphs
(\textsl{ca-grqc}, \textsl{bio-grid-worm}), increasing $r$ from
$128$ to $256$ extends the $d_{\mathrm{spec}}$ range and shifts
events upward; on saturated-regime graphs (\textsl{inf-power},
\textsl{socfb-American75}), the path stays well below the cap at
both $r$ and the event landscape is sparser. This is consistent
with the regime distinction in
the main paper.

 \begin{figure}[t]
  \centering
  \begin{subfigure}[t]{0.245\linewidth}
    \includegraphics[width=\linewidth]{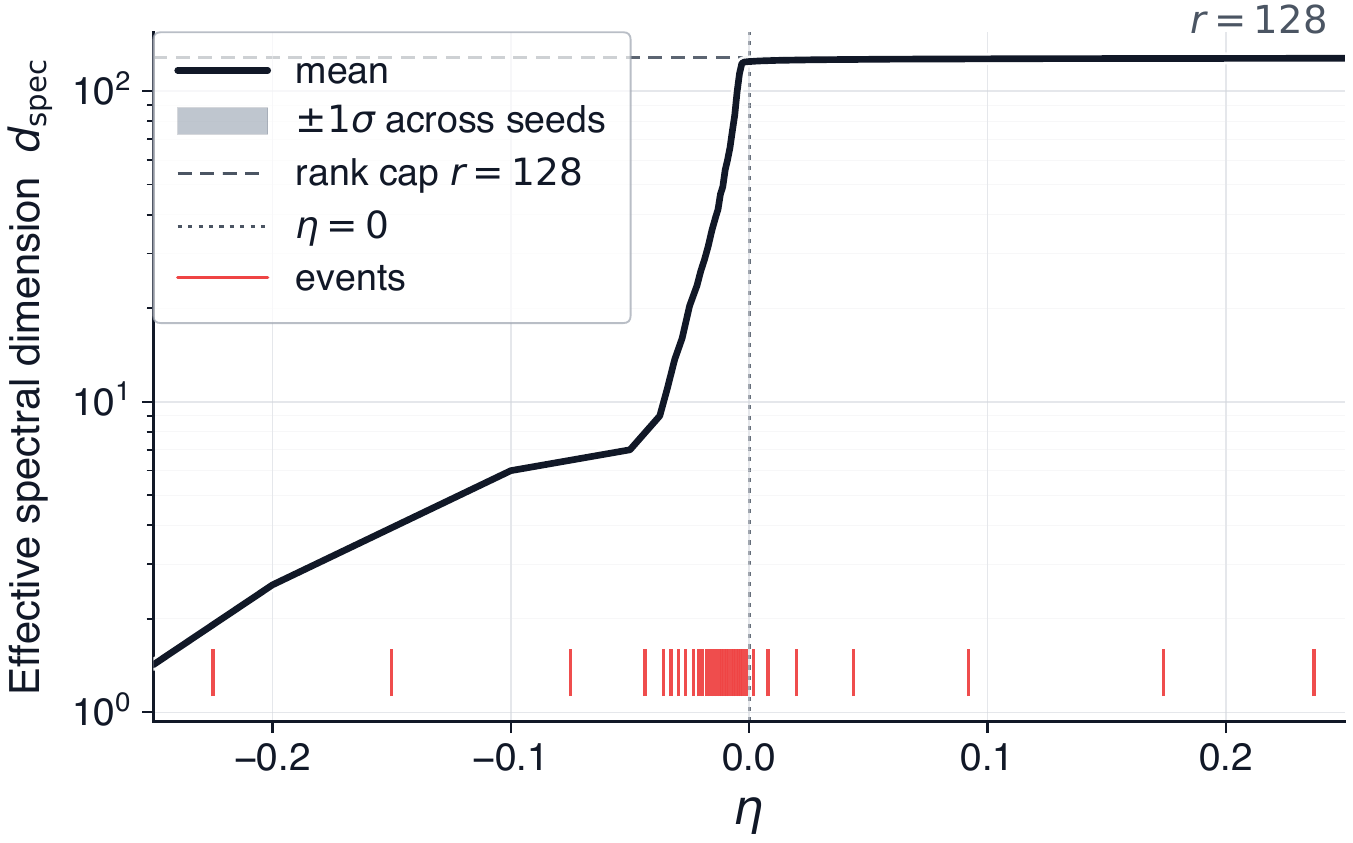}
    \caption*{\textsl{ca-grqc}}
  \end{subfigure}\hfill
  \begin{subfigure}[t]{0.245\linewidth}
    \includegraphics[width=\linewidth]{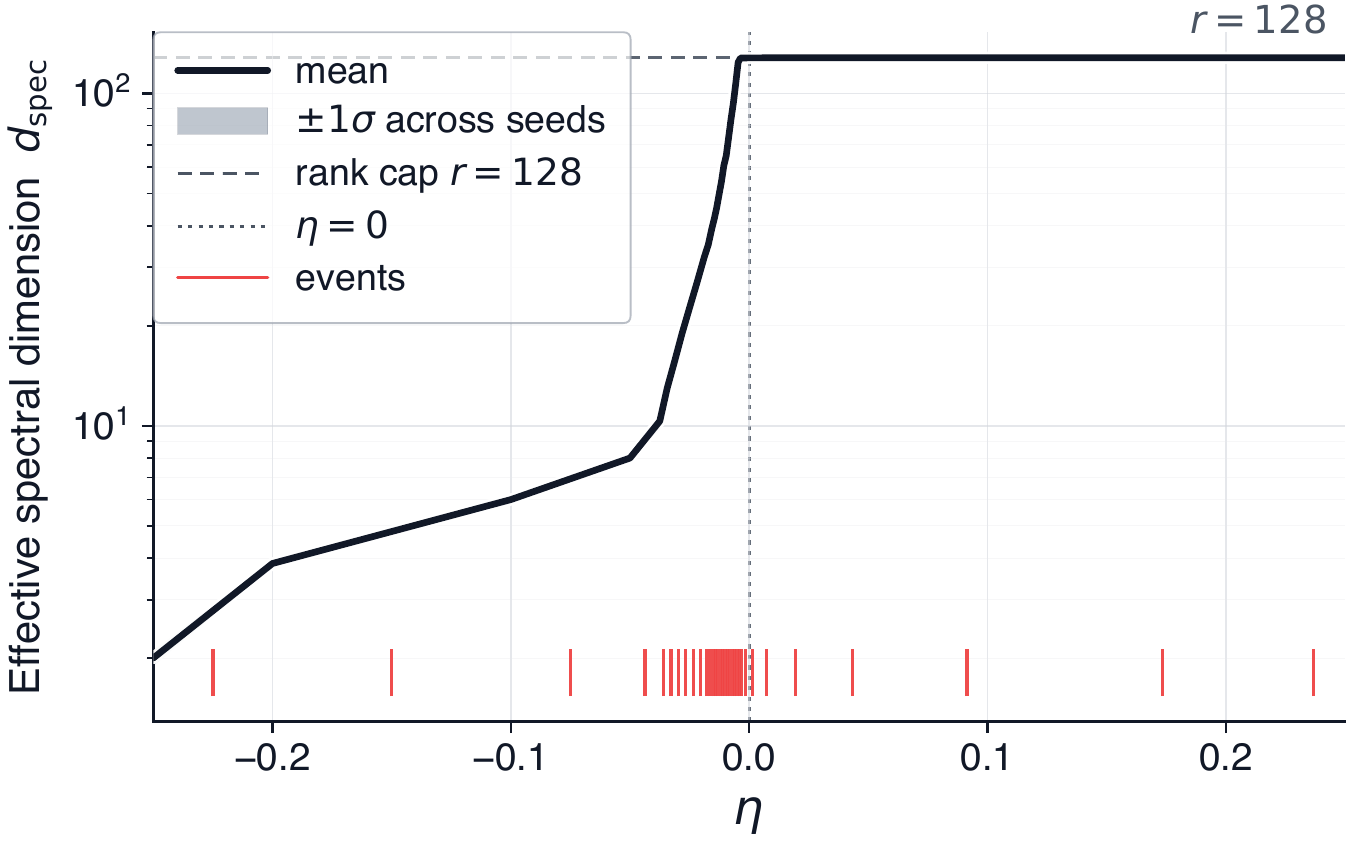}
    \caption*{\textsl{inf-power}}
  \end{subfigure}\hfill
  \begin{subfigure}[t]{0.245\linewidth}
    \includegraphics[width=\linewidth]{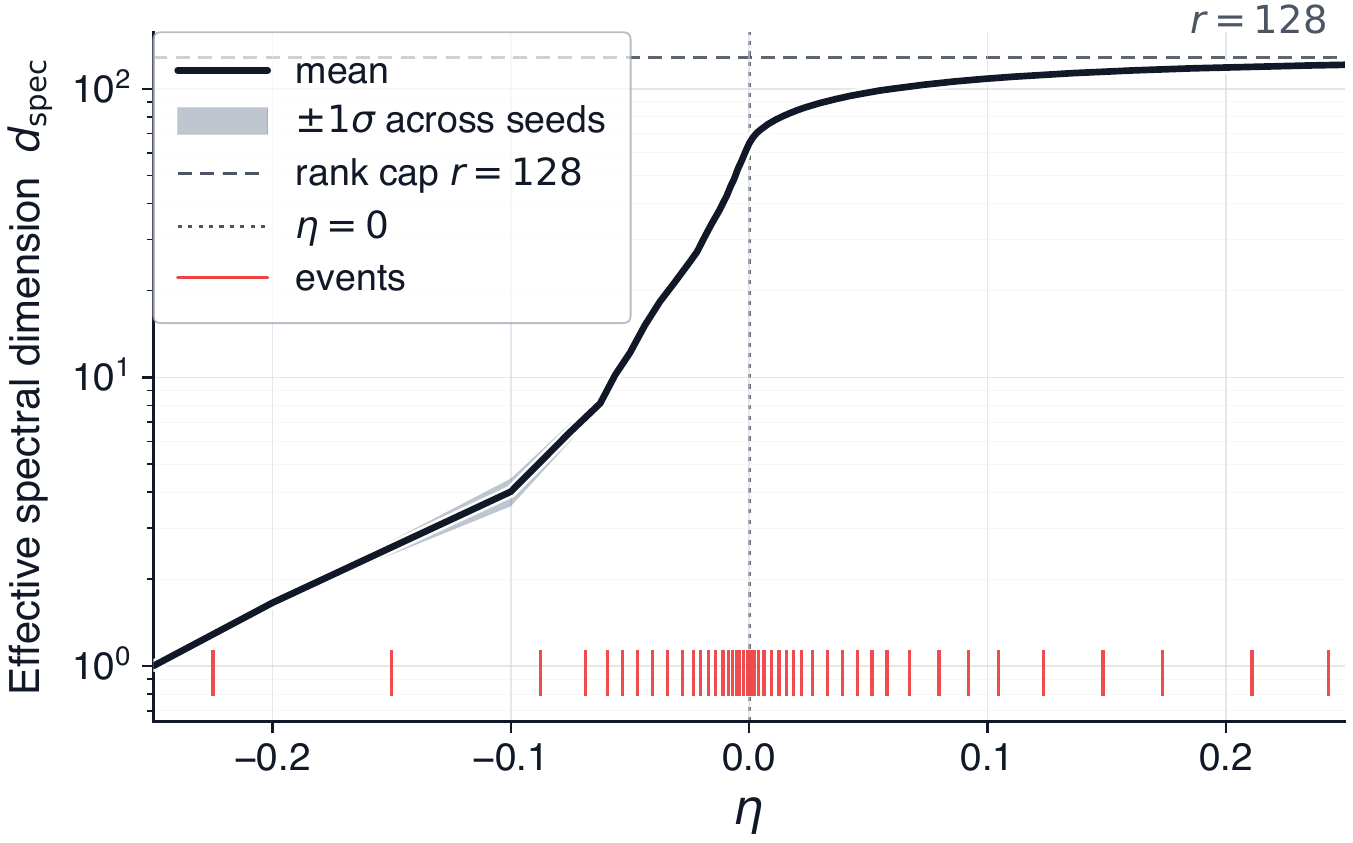}
    \caption*{\textsl{socfb-American75}}
  \end{subfigure}\hfill
  \begin{subfigure}[t]{0.245\linewidth}
    \includegraphics[width=\linewidth]{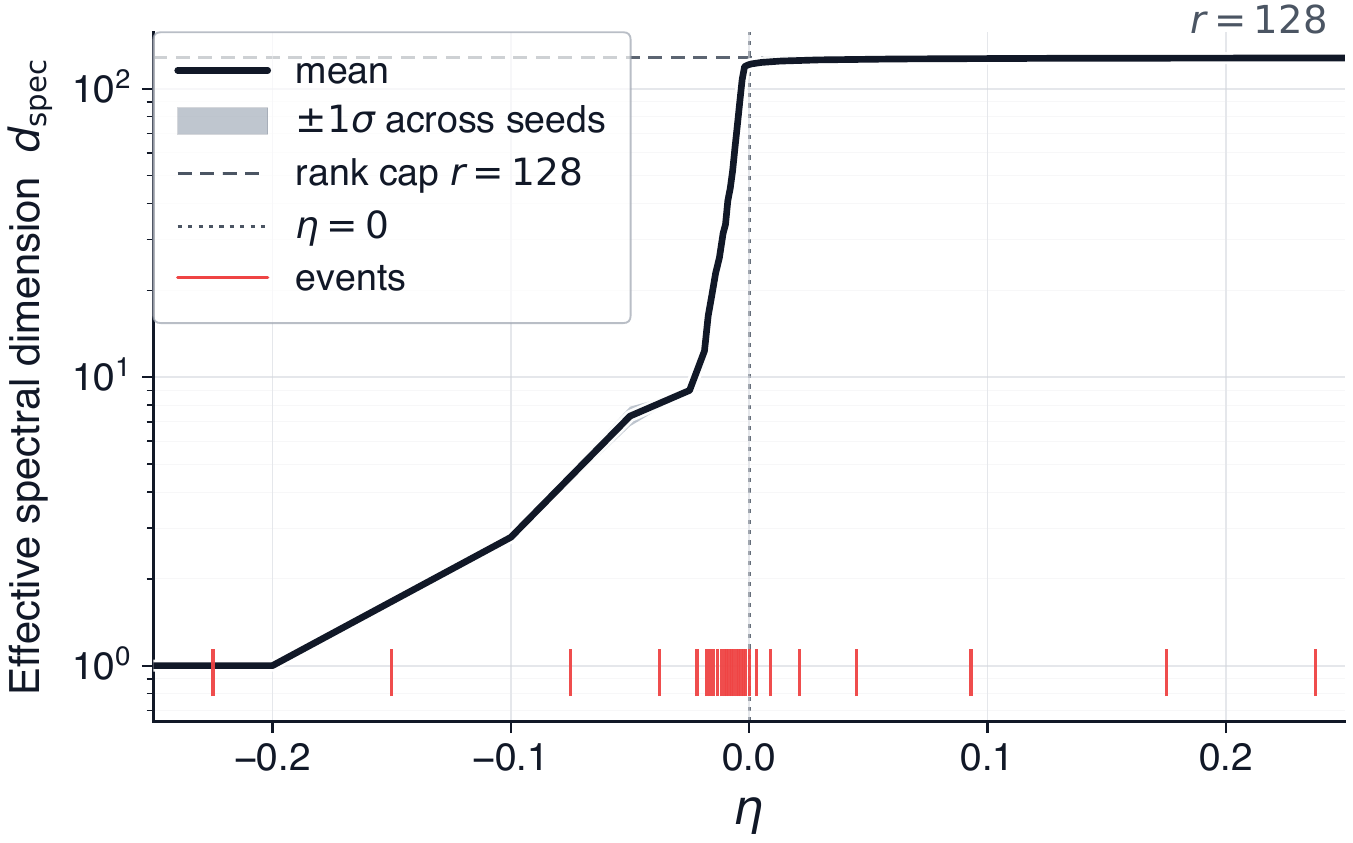}
    \caption*{\textsl{bio-grid-worm}}
  \end{subfigure}
  \vspace{2pt}
  \begin{subfigure}[t]{0.245\linewidth}
    \includegraphics[width=\linewidth]{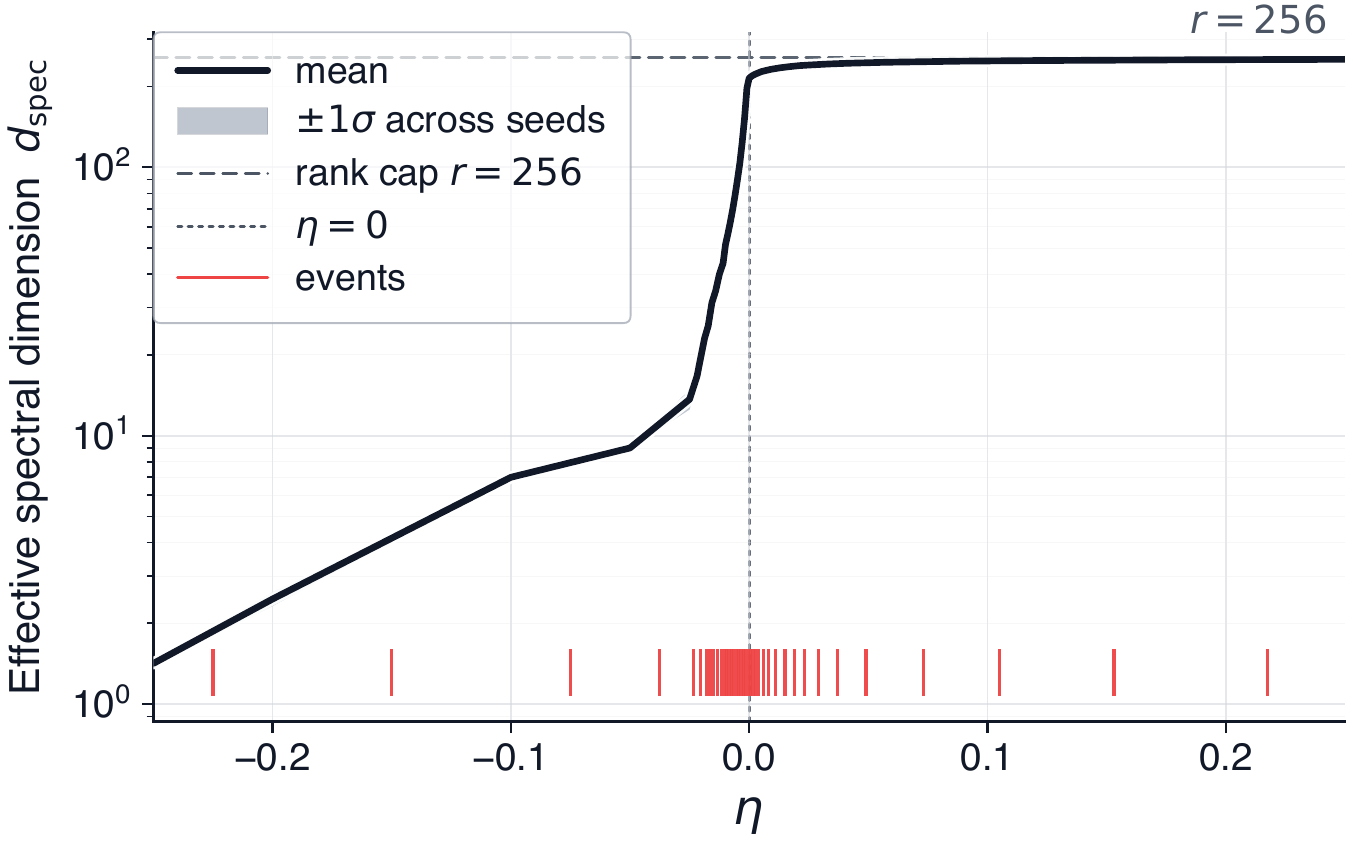}
  \end{subfigure}\hfill
  \begin{subfigure}[t]{0.245\linewidth}
    \includegraphics[width=\linewidth]{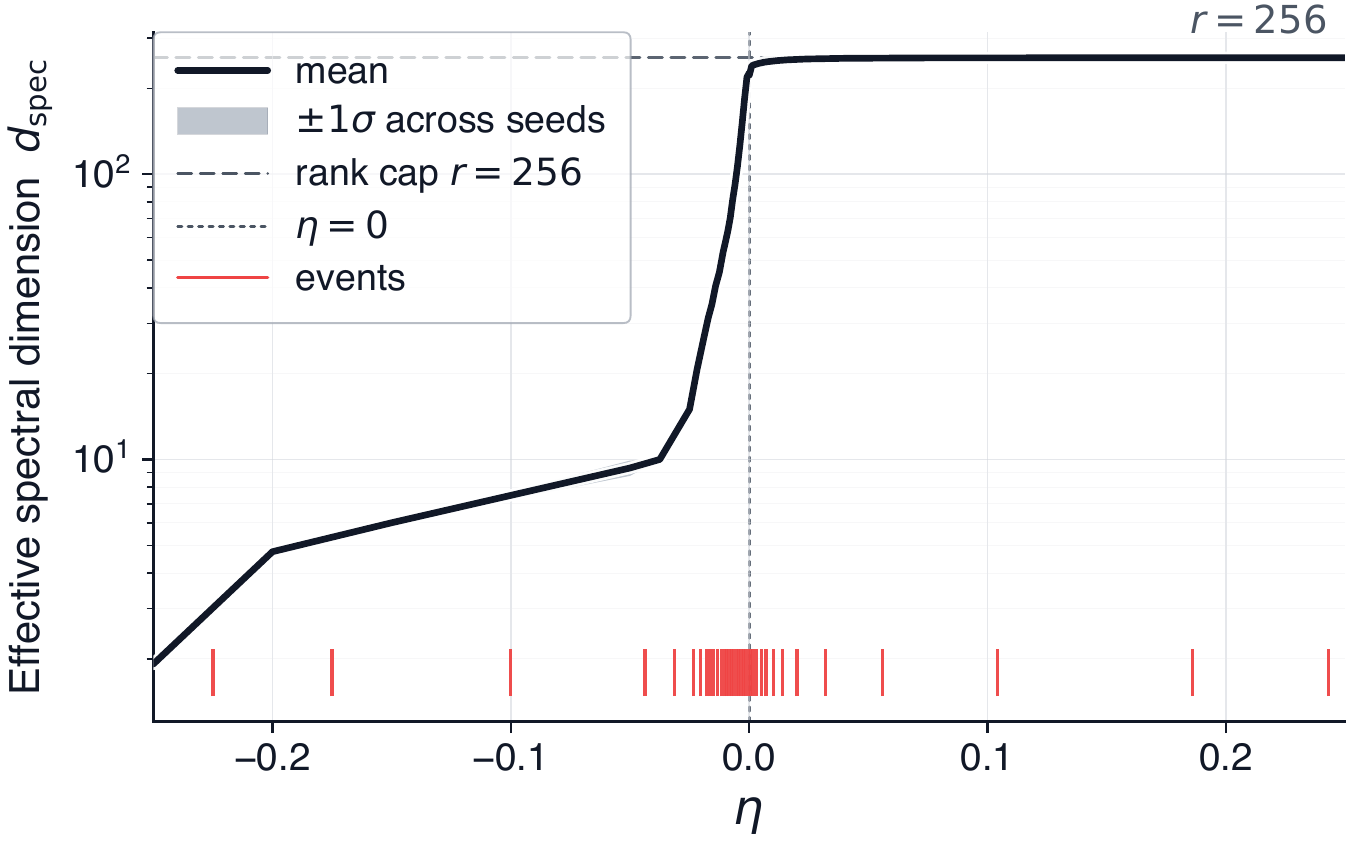}
  \end{subfigure}\hfill
  \begin{subfigure}[t]{0.245\linewidth}
    \includegraphics[width=\linewidth]{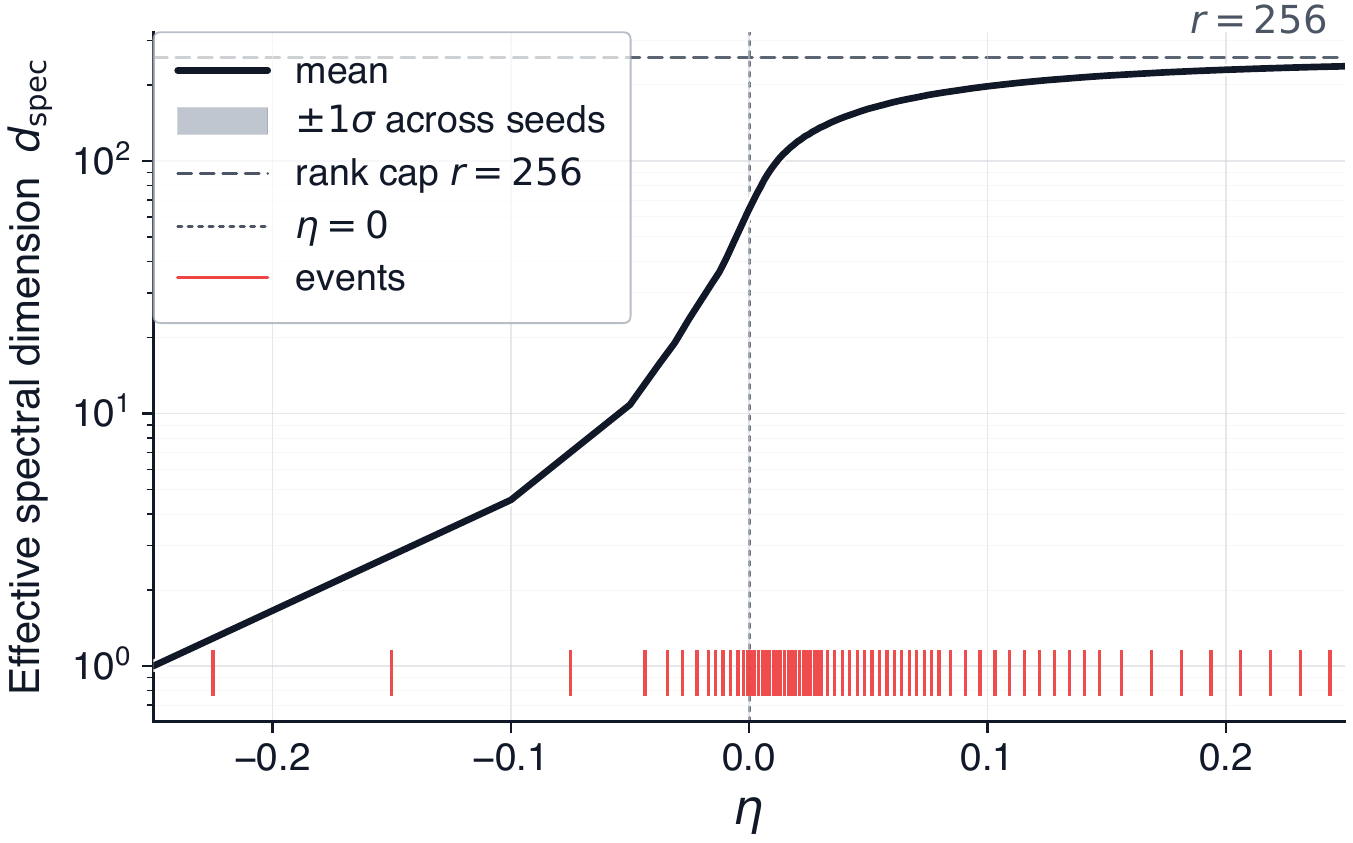}
  \end{subfigure}\hfill
  \begin{subfigure}[t]{0.245\linewidth}
    \includegraphics[width=\linewidth]{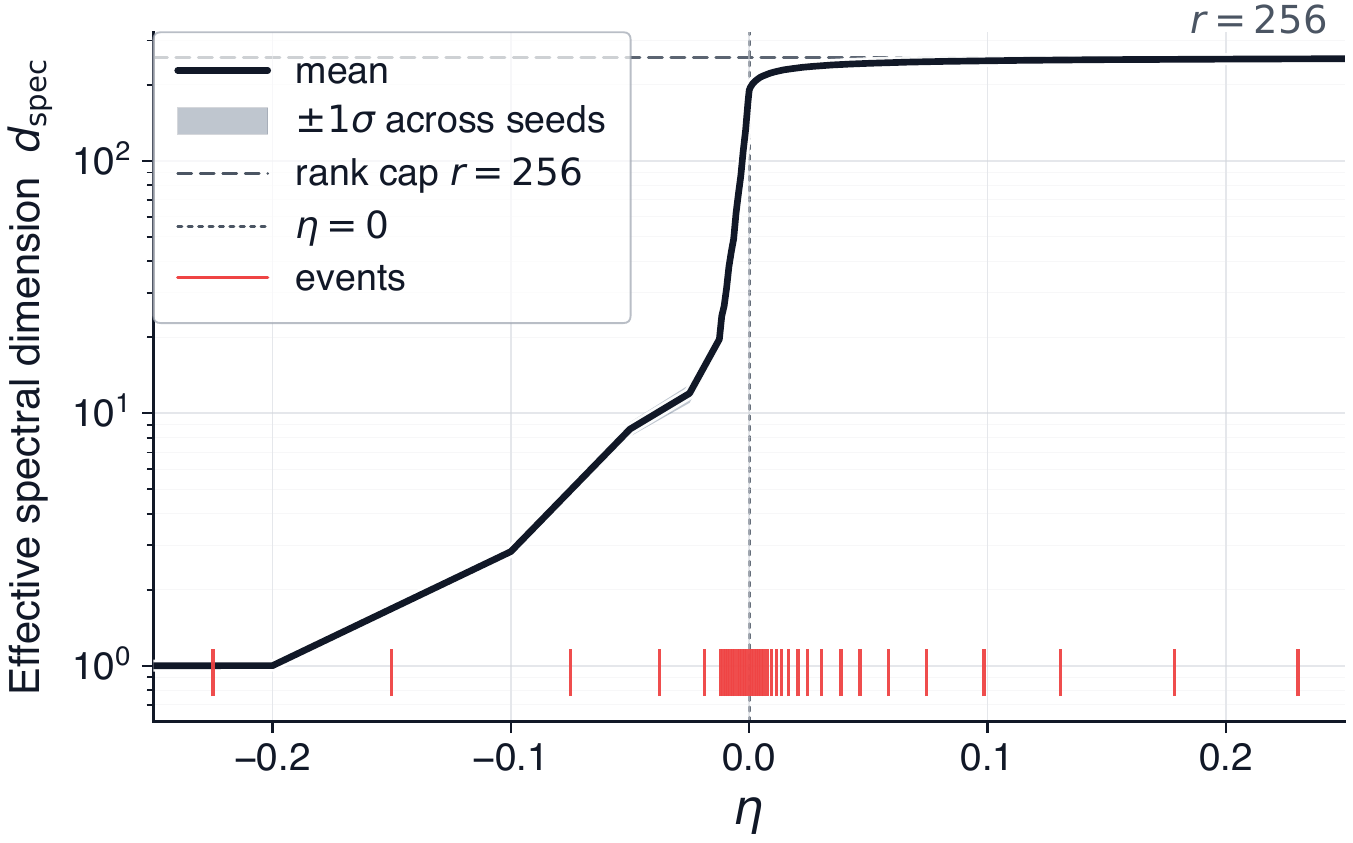}
  \end{subfigure}
\caption{\textbf{Structural events along the $\eta$ path at two
rank caps}, on four representative datasets spanning the four
structural families. \emph{Top row:} $r{=}128$. \emph{Bottom row:}
$r{=}256$. Columns: citation (\textsl{ca-grqc}), infrastructure
(\textsl{inf-power}), social (\textsl{socfb-American75}),
biological (\textsl{bio-grid-worm}). Each panel shows
$d_{\mathrm{spec}}(\eta)$ vs $\eta$ on log scale, mean $\pm 1\sigma$
across three seeds. Vertical line: $\eta{=}0$. Event markers
indicate probes where a structural event was detected along the
fitted branch (eigenvalue entering or leaving the active support,
or adjacent positive eigenvalues approaching equality; see
text). Most $\eta$-paths are smooth across $\eta\in[-0.25, 0.25]$
with events concentrating near the extremes; the realized-dimension
profile remains continuous and monotone in $\eta$ on every dataset
even where local $\mathcal{C}^1$ regularity is interrupted by an
event.}
  \label{fig:frontier-allfour}
\end{figure}

\section{Experimental setup}
\label{sec:supp-experimental}

We extensively evaluate \textsc{Spectra} against prominent baseline graph
representation learning methods, including unconstrained Euclidean
embeddings, matrix factorization, mixed-membership models, and
latent-distance simplex-based representations, across networks of
varying sizes and structures. We ran all methods using publicly
available implementations. When GPU
support was available, experiments were executed on an NVIDIA A100 GPU;
otherwise, we used an Apple M2 machine with 8\,GB RAM.

\paragraph{Optimization.}
For \textsc{Spectra} we optimize the $\eta$-augmented objective $F_\eta$
(Eq.~\eqref{eq:map-objective}) with Adam~\citep{kingma2014adam} at
learning rate $10^{-2}$ for $6{,}000$ iterations unless stated
otherwise. At each step we sample five non-edges per observed edge
uniformly from the complement of the training graph, and combine the
positive and negative log-likelihood terms with equal weight.

\paragraph{Regularization.}
The spectral regularizer $R$ in $F_\eta$ is weighted by a fixed
coefficient $\lambda = 10^{-4}$ in all experiments reported in the main
text and supplement. We found results to be robust to this choice:
varying $\lambda$ over several orders of magnitude around $10^{-4}$
produced negligible changes in test AUC and in the achieved
$d_{\mathrm{spec}}$, so no per-dataset tuning was performed.

\paragraph{Baselines.}
For embedding baselines, dyadic edge features are constructed using the
standard binary operators (average, Hadamard, weighted-$L_1$,
weighted-$L_2$), followed by an $L_2$-regularized logistic regression
classifier; in each cell we report the operator that maximizes the
baseline's test AUC, giving each baseline its strongest possible
configuration. For likelihood-based latent graph models, including
ours, we evaluate links directly from the learned log-odds, with no
auxiliary classifier and no operator selection.

\paragraph{Test-blind protocol.}
\textsc{Spectra} is reported in a strictly test-blind regime: no
hyperparameter, $\eta$ value, $d_{\mathrm{spec}}$ target, or model
checkpoint is selected with reference to any test quantity, on any
dataset, at any stage.

\section{Frontier protocol}
\label{app:frontier-protocol}

We test whether predictive performance is organized by realized
spectral capacity $d_{\mathrm{spec}}(\eta)$ rather than by the nominal
rank cap $r$. If this is the case, curves obtained at different $r$
should collapse onto a common frontier when plotted against
$d_{\mathrm{spec}}$, and any residual dependence on $r$ should appear
only once the rank cap binds the achievable capacity.

For each dataset and three seeds, we sweep the spectral-prior weight
$\eta$ at three rank caps $r\in\{64,128,256\}$. Each probe is trained
from scratch (cold start, no warm-starting across $\eta$ values) so
that the realized capacity at a given $\eta$ reflects optimization
under that prior alone, rather than path-dependence from a neighboring
configuration.

\paragraph{Adaptive grid.}
The $\eta$ values are chosen \emph{adaptively}: starting from the most
negative $\eta$, the next probe is selected by halving or doubling the
current $\eta$-step according to whether $d_{\mathrm{spec}}$ moved by
more than an upper threshold or less than a lower threshold between
consecutive probes. This refines the grid where $d_{\mathrm{spec}}(\eta)$
changes rapidly while spending few probes on slowly-varying segments,
so structural transitions are resolved without paying for a uniformly
fine sweep.

\clearpage

\end{document}